\newtheorem{theorem}{Theorem}%[section] %(If you want theorem numbered
\newtheorem{lemma}{Lemma}%[section] %%    with section number.
\newtheorem{prop}{Proposition}%[section]
\newtheorem{definition}{Definition}%[section]
\newtheorem{assum}{Assumption}
\newcommand{\R}{\mathbb{R}}
\newcommand{\e}{\begin{equation}}
\newcommand{\ee}{\end{equation}}
\newcommand{\en}{\begin{equation*}}
\newcommand{\een}{\end{equation*}}
\newcommand{\eqn}{\begin{eqnarray}}
\newcommand{\eeqn}{\end{eqnarray}}
\newcommand{\bmat}{\begin{bmatrix}}
\newcommand{\emat}{\end{bmatrix}}
\DeclareMathAlphabet\mathbfcal{OMS}{cmsy}{b}{n}
\newcommand{\E}{\operatorname{\mathbb{E}}}
\newcommand{\vct}[1]{\boldsymbol{#1}}
\newcommand{\mtx}[1]{\boldsymbol{#1}}
\newcommand{\rank}{\operatorname{rank}}
\DeclareMathOperator*{\argmin}{\text{arg~min}}
\newcommand{\vx}{\vct{x}}
\newcommand{\mN}{\mtx{N}}
\newcommand{\mO}{\mtx{O}}
\newlength{\imgwidth}
\newcommand{\twoCol}[2]{\ifthenelse{\boolean{twoColVersion}} {#1} {#2} }
\long\def\comment#1{}
\def\E{\mathop{\rm E\,\!}\nolimits}
\newcommand{\ba}{\boldsymbol{a}}
\newcommand{\bx}{\boldsymbol{x}}
\newcommand{\bI}{\boldsymbol{I}}
\long\def\red#1{\bgroup\color{red}#1\egroup}
\definecolor{mich-blue}{HTML}{0027CC}
\definecolor{mich-blue-high}{HTML}{0027CC}
\definecolor{red-high}{HTML}{CA2020}
\definecolor{green-high}{HTML}{20A520}
\definecolor{mich-maize}{HTML}{FFCB05}
\definecolor{law-stone}{HTML}{655A52}
\definecolor{burton-beige}{HTML}{9B9A9D}
\definecolor{arch-ivy}{HTML}{7E732F}
 \colorlet{color1}{gray!15}
\newenvironment{proof}{\par\noindent{\bf Proof\ }}{\hfill\BlackBox\\[2mm]}
\newcommand{\BlackBox}{\rule{1.5ex}{1.5ex}}  % end of proof
\def\MoLRG{\texttt{MoLRG}}
\def\MoG{\texttt{MoG}}
\newcommand{\rebuttal}[1]{\textcolor{black}{ #1}}
\title{Breaking the Curse of Dimensionality: Diffusion Models Efficiently Learn Low-Dimensional Distributions}
\newcommand{\jointfirst}{\textsuperscript{\dag}}
\newcommand{\corrauth}{\textsuperscript{\ddag}}
\jointfirst\textsuperscript{1},
\jointfirst\textsuperscript{2},
\corrauth\textsuperscript{2}
\affiliation{
  \textsuperscript{1}University of Macau \quad $\cdot$ 
  \quad  \textsuperscript{2}University of Michigan \quad $\cdot$
  \quad \textsuperscript{3}University of California, Berkeley
}
\keywords{Diffusion Models, Mixture of Low-rank Gaussians, Subspace Clustering, Curse of Dimensionality}
\date{\today}
\begin{document}
\vspace*{-1in}
\makeDeepthinkHeader

% ------------------------------------------------------------
% Optional: teaser figure
% ------------------------------------------------------------
% \begin{figure}[h]
%   \centering
%   \IfFileExists{Deepthink_landscape_do_not_delete_compressed.png}{%
%     \includegraphics[width=0.9\linewidth]{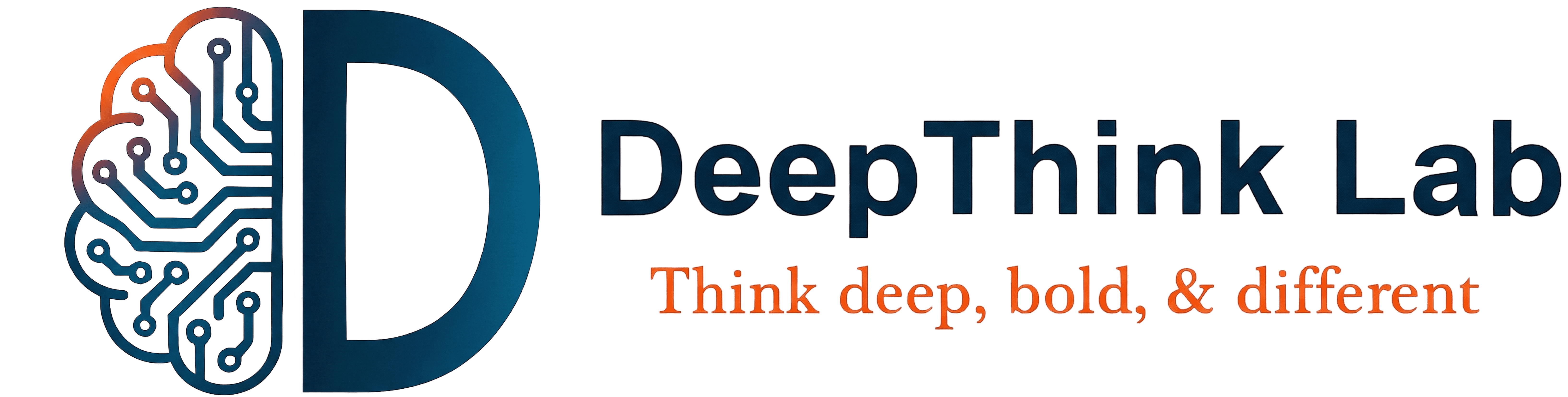}
%   }{%
%     \fbox{\parbox[c][3.2cm][c]{0.9\linewidth}{\centering Teaser image placeholder}}
%   }
%   \caption{Teaser caption (replace me).}
%   \label{fig:teaser}
% \end{figure}

\vspace{-0.1in}
\begin{figure}[h]
\begin{center}
    \includegraphics[width = 1\linewidth]{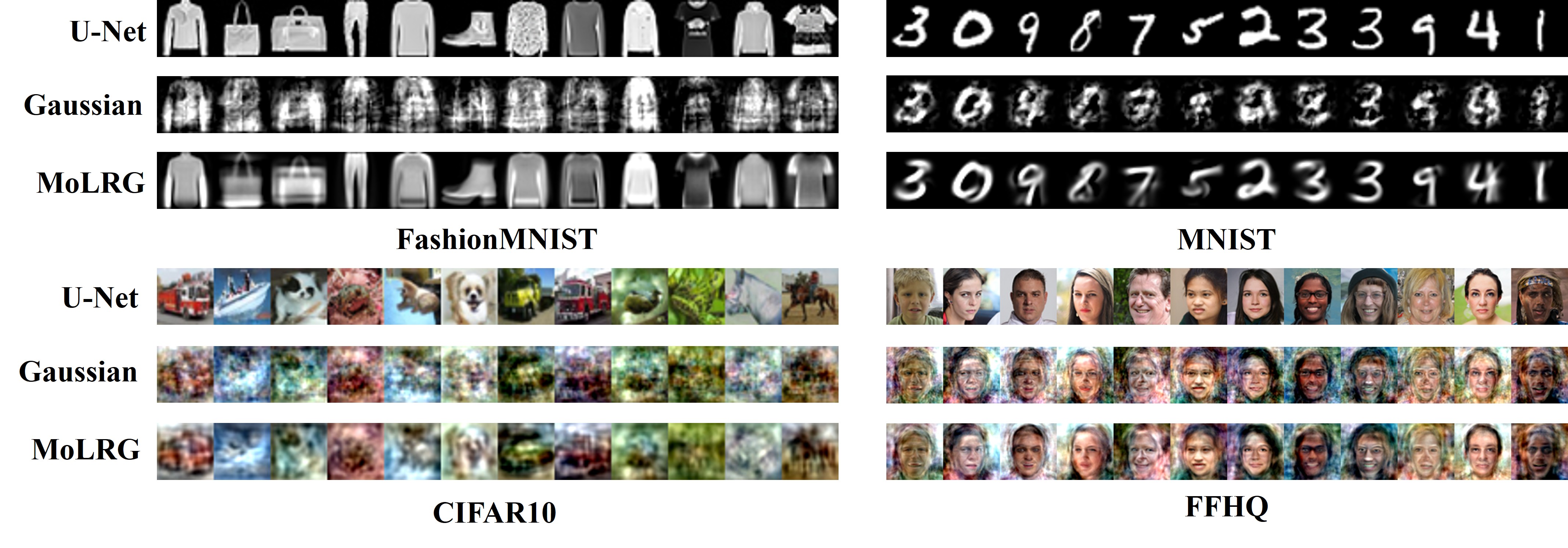}\vspace{-0.1in}
    \caption{\textbf{Comparison of images generated from the Gaussian, \MoLRG, and the distribution learned by diffusion models across different datasets.} 
    Each row displays images generated from different distributions using the reverse-time ODE sampler, including the Gaussian, \MoLRG, and the distribution learned by U-Net. 
    The columns represent images generated from the same initial noise. The results are shown for four datasets: FashionMNIST (top left), MNIST (top right), CIFAR-10 (bottom left), and FFHQ (bottom right).
    \label{fig:MoLRG_data_model_verfication}}%\qq{I feel we should remove the closest point in the training data, the 2nd row}  
\end{center}
\end{figure}

\newpage
\tableofcontents
\newpage

\section{Introduction}\label{sec:intro}

Generative modeling is a fundamental task in deep learning that seeks to learn the underlying data distribution from training samples to generate new and realistic data. Among recent advances, diffusion models have emerged as a powerful class of generative models, achieving remarkable performance across a wide range of domains, including image generation \cite{ho2020denoising,wang2023dr2}, video synthesis \cite{bar2024lumiere,xing2024survey}, speech and audio generation \cite{kong2020hifi,kongdiffwave2021}, and solving inverse problems \cite{fabian2024adapt,chung2023diffusion}. In general, diffusion models learn a data distribution from training samples through a process that imitates the nonequilibrium thermodynamic diffusion process \cite{ho2020denoising,sohl2015deep,song2020score}. Specifically, a diffusion model operates in two stages: (i) a forward process, in which Gaussian noise is gradually added to the training data over a sequence of time steps; and (ii) a reverse process, in which the noise is progressively removed by a neural network trained to approximate the score function, i.e., the gradient of the log probability density function (pdf) of the data, at each time step.

Despite the great empirical success of diffusion models and recent advances in understanding their sampling convergence \cite{bentonnearly,lee2022convergence,li2025convergence,li2024sharp,li2025unifed}, distribution approximation \cite{chen2023score,oko2023diffusion,wang2024the}, memorization \cite{gu2023memorization,somepalli2023diffusion,wen2023detecting}, and generalization \cite{han2024neural,kadkhodaie2023generalization,zhang2024emergence}, the mechanisms underlying their performance remain poorly understood. This is primarily due to the black-box nature of neural networks and the inaccessibility of real-world data distributions. In particular, a fundamental question arises: Can diffusion models truly learn the underlying data distribution? If so, how many samples are required to achieve this? Recent theoretical studies \cite{oko2023diffusion,wibisono2024optimal} have shown that learning an arbitrary probability distribution using diffusion models inevitably suffers from the curse of dimensionality. Specifically, if the underlying density belongs to a generic class of probability distributions, obtaining an $\epsilon$-accurate estimate of the corresponding score function requires the number of training 
samples that scales as $O(\epsilon^{-n})$, where $n$ is the ambient data dimension. However, recent empirical studies \cite{zhang2024emergence,kadkhodaie2023generalization} have shown that diffusion models can effectively learn image data distributions and generate novel and semantically meaningful samples distinct from the training data,  even when trained on far fewer samples than those suggested by the existing theoretical bounds. As such, the gap between theory and practice raises a key question: 
\begin{center}\em
    When and why can diffusion models learn data distributions \\ without suffering from the curse of dimensionality? 
\end{center}

\subsection{Our Contributions}\label{subsec:cont}

In this work, we address the above question by investigating how diffusion models learn data distributions with intrinsic low-dimensional structures. Unlike previous studies \cite{oko2023diffusion, wibisono2024optimal}, which considered arbitrary distributions, our study focuses on low-dimensional distributions, motivated by the observation that real-world image data often lie on a union of low-dimensional manifolds despite their high ambient dimensions \cite{brownverifying, kamkari2024geometric,loaiza-ganem2024deep}. These structures arise from underlying symmetries, repetitive patterns, and local regularities in natural images, which reduce the degrees of freedom in the data \cite{gong2019intrinsic, pope2020intrinsic}. To effectively capture the low-dimensional structure of real-world image data while offering analytical tractability, we focus on a mixture of low-rank Gaussians (\MoLRG; see \Cref{def:MoG}). Notably, our focus is further supported by empirical evidence in \Cref{fig:MoLRG_data_model_verfication}, which demonstrates that samples generated via the diffusion reverse sampling process—using our theoretically constructed model—closely resemble those produced by U-Net \cite{ronneberger2015u} trained on the same dataset and initialized with the same noise. 

Theoretically, we show that diffusion models can learn the \MoLRG\;distribution, provided that the minimum number of training samples scales linearly with the intrinsic dimension of the data, thereby overcoming the curse of dimensionality. Our result is established by demonstrating the equivalence between the training loss of diffusion models and the canonical subspace clustering problem \cite{vidal2011subspace, wang2022convergence} (see \Cref{thm:3}) under an appropriate parameterization for the denoising autoencoder. {\color{black} Suppose that each component in the \MoLRG\;distribution has zero mean and covariance matrix $\bm U_k\bm U_k^T$, where each low-rank matrix $\bm U_k$ has orthonormal columns and the subspaces spanned by $\{\bm U_k\}$ are mutually orthogonal. Notably, these assumptions imply that the data are distributed around a union of low-dimensional linear subspaces, with isotropic variation within each subspace and no cross-subspace correlations. Here, the orthogonality assumption serves as an idealized yet analytically tractable abstraction that captures well-separated components in high-dimensional spaces. Under these assumptions,} our theory demonstrates a \emph{phase transition} in the ability of diffusion models to learn the underlying distributions, which occurs when the number of training samples exceeds the intrinsic dimensionality of the data-generating subspaces (see \Cref{thm:4}). Moreover, our theoretical analysis offers valuable practical insights, as highlighted below. 
\begin{itemize}[leftmargin=*]
    \item \emph{The phase transition of generalization on image datasets.} As shown in \Cref{sec:appli_1}, when training diffusion models on real-world image datasets, we observe a similar phase transition from failure to success in generalization, where the model begins to generate new and sensible images distinct from the training data once the number of training samples exceeds a threshold that scales linearly with the intrinsic dimension of the data. Our study of \MoLRG~offers key insights into understanding this phenomenon.
 %   a critical point where the number of training samples \pw{scales linearly with the intrinsic dimension}. 
    \item \emph{Correspondence between subspace bases and semantic task vectors.\footnote{A semantic task vector is a direction in the latent or intermediate feature space such that traversing along it causes a controlled and interpretable change in the output.}} We find that the basis vectors of the subspaces identified through our theoretical analysis align with semantically meaningful directions, that is, task vectors, in diffusion models pretrained on real-world image datasets. These semantic task vectors enable control over attributes such as gender, hairstyle, and color in the generated images (see \Cref{fig:meta_1}). This insight has inspired new training-free image editing methods on pretrained diffusion models \cite{Chen2024}. 
\end{itemize}

Our study of distribution learning is closely related to recent studies on the generalization of diffusion models. It is well understood that when generative models successfully learn the true underlying data distribution, they exhibit strong generalization capabilities, enabling them to generate new samples that differ from the training data \cite{arora2017generalization,li2024generalization,kadkhodaie2023generalization}. 
% In classical machine learning theory, accurately learning the underlying distribution typically guarantees good generalization \cite{arora2017generalization}. This principle also applies to diffusion models: when they successfully learn the true underlying data distribution, they can generate new samples that are distinct from the training data, indicating a strong generalization capability \pw{edit}. Conversely,
Moreover, recent empirical studies \cite{zhang2024emergence,kadkhodaie2023generalization} have shown that strong generalization in diffusion models often corresponds to an accurate approximation of the underlying distribution, as evidenced by reproducibility. Specifically, these studies observed that different diffusion models can reproduce each other’s outputs while generating new samples distinct from the training data, even when trained with different architectures, loss functions, and non-overlapping subsets of the training data.  Motivated by these discussions, this work considers generalization in diffusion models as their ability to accurately capture the underlying data distribution. In this sense, our work also contributes to the theoretical understanding of generalization by characterizing the sample complexity required for diffusion models to learn the underlying distribution. 

% \subsection{Relationship with Prior Arts} \label{subsec:prior-work}

% Note that we defer the discussion on the relationship between our results and the related works to \Cref{sec:prior-work}. 

\subsection{Notation and Organization}

\paragraph{Notation.} We write matrices in bold capital letters, such as $\bm A$, vectors in bold lower-case letters, such as $\bm a$, and scalars in plain letters, such as $a$. Given a matrix $\bm A$, we use $\|\bm A\|$ to denote its largest singular value (i.e., spectral norm), $\sigma_{i}(\bm A)$ its $i$-th largest singular value, $a_{ij}$ its $(i,j)$-th entry, $\mathrm{rank}(\bm A)$ its rank, and $\|\bm A\|_F$ its Frobenius norm. Given a vector $\bm a$, we use $\|\bm a\|$ to denote its Euclidean norm and $a_i$ to denote its $i$-th entry. Let $\mathcal{O}^{n\times d}$ denote the set of all $n\times d$ {\color{black} matrices that have orthogonal columns}.  We simply write the score function $\nabla_{\bm x} \log p(\bm x)$ of a distribution with pdf $p(\bm x)$ as $\nabla \log p(\bm x)$. We denote by $\mathcal{N}(\bm \mu, \bm \Sigma)$  a multivariate Gaussian distribution with mean $\bm \mu$ and covariance $\bm \Sigma \succeq \bm 0$. 

\paragraph{Organization.} In \Cref{sec:setup}, we introduce the preliminaries of diffusion models and state our assumptions regarding the data and model. In \Cref{sec:results}, we present the main results of this study. In \Cref{sec:prior-work}, we discuss how our results relate to the existing literature. In \Cref{sec:experiments}, we conduct numerical experiments to support our theory and demonstrate its practical implications. Finally, in \Cref{sec:conclusion}, we summarize our work and discuss potential directions for future research. All proofs are presented in the appendices.

\section{Problem Setup}\label{sec:setup}

In this section, we introduce preliminaries on diffusion models and state our assumptions on the data and model. We consider a training dataset $\{\bm x^{(i)}\}_{i=1}^N \subseteq \mathbb{R}^n$, where the data points are independently and identically distributed (\emph{i.i.d.}) samples from the underlying data distribution $p_{\rm data}(\bm x)$; that is, $\bm x^{(i)} \overset{\emph{i.i.d.}}{\sim} p_{\rm data}(\bm x)$. 

\subsection{Preliminaries on Score-Based Diffusion Models}\label{subsec:prelim} 

\paragraph{Forward and reverse processes of diffusion models.}
In general, diffusion models aim to learn a data distribution and generate new samples through forward and reverse processes indexed by a continuous time variable $t \in [0,1]$. Specifically, the forward process progressively injects noise into the data, which can be described by the following stochastic differential equation (SDE):  
\begin{align}\label{eq:forw}
    \mathrm{d}\bm x_t =  f(t) \bm x_t \mathrm{d}t + g(t) \mathrm{d} \bm{w}_t,
\end{align}
where $\bm x_0 \sim p_{\rm data}(\bm x)$, scalar functions $f(t), g(t): [0,1] \to \R$ denote the drift and diffusion coefficients, respectively, and $\{\bm{w}_t\}_{t \in [0,1]} $ is the standard Wiener process. 
For ease of exposition, let $p_t(\bm x)$ denote the \emph{pdf} of $\bm x_t$ and $p_{t}(\bm x_t\mid \bm x_0)$ be the transition kernel from $\bm x_0$ to $\bm x_t$.\footnote{Note that $p_0 := p_{\rm data}$.}  According to \eqref{eq:forw}, one can verify that 
\begin{align}\label{eq:trans}
   p_{t}(\bm x_t\mid \bm x_0) = \mathcal{N}\left(\bm x_t; s_t\bm x_0, s^2_t\sigma_t^2\bm I_n\right), 
\end{align}
where $s_t := \exp\left(\int_0^t f(\xi)\mathrm{d}\xi\right)$ and $\sigma_t := \sqrt{\int_{0}^{t} {g^2(\xi)}/{s^2({\xi})}\mathrm{d}\xi}.$\footnote{With a slight abuse of notation, we denote $s_t:=s(t)$ and $\sigma_t:=\sigma(t).$} 

The reverse process gradually removes the noise from $\bm x_{1}$ using the following probability-flow ordinary differential equation (ODE) backward in time:
\begin{align}\label{eq:reve}
   \mathrm{d}\bm{x}_t = \left(  f(t)\bm{x}_t - \frac{1}{2}g^2(t) \nabla \log p_t(\bm{x}_t)\right) \mathrm{d}t. 
\end{align}
If $\bm x_1 \sim p_1$ and the score function $\nabla \log p_t(\bm{x}_t)$ is known for all $t\in [0,1]$, then this reverse-time ODE has the same marginal distribution $p_t$ as the forward process at each time $t\in[0,1]$ \cite{song2021scorebased}.

\paragraph{Training loss of diffusion models.} Unfortunately, the score function $\nabla \log p_t$ at each $t \in [0,1]$ is typically unknown, as the marginal distribution $p_t$ is induced by the unknown data distribution $p_{\rm data}$. To enable data generation via the reverse-time ODE in \eqref{eq:reve}, we train a neural network to approximate the score function from the training data. In addition, Tweedie's formula \cite{efron2011tweedie} relates the score function $\nabla \log p_t(\bm x_t)$ to the posterior mean $\mathbb{E}\left[ \bm x_0 \mid \bm x_t \right]$ as follows: 
\begin{align} \label{eq:Tweedie}
    s_t\mathbb{E}\left[ \bm x_0 \mid \bm x_t \right] = \bm x_t + s_t^2\sigma_t^2 \nabla \log p_t(\bm x_t),\ \forall t \in (0,1].
\end{align}
This allows us to estimate the posterior mean $\mathbb{E}\left[ \bm x_0 \mid \bm x_t \right]$ as an alternative approach for estimating the score function $\nabla \log p_t(\bm x_t)$. Leveraging the strong function approximation capabilities of neural networks \cite{hornik1989multilayer}, recent studies \cite{chen2024deconstructing,kadkhodaie2023generalization,karras2022elucidating,wang2023patch,vincent2011connection} have explored training a time-dependent neural network $\bm x_{\bm \theta}(\cdot, t):\R^{n} \times [0,1] \to \R^n$  with parameters $\bm \theta$, referred to as the  {\em denoising autoencoder} (DAE), to estimate the posterior mean $\mathbb{E}\left[ \bm x_0 \mid \bm x_t \right]$. To learn the network parameters $\bm \theta$, we minimize the following empirical loss over the training samples $\{\bm x^{(i)}\}_{i=1}^N$:  
\begin{align}\label{eq:em loss}
\min_{\bm \theta}\ \ell(\bm \theta) = \frac{1}{N}\sum_{i=1}^N \int_0^1 \lambda_t \mathbb{E}_{\bm \epsilon \sim \mathcal{N}(\bm 0, \bm I_n)} & \left[\left\| \bm x_{\bm \theta}( s_t\bm x^{(i)} + \gamma_t \bm \epsilon,t)   -  \bm x^{(i)}\right\|^2\right] \mathrm{d}t,  
\end{align}
where $\gamma_t := s_t\sigma_t$ and $\lambda_t:[0,1] \to \R^+$ is the weighting function.   

\subsection{Mixture of Low-Rank Gaussians}\label{subsec:MoG} 

% \qq{we just say that we are movtivated by the low-dimensionality of real-world data, we study mixture of low-rank gausian, which is more analytically tractable}
%Because the distribution of real-world image datasets is generally unknown and difficult to characterize, quantitatively studying the distribution learning behavior of diffusion models on  image distributions can be challenging. To address this, 
In this work, we consider learning a mixture of low-rank Gaussians (\MoLRG), which effectively captures the intrinsic low-dimensional structure of real-world image datasets while maintaining analytical tractability. Specifically, the \MoLRG~distribution is defined as follows. 
\begin{definition}[Mixture of Low-Rank Gaussians] \label{def:MoG}
We say that a random vector $\bm x_0 \in \R^n$ follows a mixture of $K$ low-rank Gaussian distributions with mixing proportions $\{\pi_k\}_{k=1}^K$, means $\{\bm \mu_k^\star\}_{k=1}^K \subseteq \R^n$, and covariance matrices $\{\bm \Sigma^\star_k \}_{k=1}^K \subseteq \mathbb{R}^{n\times n}$ if its distribution is given by
\begin{align}\label{eq:MoG}
    \bm x_0 \sim \sum_{k=1}^K \pi_k \mathcal{N}(\bm \mu_k^\star, \bm \Sigma_k^\star), 
\end{align}
where $\pi_k \ge 0$ is the mixing proportion of the $k$-th component satisfying $\sum_{k=1}^K \pi_k = 1$, and $\bm \mu^\star_k$ and $\bm \Sigma_k^\star \succeq \bm 0$ denote the mean and covariance matrix of the $k$-th component, respectively. In particular, the covariance matrix $\bm \Sigma_k^\star$ is low-rank with $\mathrm{rank}(\bm \Sigma_k^\star) = d_k < n$.
\end{definition}  

\paragraph{Remarks.} Intuitively, data drawn from a \MoLRG\ distribution lie on a union of low-dimensional linear subspaces, where the $k$-th subspace is characterized by the mean $\bm \mu_k^\star$ and low-rank covariance matrix $\bm \Sigma_k^\star$. We now discuss the motivation for studying this model and its connections to other distributions that have been theoretically analyzed.
\begin{itemize}[leftmargin=*]
    % \vspace{-0.1in}
    \item \emph{\MoLRG~captures the low-dimensional structure of real-world image datasets.} Recent studies \cite{brownverifying,kamkari2024geometric} conducted extensive numerical experiments and demonstrated that image datasets, such as MNIST \cite{lecun1998gradient}, CIFAR-10 \cite{cifar10}, and ImageNet \cite{russakovsky2015imagenet}, approximately reside on a union of low-dimensional manifolds. Locally, each nonlinear manifold can be effectively approximated using its tangent space (i.e., a linear subspace). Consequently, the \MoLRG~model, which represents the data as a union of linear subspaces, provides a suitable local approximation for real-world image data distributions. This claim is supported by the empirical studies in \Cref{subsec:exp-real}.  In addition, the latent distribution of real-world data can be well approximated by a Gaussian, as modern diffusion models typically employ autoencoders with KL regularization to encourage alignment with a standard Gaussian prior \cite{kingma2013auto,rombach2022high}. This latent Gaussian structure, as adopted in the \MoLRG~model, also facilitates theoretical analysis, allowing us to derive the closed-form expression for the posterior estimator at each time step, as shown in \Cref{lem:E[x_0]}. Therefore, studying the \MoLRG~model is a valuable starting point for theoretical studies on the distribution learning capability of diffusion models. 
    % \vspace{-0.1in}
    \item \emph{Comparison with recent studies on a mixture of Gaussians.} %\qq{single Gaussian learning with covariance; mixture of isotropic Gaussian}
    Many recent studies have investigated how diffusion models learn a mixture of \emph{isotropic} Gaussians (\MoG), that is, $\bm \Sigma_k^\star = \bm I_n$ in \eqref{eq:MoG}; see, e.g., \cite{chen2024learning,cole2024score,gatmiry2024learning,shah2023learning,wu2024theoretical}. These studies mainly focus on learning the means of the Gaussian components, provided that each covariance matrix is fixed as the identity. 
    In contrast, our work considers a mixture of \emph{low-rank} Gaussians, where the key challenge lies in learning the low-rank covariance matrices instead of the means. The low-rankness captures the inherent low-dimensionality of image datasets \cite{gong2019intrinsic,pope2020intrinsic,stanczukdiffusion} and offers deeper insight into why diffusion models learn data distributions in practice without suffering from the curse of dimensionality. In addition, several studies have investigated the reverse sampling process of diffusion models based on a mixture of Gaussians. For example, \cite{biroli2024dynamical} analyzed a mixture of two Gaussians with distinct means and identical variance, revealing that the reverse diffusion process exhibits distinct dynamical regimes. In addition, \cite{li2025dimension} demonstrated that diffusion models can efficiently sample from high-dimensional distributions that are well approximated by a mixture of Gaussians. In comparison, our work focuses on the training process of diffusion models rather than the sampling process. %  \qq{in comparison, we focus on learning rather than sampling}
    
    Additionally, a single Gaussian, as a special case of a mixture of  Gaussians, has been extensively studied owing to its analytical tractability, despite its limited expressive power. For example, \cite{wang2023hidden,wang2024the,li2024understanding} empirically demonstrated that the score function of a well-trained diffusion model at a high-noise scale is well approximated by the score of a single Gaussian. In addition, \cite{chen2025denoising} uses a single Gaussian model to show that denoising score distillation can identify the eigenspace of the covariance matrix of a Gaussian. 
    
\end{itemize}

% \begin{figure*}[t]
% \begin{center}
%     \includegraphics[width = 1\linewidth]{figure/MoLRG_verify1.jpg}\vspace{-0.1in}
%     \caption{\textbf{Comparison of images generated from the Gaussian, \MoLRG, and the distribution learned by diffusion models across different datasets.} Each row displays images generated from different distributions using the reverse-time ODE sampler, including the Gaussian, \MoLRG, and the distribution learned by U-Net. The columns represent images generated from the same initial noise. The results are shown for four datasets: FashionMNIST (top left), MNIST (top right), CIFAR-10 (bottom left), and FFHQ (bottom right ).\label{fig:MoLRG_data_model_verfication}}%\qq{I feel we should remove the closest point in the training data, the 2nd row}  
% \end{center}
% \end{figure*} 

\subsection{Network Parameterization Inspired by \MoLRG}

To analyze the distribution learning behavior of diffusion models, one natural approach is to study the training loss in \eqref{eq:em loss}. This approach critically depends on a suitable parameterization of the DAE $\bm x_{\bm \theta}(\cdot,t)$. In practice, $\bm x_{\bm \theta}(\cdot,t)$ is typically parameterized by a U-Net architecture \cite{ronneberger2015u}, which consists of deep nonlinear encoder and decoder networks with skip connections. %, or a Transformer-based architecture \cite{peebles2023scalable}, which leverages the self-attention mechanism \cite{vaswani2017attention}. 
However, the highly nonlinear structure of U-Net poses significant challenges for theoretical analysis.  

To enable analytical tractability while retaining structural similarity to U-Net, we consider a network architecture for $\bm x_{\bm \theta}(\cdot,t)$ that is a combination of multiple one-layer linear encoders and decoders, weighted by a softmax-like function, as follows:
\begin{align}\label{eq:DAE para}
    \bm x_{\bm \theta}(\bm x_t,t) = \sum_{k=1}^K w_{k,t}(\bm x_t) \left( \bm \mu_k + \bm U_k \bm D_{k,t}  \bm U_k^{ T}\left( \frac{\bm x_t}{s_t} - \bm \mu_k \right) \right),
\end{align}
where $\bm \theta = \{(\pi_k, \bm \mu_k,\bm U_k, \bm \Lambda_k)\}_{k=1}^K$ denotes the network parameters, $\bm U_k \in \mathcal{O}^{n\times d_k}$ has orthonormal columns, and $\bm \Lambda_k = \mathrm{diag}(\lambda_{k,1},\dots,\lambda_{k,d_k})$ is a diagonal matrix. Additionally, $\bm D_{k,t}$ and $w_{k,t}(\bm x_t)$ are defined as follows:  % with $\bm D_{k,t}=\mathrm{diag}\left( \frac{s_t^2\lambda_{k,1}}{\gamma_t^2 + s_t^2\lambda_{k,1}},\dots, \frac{s_t^2\lambda_{k,d_k}}{\gamma_t^2 + s_t^2\lambda_{k,d_k}} \right)$ and $w_{k,t}(\bm x) := \frac{\pi_k\mathcal{N}\left(\bm x; s_t\bm \mu_k, s_t^2\bm U_k\bm \Lambda_k\bm U_k^T + \gamma_t^2 \bm I_n \right)}{\sum_{l=1}^K \pi_l\mathcal{N}\left(\bm x; s_t\bm \mu_l, s_t^2\bm U_l\bm \Lambda_l\bm U_l^T  + \gamma_t^2 \bm I_n \right)}$. 
\begin{align*}
    \bm D_{k,t} = \mathrm{diag}\left( \frac{s_t^2\lambda_{k,1}}{\gamma_t^2 + s_t^2\lambda_{k,1}},\dots, \frac{s_t^2\lambda_{k,d_k}}{\gamma_t^2 + s_t^2\lambda_{k,d_k}} \right),\ w_{k,t}(\bm x) = \frac{\pi_k\mathcal{N}\left(\bm x; s_t\bm \mu_k, s_t^2\bm U_k\bm \Lambda_k\bm U_k^T + \gamma_t^2 \bm I_n \right)}{\sum_{l=1}^K \pi_l\mathcal{N}\left(\bm x; s_t\bm \mu_l, s_t^2\bm U_l\bm \Lambda_l\bm U_l^T  + \gamma_t^2 \bm I_n \right)},
\end{align*}
where $\sigma_t$ and $\gamma_t$ are defined in \Cref{subsec:prelim}. 
% \qq{the last two are too complicated, can we decouple and write it as softmax, and define what is softmax instead.}
Our network architecture in \eqref{eq:DAE para} can be viewed as a mixture-of-experts architecture \cite{shazeer2017outrageously}, where each expert network  consists of a linear encoder $\bm U_k^T$ and decoder $\bm U_k$.
These experts are then combined through a learnable weighted summation, allowing the model to adaptively assign weights among components. % On the other hand, \eqref{eq:DAE para} also bears similarity to the multi-head attention mechanism in Transformer architectures. Here, for an input $\bm x_t$, each head $k$ is to compute the dependency
In addition to the resemblance to U-Net, the parameterization in \eqref{eq:DAE para} is well motivated from the following perspectives: 

\begin{itemize}[leftmargin=*]
    \item \emph{Inspired by the posterior mean of \MoLRG.} As the DAE serves as an estimator of the posterior mean (i.e., $\bm x_{\bm \theta}(\bm x_t,t) \approx \mathbb E[ \bm x_0 | \bm x_t]$), our network parameterization is inspired by the analytical form of the posterior mean  $\mathbb E[ \bm x_0 | \bm x_t]$ of \MoLRG; see \Cref{lem:E[x_0]} in Appendix \ref{app:pf sec2}. Note that the network parameters $\bm \theta$ in \eqref{eq:DAE para} are learnable instead of being the ground-truth of the means and covariances of the \MoLRG. In \Cref{sec:results}, we will investigate how to learn the network parameters in simplified settings. % \qq{adding: in xx, we will investigate how to learn the parameters of the network under a simplified setting} 
%    In this sense, \eqref{eq:DAE para} is the optimal parameterization of the DAE for learning \MoLRG. 
    \item \emph{Meaningful image generation through the parameterization.} When the network parameters $\bm \theta$ are directly estimated from training data, the experimental results in \Cref{subsec:exp-real} demonstrate that our parameterization \eqref{eq:DAE para} generates images that are coarsely similar to those produced by a standard U-Net. This demonstrates the practical effectiveness of the proposed parameterization on real-world datasets and supports its ability to capture coarse image structure. Further details are provided below. 
%Second, the experimental results presented in \Cref{subsec:exp-real} demonstrate that our parameterization \eqref{eq:DAE para} produces images that are strikingly similar to those generated by a standard U-Net. This highlights the practical effectiveness of the model in real-world tasks, providing further evidence of its potential for capturing complex image distributions.
\end{itemize}

\subsection{Experimental Support for Data and Model Assumptions}\label{subsec:exp-real}

As illustrated in \Cref{fig:MoLRG_data_model_verfication} and \Cref{tab:noise_to_image_mapping_comparison}, we empirically validate the \MoLRG\;assumption and the corresponding network parameterization introduced in \eqref{eq:DAE para} for approximating real-world data distributions. In our experiments, we used the distribution learned by U-Net as a benchmark. To quantify the similarity between the images generated by \eqref{eq:DAE para} and those produced by U-Net, we computed the following metric: 
\begin{equation}\label{eq:dist}
    \frac{1}{M} \sum_{i = 1}^{M} \left\|\bm y_1^{(i)} - \bm y_2^{(i)}\right\|,
\end{equation}
where $M$ denotes the number of generated samples, and $\bm y^{(i)}_{1}$ and $\bm y^{(i)}_2$ denote the $i$-th samples generated from the distributions learned by U-Net and the parameterization in \eqref{eq:DAE para}, respectively. Here, both sets of samples are generated using the reverse-time ODE in \eqref{eq:reve}, initialized with the same noise. %Our experimental results in \Cref{fig:MoLRG_data_model_verfication} on real-world image datasets—including MNIST \cite{MNIST}, FashionMNIST \cite{FashionMNIST}, CIFAR-10 \cite{cifar10}, and FFHQ \cite{karras2019style}—demonstrate that (i) the \MoLRG~distribution captures the intrinsic low-dimensional structure of image data and (ii) our network parameterization generates images that preserve overall layout and semantics, albeit with less accuracy in fine-grained details. 
Detailed experimental setups are provided in Appendix \ref{app sec:expsetting_sec2}. 

\begin{table}[t]
    \centering
    \begin{tabular}{lcccc}
        \hline
         & FashionMNIST & MNIST & CIFAR-10 & FFHQ \\
        \hline
        Gaussian  & 72.62 & 69.12 & 33.55 & 36.75 \\
        $\MoLRG$  & \textbf{57.56} & \textbf{62.53} & \textbf{31.29} & \textbf{35.78} \\
        \hline
    \end{tabular}\smallskip 
    % \begin{tabular}{lcccccc}
    %     \hline
    %      & FashionMNIST & MNIST & CIFAR-10 & FFHQ & \rebuttal{CIFAR-10 LF} & \rebuttal{FFHQ LF}\\
    %     \hline
    %     Gaussian  & 72.62 & 69.12 & 33.55 & 36.75 & \rebuttal{8.03} & \rebuttal{8.91} \\
    %     $\MoLRG$  & \textbf{57.56} & \textbf{62.53} & \textbf{31.29} & \textbf{35.78} & \rebuttal{\textbf{7.63}} & \rebuttal{\textbf{9.43}}  \\
    %     \hline
    % \end{tabular}\smallskip 
    % \caption{\textbf{Quantitative comparison between theoretical distributions (MoLRG or Gaussian) and the real diffusion model.} Images are generated from both the theoretical distributions and the real diffusion model. We report the Euclidean distance between the images generated from the same initial noise.}
        \caption{\textbf{Distance (defined in \eqref{eq:dist}) between samples generated from the theoretical network parameterization (based on MoLRG or Gaussian) and those generated by U-Net.}} 
    \label{tab:noise_to_image_mapping_comparison}
\end{table}

Based on the above experimental setup, we conducted experiments on real-world image datasets, including MNIST \cite{MNIST}, FashionMNIST \cite{FashionMNIST}, CIFAR-10 \cite{cifar10}, and FFHQ \cite{ffhq}. % \rebuttal{We additionally considered the low-frequency components of CIFAR-10 and FFHQ.} 
Our proposed model and network architecture were then compared against existing approaches.
\begin{itemize}[leftmargin=*]
    \item \emph{Comparison between our model and U-Net.} First, we compare images generated by U-Net trained on the real-world dataset $\{\bm x^{(i)}\}_{i=1}^N$ with those generated by our parameterized network in \eqref{eq:DAE para}, which uses the means and covariances estimated from the same dataset. As illustrated in \Cref{fig:MoLRG_data_model_verfication}, images generated by the two network parameterizations using the same sampling procedure exhibit substantial visual similarity, especially on simpler datasets such as FashionMNIST and MNIST. This observation supports the validity of our data assumptions and confirms the effectiveness of our network parameterization in approximating real-world distributions. On more complex datasets, such as CIFAR-10 and FFHQ, although our parameterized network cannot capture fine-grained image details, it preserves the overall structural characteristics of the images generated by U-Net. The loss of fine details indicates a limitation of our model assumptions, which merits further investigation.
    
    \item \emph{Comparison between our model and the single full-rank Gaussian parameterization.} In addition, we compared our model with a network parameterized according to a single full-rank Gaussian model, as explored in prior studies \cite{wang2024the,li2024understanding}. % In particular, \cite{li2024understanding} employs a linear parameterization similar to \eqref{eq:DAE para}, but focuses on a single full-rank Gaussian and estimates its mean and covariance matrix from the training data. %Previous literature \cite{wang2024the,li2024understanding} supports the examination of the full-rank Gaussian model, emphasizing an inductive bias of diffusion models toward Gaussian structures. 
     As illustrated in \Cref{fig:MoLRG_data_model_verfication}, single Gaussian parameterization often results in high intra-class variance and blurred images, particularly on simpler datasets such as FashionMNIST and MNIST (second row of the figure). In contrast, our model based on \MoLRG~significantly improves generation quality (third row) by leveraging multiple mixture components to mitigate intra-class variance and employing low-rank covariance structures to suppress high-frequency noise. Moreover, as shown in \Cref{tab:noise_to_image_mapping_comparison}, despite employing fewer parameters, our model consistently outperforms the single Gaussian model in terms of the distance to images generated by U-Net. 
    % The primary distinction between the two approaches lies in the Gaussian assumption employed: the baseline method utilizes a single Gaussian distribution characterized by a full-rank covariance matrix, whereas \MoLRG~leverages multiple degenerate Gaussian distributions. Despite employing fewer parameters, \MoLRG~consistently outperforms the full-rank Gaussian model in terms of image generation quality. 
\end{itemize} 

\section{Sample Complexity Analysis for Learning \MoLRG}\label{sec:results} 

Building upon the setup introduced in \Cref{sec:setup}, we theoretically analyze the sample complexity of learning the \MoLRG\ distribution via diffusion models. Specifically, we show that 
\begin{tcolorbox}
    \begin{itemize}[leftmargin=*]
        \item The training loss of diffusion models in \eqref{eq:em loss} under our parameterization is equivalent to the canonical subspace clustering problem.
        \item The minimum number of samples required for learning \MoLRG~via diffusion models scales linearly with the intrinsic data dimension.
    \end{itemize}
\end{tcolorbox} 

To simplify our analysis, we assume that $\bm \mu^\star_k = \bm 0$ and $\bm \Lambda_k^\star = \bm I_{d_k}$ for each $k \in [K]$ in the \MoLRG~model (see Definition \ref{def:MoG}). 
% {\color{black}\begin{assum}\label{AS:1}
%     In Definition \ref{def:MoG}, the \MoLRG\ distribution satisfies $\bm \mu^\star_k = \bm 0$ and $\bm \Lambda_k^\star = \bm I_{d_k}$ for each $k \in [K]$. 
% \end{assum}}
Because real-world images often contain noise owing to sensor imperfections or environmental conditions, we additionally incorporate an additive noise term into the \MoLRG~model. {\color{black}Consequently, we formalize the noisy data generation process under the \MoLRG~model as follows: 
% \begin{assum}\label{AS:1}
% The training samples $\{\bm x^{(i)}\}_{i=1}^N$ are generated according to 
% \begin{align}\label{eq:MoG noise}
%     \bm x^{(i)} = \bm U_k^\star \bm a_i + \bm e_i\;\text{with probability}\;\pi_k,\;\forall i \in [N], 
% \end{align}
% where $\bm a_i \overset{i.i.d.}{\sim} \mathcal{N}(\bm 0, \bm I_{d_k})$ denotes the linear combination coefficients for the orthonormal basis $\bm U_k^\star \in \mathcal{O}^{n\times d_k}$ and $\bm e_i \in \R^n$ is noise for each $i \in [N]$.\footnote{The signal component of this model exactly satisfies Definition \ref{def:MoG} because of $\bm U_k^\star \bm a_i \sim \mathcal{N}(\bm 0, \bm U_k^\star\bm U_k^{\star T})$.}
% \end{assum}}
\begin{assum}\label{AS:1}
For each $i \in [N]$, let $z_i \in [K]$ be a latent component label with $\mathbb P(z_i=k)=\pi_k$. Conditional on $z_i=k$, the training sample is generated according to
\begin{align}\label{eq:MoG noise}
    \bm x^{(i)} = \bm U_k^\star \bm a_i + \bm e_i,
\end{align}
where $\bm a_i \sim \mathcal{N}(\bm 0, \bm I_{d_k})$ denotes the latent coefficient vector, $\bm U_k^\star \in \mathbb R^{n\times d_k}$ has orthonormal columns, and $\bm e_i \in \R^n$ denotes an additive noise vector.\footnote{The signal component satisfies Definition \ref{def:MoG}, since $\bm U_k^\star \bm a_i \sim \mathcal{N}(\bm 0, \bm U_k^\star\bm U_k^{\star T})$.}
\end{assum}

Notably, because the \MoLRG~distribution is fully characterized by the first- and second-order moments of each degenerate Gaussian component, learning this distribution reduces to estimating the bases $\{\bm U_k^\star\}_{k=1}^K$ according to our setup. In the following, we demonstrate that this estimation can be achieved by minimizing the DAE training loss in Problem \eqref{eq:em loss} with respect to optimization variables $\{\bm U_k\}_{k=1}^K$. 

\subsection{A Warm-Up Study: Learning a Single Low-Rank Gaussian}\label{subsec:low Gau} 

To build intuition, we begin by introducing our result in a simple setting, where the underlying distribution $p_{\rm data}$ is a \emph{single} low-rank Gaussian, i.e., $K=1$ in \eqref{eq:MoG noise}. Specifically, the training samples $\{\bm x^{(i)}\}_{i=1}^N$ are generated according to   
\begin{align}\label{eq:Ua+e} 
    \bm x^{(i)} = \bm U^\star \bm a_i + \bm e_i, 
\end{align}
where $\bm U^\star \in \mathcal{O}^{n\times d}$ denotes an orthonormal basis, $\bm a_i \overset{i.i.d.}{\sim} \mathcal{N}(\bm 0, \bm I_d)$ is the coefficient for each $i \in [N]$, and $\bm e_i \in \R^n$ is noise for all $i \in [N]$.
According to \eqref{eq:DAE para}, the parameterization of the DAE in this case reduces to
\begin{align}\label{eq:para Gau}
    \bm x_{\bm \theta}(\bm x_t, t) =  \frac{s_t}{s_t^2 + \gamma_t^2} \bm U\bm U^{T}  \bm x_t,
\end{align}
where $\bm \theta = \bm U \in \mathcal{O}^{n\times d}$. Equipped with the above setup, we obtain the following results.   
\begin{theorem}\label{thm:1}
Suppose that the DAE $\bm x_{\bm \theta}(\cdot,t)$ in Problem (\ref{eq:em loss}) is parameterized into \eqref{eq:para Gau} for each $t \in [0,1]$. Then, Problem (\ref{eq:em loss}) is equivalent to the following principal component analysis (PCA) problem:
    \begin{align}\label{eq:PCA}
        \max_{\bm U \in \R^{n\times d}} \sum_{i=1}^N \left\| \bm U^T \bm x^{(i)}\right\|^2\qquad \mathrm{s.t.}\quad \bm U^T\bm U = \bm I_d. 
    \end{align}
\end{theorem}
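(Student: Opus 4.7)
The plan is to substitute the low-rank parameterization \eqref{eq:para Gau} directly into the training loss \eqref{eq:em loss} and show that, after taking the inner expectation over $\bm\epsilon$ and the outer integral over $t$, everything collapses into a single scalar multiple of $\sum_i \|\bm U^T \bm x^{(i)}\|^2$ plus a $\bm U$-independent constant. Concretely, with the input $\bm x_t = s_t \bm x^{(i)} + \gamma_t \bm\epsilon$, the parameterized DAE becomes
\begin{align*}
\bm x_{\bm\theta}(s_t \bm x^{(i)} + \gamma_t \bm\epsilon, t) \;=\; \alpha_t\, \bm U\bm U^T \bm x^{(i)} \;+\; \beta_t\, \bm U\bm U^T \bm\epsilon,
\end{align*}
where $\alpha_t := s_t^2/(s_t^2+\gamma_t^2)$ and $\beta_t := s_t\gamma_t/(s_t^2+\gamma_t^2)$.

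Next, I would expand $\|\bm x_{\bm\theta} - \bm x^{(i)}\|^2$ and take the expectation over $\bm\epsilon \sim \mathcal{N}(\bm 0, \bm I_n)$. Because $\bm\epsilon$ has zero mean, the cross term between the signal and noise components vanishes, leaving the signal term $\|\alpha_t \bm U\bm U^T \bm x^{(i)} - \bm x^{(i)}\|^2$ plus the noise contribution $\beta_t^2\, \E[\|\bm U\bm U^T \bm\epsilon\|^2]$. The orthonormality constraint $\bm U^T \bm U = \bm I_d$ lets me simplify both: for the noise term, $\E[\|\bm U \bm U^T \bm\epsilon\|^2] = \mathrm{tr}(\bm U\bm U^T) = d$, which is $\bm U$-free; for the signal term, I use $\|\bm U \bm U^T \bm x^{(i)}\|^2 = \|\bm U^T \bm x^{(i)}\|^2$ to obtain
\begin{align*}
\|\alpha_t \bm U\bm U^T \bm x^{(i)} - \bm x^{(i)}\|^2 \;=\; (\alpha_t^2 - 2\alpha_t)\, \|\bm U^T \bm x^{(i)}\|^2 \;+\; \|\bm x^{(i)}\|^2.
\end{align*}

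Now I would collect the $t$-integral. The only $\bm U$-dependent piece contributes a factor $\int_0^1 \lambda_t (\alpha_t^2 - 2\alpha_t)\, \mathrm{d}t$ multiplying $\frac{1}{N}\sum_i \|\bm U^T \bm x^{(i)}\|^2$. Since $\alpha_t \in (0,1)$ for all $t > 0$ (because $\sigma_t > 0$), we have $\alpha_t^2 - 2\alpha_t = \alpha_t(\alpha_t - 2) < 0$, and $\lambda_t > 0$, so the integrated coefficient is a strictly negative constant $-c_1 < 0$. Writing the remaining $\bm U$-independent pieces as a constant $c_2$, the loss takes the affine form $\ell(\bm U) = -c_1 \cdot \frac{1}{N}\sum_{i=1}^N \|\bm U^T \bm x^{(i)}\|^2 + c_2$. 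Minimizing $\ell$ over $\bm U \in \mathcal{O}^{n\times d}$ is therefore equivalent to maximizing $\sum_{i=1}^N \|\bm U^T \bm x^{(i)}\|^2$ subject to $\bm U^T \bm U = \bm I_d$, which is exactly the PCA problem \eqref{eq:PCA}.

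The proof is essentially a direct calculation, so the only ``obstacle'' is bookkeeping: one must carefully track the time-dependent coefficients $\alpha_t$ and $\beta_t$ through the Gaussian expectation, and then verify that the surviving $\bm U$-dependent term has a \emph{negative} coefficient after integration, so that the minimization problem flips into a maximization. The orthonormality of $\bm U$ is what makes the noise contribution $\bm U$-independent and what lets $\|\bm U\bm U^T\bm x^{(i)}\|$ collapse to $\|\bm U^T\bm x^{(i)}\|$; without both simplifications the equivalence would not be clean.
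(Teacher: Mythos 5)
Your proof is correct and follows the same route as the paper's: plug in the parameterization, take the Gaussian expectation over $\bm\epsilon$ so the cross term dies, use $\bm U^T\bm U = \bm I_d$ to reduce the noise term to a $\bm U$-free constant and to collapse $\|\bm U\bm U^T\bm x^{(i)}\|$ to $\|\bm U^T\bm x^{(i)}\|$, and observe that the surviving coefficient $\alpha_t^2 - 2\alpha_t$ is negative so minimization flips to maximization. Your $\alpha_t^2 - 2\alpha_t$ is exactly the paper's $-\tfrac{1+2\sigma_t^2}{(1+\sigma_t^2)^2}$ under the substitution $\gamma_t = s_t\sigma_t$, so the two arguments are algebraically identical.
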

We defer the proof to \Cref{app:pf thm1}. In this case, \Cref{thm:1} shows that training diffusion models with the network parameterization \eqref{eq:para Gau} is equivalent to performing PCA on the training samples. Note that PCA is a classical and well-studied method for learning low-dimensional subspaces, whose optimal solution can be computed via singular value decomposition (SVD). This closed-form solution allows us to leverage existing results, such as Wedin's Theorem \cite{wedin1972perturbation}, to facilitate our analysis. Consequently, we can apply classical tools to analyze the sample complexity of learning the underlying distribution with diffusion models as follows.

\begin{figure}[t]
\begin{center}
    \begin{subfigure}{0.43\textwidth}
        \includegraphics[width=1\textwidth]{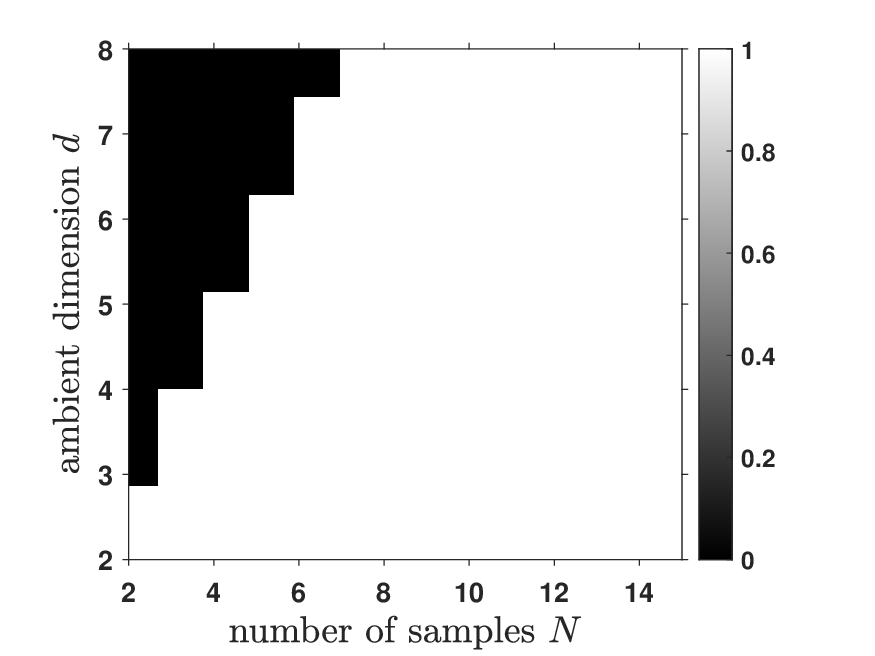} 
    \caption{\bf\footnotesize PCA} 
    \end{subfigure} 
    \begin{subfigure}{0.43\textwidth}
    	\includegraphics[width = 1\linewidth]{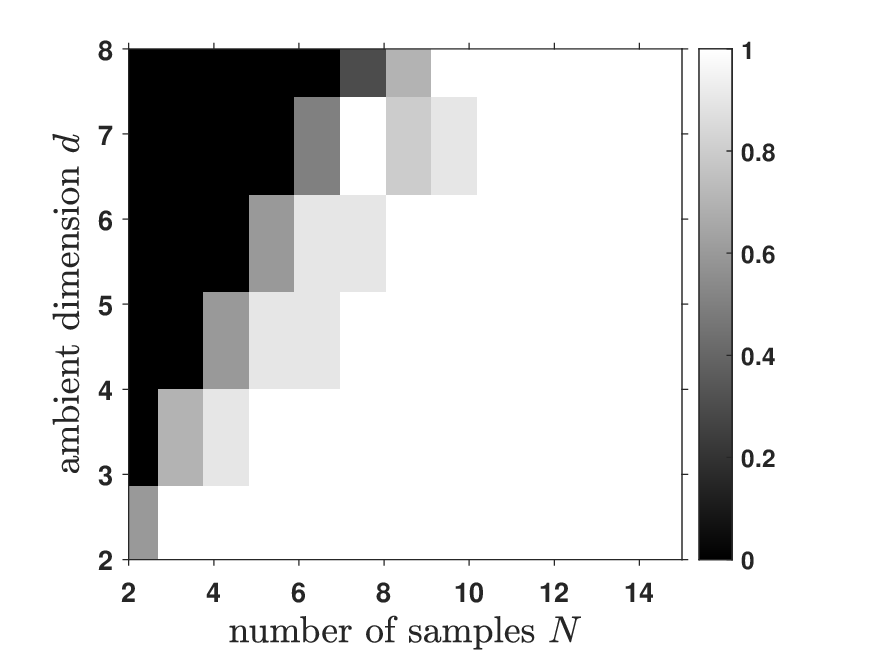} 
    \caption{\bf\footnotesize Diffusion Model} 
    \end{subfigure} \vspace{-0.1in}
    \caption{\textbf{Phase transition of learning the \MoLRG~distribution with $K = 1$.} The $x$-axis is the number of training samples and $y$-axis is the dimension of subspaces. Darker pixels represent a lower empirical probability of success. We apply SVD and stochastic gradient descent to solve Problems \eqref{eq:PCA} and \eqref{eq:em loss}, visualizing the results in (a) and (b), respectively.}  \label{fig:phase-transition-MoG-k=1} 
\end{center}
\end{figure}

\begin{theorem}\label{thm:2}
    Under the same setting of \Cref{thm:1}, {\color{black}suppose that \Cref{AS:1} holds}. Let $\hat{\bm U}$ denote an optimal solution of Problem (\ref{eq:em loss}). The following statements hold: 
    \begin{itemize} 
        \item[i)] If $N \ge d$, it holds with probability at least $1 - 1/2^{N-d+1} - \exp\left(-c_2N\right) $ that any optimal solution $\hat{\bm U}$ satisfies
   \begin{align}\label{rst1:thm 2}
       \left\|\hat{\bm U}\hat{\bm U}^T - \bm U^\star\bm U^{\star T}\right\|_F \le \frac{c_1\sqrt{\sum_{i=1}^N \|\bm e_i\|^2}}{\sqrt{N} - \sqrt{d-1}},
   \end{align}
   where $c_1, c_2 > 0$ are constants. 
   \item[ii)] If $N < d$, there exists an optimal solution $\hat{\bm U} \in \mathcal{O}^{n\times d}$ such that with probability at least $1 - 1/2^{d-N+1} - \exp\left(-c_2^\prime d\right)$, 
   \begin{align}\label{rst2:thm 2}
       \left\|\hat{\bm U}\hat{\bm U}^T - \bm U^\star\bm U^{\star T}\right\|_F \ge \alpha - 
       \frac{c_1^\prime\sqrt{\sum_{i=1}^N \|\bm e_i\|^2}}{\sqrt{d} - \sqrt{N-1}},
   \end{align}
   where $\alpha := \sqrt{2\min\{d-N, n-d\}}$ and $c_1^\prime, c_2^\prime > 0$ are constants . 
    \end{itemize}
\end{theorem}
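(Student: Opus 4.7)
By \Cref{thm:1}, the training loss $\ell(\bm \theta)$ over $\bm \theta = \bm U \in \mathcal{O}^{n\times d}$ is equivalent to the PCA problem $\max_{\bm U^T\bm U = \bm I_d} \|\bm U^T \bm X\|_F^2$, where $\bm X := [\bm x^{(1)}, \ldots, \bm x^{(N)}] = \bm U^\star \bm A + \bm E$ with $\bm A := [\bm a_1, \ldots, \bm a_N]$ and $\bm E := [\bm e_1, \ldots, \bm e_N]$. Its optimizers $\hat{\bm U}$ are precisely orthonormal bases of the top-$d$ left singular subspace of $\bm X$. The plan is therefore to compare this subspace with $\mathrm{range}(\bm U^\star)$ via perturbation analysis, using Gaussian concentration to control the spectrum of $\bm A$ while the noise energy $\|\bm E\|_F^2 = \sum_{i} \|\bm e_i\|^2$ enters as the perturbation size.

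For part (i) with $N \ge d$, I first note that in the noiseless case $\bm E = \bm 0$ the matrix $\bm L := \bm U^\star \bm A$ has $\mathrm{range}(\bm L) = \mathrm{range}(\bm U^\star)$ almost surely, so $\hat{\bm U}\hat{\bm U}^T = \bm U^\star \bm U^{\star T}$ exactly. For $\bm E \neq \bm 0$, Wedin's $\sin\Theta$ theorem applied to $\bm X = \bm L + \bm E$ yields
\[
\left\|\hat{\bm U}\hat{\bm U}^T - \bm U^\star \bm U^{\star T}\right\|_F \;\le\; \frac{c\,\|\bm E\|_F}{\sigma_d(\bm L)} \;=\; \frac{c\,\|\bm E\|_F}{\sigma_d(\bm A)},
\]
where I used $\sigma_d(\bm U^\star \bm A) = \sigma_d(\bm A)$ since $\bm U^\star$ has orthonormal columns. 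Combining a Gaussian tail bound (Davidson–Szarek) with a density-based small-ball estimate for the smallest singular value of a rectangular Gaussian matrix gives $\sigma_d(\bm A) \ge \sqrt{N} - \sqrt{d-1}$ with probability at least $1 - 1/2^{N-d+1} - \exp(-c_2 N)$, and substituting $\|\bm E\|_F = \sqrt{\sum_i \|\bm e_i\|^2}$ yields \eqref{rst1:thm 2}.

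For part (ii) with $N < d$, the key geometric observation is that in the noiseless case $\bm X = \bm U^\star \bm A$ has rank $N < d$ almost surely, so the top-$d$ singular subspace is not unique: any $\hat{\bm U}$ whose first $N$ columns span $\mathrm{range}(\bm U^\star \bm A)$ is an optimizer, with the remaining $d - N$ columns free (any orthonormal extension works). I would \emph{construct} a worst-case noiseless optimizer by placing $\min(d-N,\, n-d)$ of the free columns inside $\mathrm{range}(\bm U^\star)^\perp$, and any leftover columns (when $n-d < d-N$) inside $\mathrm{range}(\bm U^\star)$. A direct computation using $\|\hat{\bm U}\hat{\bm U}^T - \bm U^\star \bm U^{\star T}\|_F^2 = 2d - 2\|\hat{\bm U}^T \bm U^\star\|_F^2$ then gives exactly $\sqrt{2\min(d-N, n-d)}$ for this noiseless solution. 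To extend to $\bm E \neq \bm 0$, I pick the optimizer of the noisy PCA problem whose signal-aligned block tracks the noiseless construction, and use a reverse-triangle inequality together with a Wedin-type perturbation of the (now $d$-dimensional, generically unique) top singular subspace of $\bm X$ to subtract off a perturbation term of order $\|\bm E\|_F / \sigma_N(\bm A)$. Here $\sigma_N(\bm A) \ge \sqrt{d} - \sqrt{N-1}$ with probability at least $1 - 1/2^{d-N+1} - \exp(-c_2' d)$ by the analogous Gaussian bound applied to the transpose $\bm A^T \in \R^{N \times d}$.

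The main obstacle is the noise-perturbation step in part (ii): because the statement is an \emph{existence} claim, I must argue that a noiseless worst-case solution can be ``tracked'' by a noisy optimizer, and that the Wedin-type perturbation can only reduce the Frobenius error by the subtracted amount (rather than collapsing it entirely). This requires decomposing any candidate $\hat{\bm U}$ into its signal-aligned top-$N$ directions (perturbed by at most $c\|\bm E\|_F / (\sqrt{d} - \sqrt{N-1})$) and the $d-N$ free directions, which can still be chosen adversarially inside $\mathrm{range}(\bm U^\star)^\perp$. Getting the probability exponent $1/2^{d-N+1}$ — and its counterpart $1/2^{N-d+1}$ in part (i) — requires an Edelman-type density bound near zero rather than a standard subgaussian tail, which is the most technical ingredient.
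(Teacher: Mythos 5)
Your proposal follows essentially the same route as the paper: reduce to PCA via Theorem~\ref{thm:1}, write $\bm X = \bm U^\star\bm A + \bm E$, bound $\sigma_{\min}(\bm A)$ using a Rudelson--Vershynin/Edelman-type estimate (the paper cites \cite{rudelson2009smallest} for exactly the $1-(c\varepsilon)^{m-n+1}-\exp(-cm)$ bound you anticipate), apply Wedin's $\sin\Theta$ theorem, and for part~(ii) construct an adversarial optimizer by filling the free columns with directions orthogonal to $\mathrm{range}(\bm U^\star)$ and then apply a reverse triangle inequality. The dimension count $2(d-N) - 2\max\{0, 2d-n-N\} = 2\min\{d-N,\,n-d\}$ is exactly what the paper derives.

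One small mis-statement worth flagging: in part~(ii) you describe the noisy case as yielding a ``now $d$-dimensional, generically unique'' top singular subspace of $\bm X$, and you worry about whether the noiseless worst-case construction can be ``tracked.'' This is not actually an issue. Since $\bm X \in \R^{n\times N}$, we have $\mathrm{rank}(\bm X)\le N < d$ unconditionally, with or without noise, so the top-$d$ left singular subspace is never unique: every PCA optimizer has the form $[\bm U_X,\ \bar{\bm U}_X]$, where $\bm U_X \in \mathcal{O}^{n\times N}$ spans the range of $\bm X$ and $\bar{\bm U}_X$ is an arbitrary orthonormal completion. The noise perturbs only the $N$-dimensional block $\bm U_X$ (controlled by Wedin via $\sigma_N(\bm A)\gtrsim \sqrt d - \sqrt{N-1}$), while the remaining $d-N$ columns stay entirely free and can be chosen adversarially inside $\mathrm{range}(\bm U^\star)^\perp$ exactly as in the noiseless construction. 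Once you realize this, the ``tracking'' step you flagged as the main obstacle disappears, and your argument closes with the reverse triangle inequality as you intended.
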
 
We defer the proof to Appendix \ref{app:pf thm2}. Next, we discuss the implications of our results. 
\begin{itemize}[leftmargin=*]
\item {\em Phase transition in learning the underlying distribution.} Building on the equivalence in \Cref{thm:1} and the data model in \eqref{eq:Ua+e}, \Cref{thm:2} clearly demonstrates a phase transition from failure to success of learning the underlying distribution via diffusion models as the number of training samples increases. More precisely, when the number of training samples is larger than the intrinsic dimension of the subspace, i.e., $N \geq d$, any optimal solution $\hat{\bm U}$ recovers the underlying subspace up to an approximation error determined by the noise level. Conversely, when $N < d$,  the training loss admits an optimal solution that fails to recover the underlying subspace. This phase transition is further corroborated by our experiments in \Cref{fig:phase-transition-MoG-k=1}.
Finally, because a Gaussian distribution can be fully characterized by its first two moments, our result rigorously shows that diffusion models can recover the underlying distribution when 
$N \geq d$, with the covariance estimation error bounded by the noise level

\item {\em The connections between PCA and semantic task vectors.} The correspondence between principal components and semantic meaning has been well studied in machine learning literature. For example, early work \cite{turk1991eigenfaces} demonstrated that PCA can reveal meaningful components of variation in natural image datasets,  such as facial expressions, lighting, or pose, implying a connection between directions of maximal variance and human-perceived semantics. Inspired by this insight, our empirical results in \Cref{sec:appli_2} reveal a similar phenomenon in diffusion models: task vectors can be identified through the leading singular vectors of the Jacobian of the DAE, which can effectively capture distinct semantic features of natural images for controlling the image generation. %\qq{we need to provide more description of our experimental findings here.} To further support this perspective, we present experimental evidence in \Cref{sec:appli_2}. 
% Our finding laid the groundwork for interpreting principal components as capturing dominant semantic directions in diffusion models.
\end{itemize} 

\subsection{Learning a Mixture of Low-Rank Gaussians}\label{subsec:thm MoG}  

In this subsection, we extend the above study to the \MoLRG~distribution with $K>1$. For the ease of analysis, {\color{black} we impose the following additional assumption on \Cref{AS:1}:
\begin{assum}\label{AS:2}
Under \Cref{AS:1}, the subspace bases satisfy $\bm U_k^{\star T} \bm U_l^\star = \bm 0$ for all $k \neq l$, the subspace dimensions are equal, i.e., $d_1=\dots=d_K=d$, and the mixing weights satisfy $\pi_1=\dots=\pi_K=1/K$.   
\end{assum}
This assumption enforces a symmetric and well-separated multi-subspace structure, which simplifies the analysis while preserving the essential geometric characteristics of the model.}
Moreover, we consider a hard-max counterpart of \eqref{eq:DAE para} for the DAE parameterization as follows:
{\color{black}
\begin{assum}\label{AS:3}
The DAE is parameterized as
\begin{align}\label{eq:para MoG}
     \bm x_{\bm \theta}(\bm x_t, t)  = \dfrac{s_t}{s_t^2 + \gamma_t^2} \sum_{k=1}^K \hat{w}_k(\bm \theta; \bm x_0) \bm U_k\bm U_k^T\bm x_t, 
    \end{align}
where $\bm \theta = \{\bm U_k\}_{k=1}^K$, $\bm U = [\bm U_1,\dots,\bm U_K] \in \mathcal{O}^{n\times dK }$, and $\{ \hat{w}_k(\bm \theta; \bm x_0) \}_{k=1}^K$ are defined as 
\begin{align}\label{eq:wk1}
\hat{w}_k(\bm \theta; \bm x_0) = \begin{cases}
1,\ & \text{if}\ k=k_0, \\
0,\ & \text{otherwise},
\end{cases}
\end{align}
where $k_0
= \min\left\{
j\in[K]:
j\in\arg\max_{l\in[K]}\|\bm U_l^T\bm x_0\|
\right\}.$  
\end{assum}
This assumption replaces the softmax weights in \eqref{eq:DAE para} with hardmax weights based on the clean training sample $\bm x_0$. This simplification facilitates the analysis while capturing the dominant-component behavior of the softmax weights. We refer the reader to Appendix \ref{app sec:para MoG} for a discussion of how these hard-max weights approximate the softmax weights in \eqref{eq:DAE para}. Now, we are ready to present the following theorem. 

\begin{figure}[t]
\begin{center}
    \begin{subfigure}{0.43\textwidth}
    	\includegraphics[width=1\textwidth]{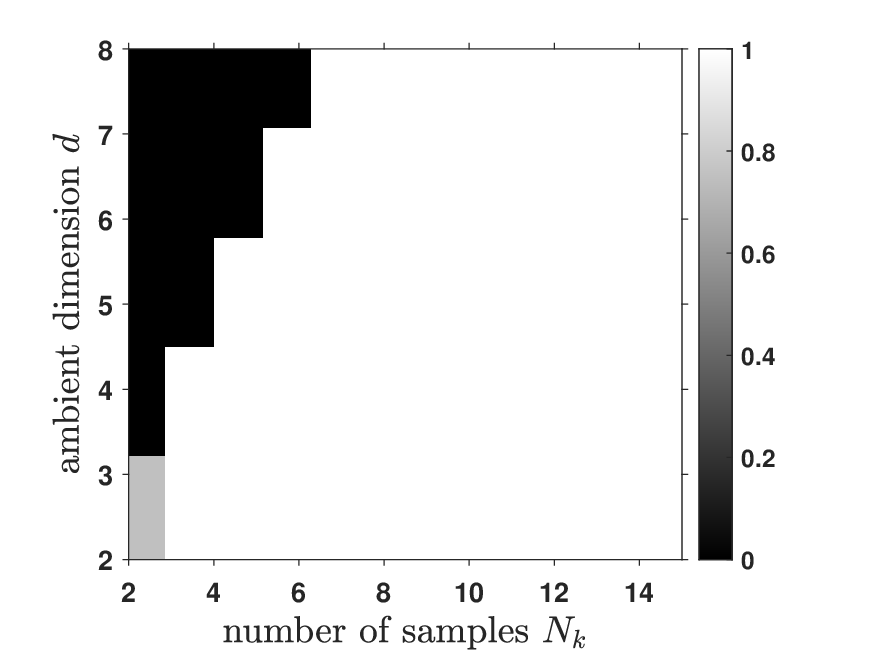} 
    \caption{\bf\footnotesize Subspace Clustering} 
    \end{subfigure}  
    \begin{subfigure}{0.43\textwidth}
    	\includegraphics[width = 1\linewidth]{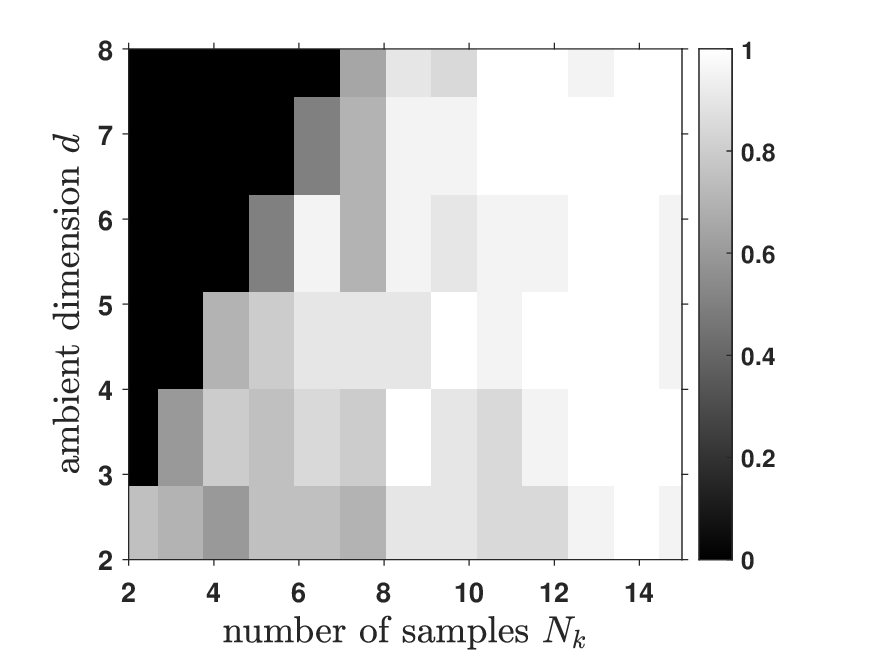} 
    \caption{\bf\footnotesize \rebuttal{Diffusion Model}} 
    \end{subfigure}\vspace{-0.1in} 
    \caption{\textbf{Phase transition of learning the \MoLRG~distribution with $K = 2$.} The $x$-axis is the number of training samples and $y$-axis is the dimension of subspaces. Darker pixels represent a lower empirical probability of success. We apply a subspace clustering method and stochastic gradient descent to solve Problems \eqref{eq:SC} and \eqref{eq:em loss}, visualizing the results in (a) and (b), respectively. Additional experiments for the case when $K = 3$ are presented in \Cref{fig:phase-transition-MoG-add-exp}.}  \label{fig:phase-transition-MoG-k=2} 
\end{center}
\vspace{-0.1in}
\end{figure}

\begin{theorem}\label{thm:3}
    %\begin{theorem}\label{thm:3}
   Suppose that \Cref{AS:3} holds.
    Then, Problem (\ref{eq:em loss}) is equivalent to the following subspace clustering problem: 
    \begin{align}\label{eq:SC}
        \max_{\bm \theta} \frac{1}{N} \sum_{k=1}^K \sum_{ i \in C_k(\bm \theta)} \|\bm U_k^T\bm x^{(i)}\|^2\qquad \mathrm{s.t.}\ \left[
            \bm U_1, \dots, \bm U_K \right] \in \mathcal{O}^{n\times dK}, 
    \end{align}
where 
\begin{align}\label{eq:Ck}
    C_k(\bm\theta)
:=
\left\{
i\in[N]:
k= \min\left\{
j\in[K]:
j\in\arg\max_{l\in[K]}
\|\bm U_l^T\bm x^{(i)}\|
\right\}
\right\},\ \forall k\in[K].
\end{align}
\end{theorem} 
We defer the proof of this theorem to Appendix \ref{app:pf thm3}. In this theorem, the set \(C_k(\bm \theta)\) contains the data points assigned to the \(k\)-th learned subspace, where each data point is assigned to the subspace with the largest projection norm; in the case of ties, it is assigned to the one with the smallest index. Problem (\ref{eq:SC}) seeks to maximize the sum of squared projection norms of data points onto their assigned subspaces.  With the network parameterization in \eqref{eq:para MoG}, \Cref{thm:3} shows that optimizing the training loss of diffusion models is equivalent to solving the subspace clustering problem \cite{vidal2011subspace,wang2022convergence}.  Notably, subspace clustering is a fundamental problem in unsupervised learning, which aims to identify and group data points that lie in a union of low-dimensional subspaces in a high-dimensional space \cite{vidal2011subspace,lerman2018overview}. By showing an equivalence between training diffusion models and subspace clustering in \Cref{thm:3}, we can characterize the minimum number of samples required for learning the underlying \MoLRG~distribution, similar to \Cref{thm:2}. 

Before proceeding, we clarify the transition from mixing weights to component sample sizes. Each sample has a latent label \(z_i\in[K]\) with \(\mathbb P(z_i=k)=\pi_k\). We define
\[
C_k^\star:=\{i\in[N]:z_i=k\},
\quad
N_k:=|C_k^\star|.
\]
Thus, \(\pi_k\) denotes the population-level mixing proportion, whereas \(N_k\) is the realized number of samples from the \(k\)-th component. Under \Cref{AS:2}, \((N_1,\dots,N_K)\) follows a multinomial distribution with equal probabilities \(1/K\), and \(N_{\min}:=\min_{k\in[K]}N_k\) is of order \(N/K\) with high probability. Therefore, the theorem below states the recovery guarantees in terms of the realized counts \(N_k\), which determine the recoverability of each subspace.
 
\begin{theorem}\label{thm:4}
Suppose that Assumptions \ref{AS:1}, \ref{AS:2}, and \ref{AS:3} hold, $d\gtrsim \log N$, and
\begin{align}\label{eq:noise thm4}
\max_{i\in[N]}\|\bm e_i\|
\le
c_\delta
\min\left\{
\frac{N_{\min}}{K^3N\sqrt d},
\frac{\sqrt d}{N}
\right\},
\end{align}
where $c_\delta>0$ is a sufficiently small absolute constant. Then the following statements hold:
\begin{itemize}
\item[(i)] Suppose in addition that $N_{\min}\gtrsim K^6\bigl(d\log K + \log N\bigr).$ Then, with probability at least \(1-N^{-\Omega(1)}\), for any optimal solution
$\{\hat{\bm U}_k\}_{k=1}^K$ of Problem \eqref{eq:em loss}, there exists a permutation
$\Pi:[K]\to[K]$ such that, for all $k\in[K]$,
\begin{align}\label{rst1:thm 4}
\left\|\hat{\bm U}_{\Pi(k)}\hat{\bm U}_{\Pi(k)}^T
-
\bm U_k^\star\bm U_k^{\star T}
\right\|_F
\le
\frac{c_1 \sqrt{\sum_{i \in C_k^\star} \|\bm e_i\|^2}}
{\sqrt{N_k} - \sqrt{d-1}},
\end{align}
where $c_1>0$ is a constant.

\item[(ii)]  If $N_{k_0}<d$ for some $k_0\in[K]$, then with probability at least $1-2^{-(d-N_{k_0}+1)}-\exp(-c_2^\prime d),$ the PCA problem over the true cluster \(C_{k_0}^\star\) admits an optimal solution \(\tilde{\bm U}_{k_0}\in\mathcal O^{n\times d}\) such that
\begin{align}\label{rst2:thm 4}
\left\|
\tilde{\bm U}_{k_0}\tilde{\bm U}_{k_0}^T
-
\bm U_{k_0}^{\star}\bm U_{k_0}^{\star T}
\right\|_F
\ge
\beta
-
\frac{c_1^\prime\sqrt{\sum_{i\in C_{k_0}^\star}\|\bm e_i\|^2}}
{\sqrt{d} - \sqrt{N_{k_0}-1}},
\end{align}
where \(\beta:=\sqrt{2\min\{d-N_{k_0},n-d\}}\) and
\(c_1^\prime,c_2^\prime>0\) are constants.
\end{itemize}
\end{theorem}
We defer the proof to Appendix~\ref{app:pf thm4}. This result extends \Cref{thm:2} from a single low-rank Gaussian component to the multi-component \MoLRG~setting. When each component has sufficiently many samples, the global optimizer recovers the true mixture assignments up to a permutation, after which the problem decomposes into \(K\) component-wise PCA problems. Conversely, \Cref{thm:4}(ii) shows that if one component has fewer than \(d\) samples, then even oracle PCA with true assignments cannot uniquely recover its subspace. Together, these results identify a sample-complexity threshold for learning the underlying subspaces, which we empirically validate in \Cref{fig:phase-transition-MoG-k=2}. We now discuss the implications of our results.

\begin{itemize}[leftmargin=*]
    % \item \emph{Phase transition in learning the underlying distribution.} This theorem demonstrates that when the number of samples in each subspace exceeds the dimension of the subspace and the noise is bounded, the optimal solution of the training loss \eqref{eq:em loss} under the parameterization \eqref{eq:para MoG} can recover the underlying subspaces up to the noise level. Conversely,  when the number of samples is insufficient, there exists an optimal solution that may recover wrong subspaces; see Figures \ref{fig:phase-transition-MoG}(c) and \ref{fig:phase-transition-MoG}(d). % This theorem provides insight into the empirically observed transition from memorization to generalization \emph{w.r.t.} the number of training samples \cite{zhang2023emergence,kadkhodaie2023generalization}. Similar to the single low-rank Gaussian model in \Cref{thm:2}, 
    \item {\em Understanding diffusion models via subspace clustering.} To the best of our knowledge, our work is the first to establish the equivalence between training diffusion models and subspace clustering under a tractable \MoLRG-inspired parameterization. This equivalence, together with the \MoLRG~model, allows us to show that the minimal number of samples for diffusion models to learn the underlying distribution scales linearly with the intrinsic dimension.  This finding stands in sharp contrast to existing results \cite{oko2023diffusion,wibisono2024optimal} in the literature, which show that diffusion models suffer from the curse of dimensionality when learning general classes of high-dimensional distributions.  Our results provide a more optimistic and practical perspective by demonstrating that diffusion models can effectively learn data distributions with intrinsic low-dimensional structures—a property commonly observed in image datasets, thereby avoiding the curse of dimensionality.

    \item \emph{Connections to the phase transition from memorization to generalization.}   \cite{kadkhodaie2023generalization,zhang2024emergence} have empirically revealed that diffusion models learn the score function across two distinct regimes--memorization (i.e., learning the empirical distribution of the training data)  and generalization (i.e., learning the underlying distribution of the data). Our work partially explains this intriguing experimental observation based on the \MoLRG~model in terms of generalization. We demonstrate that diffusion models learn the underlying data distribution, thereby enabling generalization, when the number of training samples scales linearly with the intrinsic dimension of the data distribution.\footnote{As discussed in \Cref{sec:prior-work}, we consider generalization in diffusion models as their ability to accurately capture the underlying data distribution.} Our theory reveals a phase transition from failure to success in learning the underlying distribution as the number of training samples increases, shedding light on the phase transition from memorization to generalization.  
    
    A recent work by \cite{merger2025generalization} also investigated the phase transition phenomenon by analyzing the gradient dynamics of linear models. Their findings are closely related to ours, highlighting how sample complexity governs the generalization behavior of diffusion models. However, there are key differences between our results and theirs. First, their analysis is limited to data drawn from a single Gaussian distribution, whereas our framework extends to a mixture of Gaussians. Second, their study focused on a linear neural network, whereas our model involves a mixture of two-layer neural networks, as defined in \eqref{eq:para MoG}.

    \item {\em Future directions based on our theory.} Several promising directions for future research are based on our theoretical framework. First, our current analysis assumes that the data lies on a union of mutually orthogonal subspaces. This facilitates theoretical tractability but does not fully capture the complexity of real-world data, which often reside on overlapping or nonlinear manifolds. Extending our framework to capture these complicated structures would be a meaningful extension of our results. Second, our analysis focuses on a simplified network parameterization for learning \MoLRG. In contrast, practical diffusion models typically rely on complex and over-parameterized architectures, such as U-Net and Transformers. A compelling direction for future research is to study the generalization behavior of diffusion models under over-parameterized nonlinear network architectures.
    % \qq{the discussion here should not use negative words. This is shooting ourself on our own feet. Instead we should point some interesting directions for future research, saying that "our work opened several direction for further investigation"} 
 
 \end{itemize} 

\subsection{Empirical Validation of Theoretical Findings}\label{subsec:thm verification} %\qq{we need to be consistent. either all capital for the title of subsection, or not?} 
Finally, we conclude this section by providing phase transition experiments for $K=1,2,3$, shown in \Cref{fig:phase-transition-MoG-k=1}, \Cref{fig:phase-transition-MoG-k=2}, and \Cref{fig:phase-transition-MoG-add-exp}.  Our experimental results show that training diffusion models consistently exhibits a phase transition from failure to success in learning the \MoLRG~distribution (or the subspaces) as the number of training samples increases, supporting our theoretical findings in Theorems \ref{thm:2} and \ref{thm:4}.  % These experiments empirically validate the equivalence between training diffusion models as defined in \eqref{eq:em loss} and the subspace clustering problem (or PCA when $K=1$).

Specifically, our experimental setup is as follows. For each plot, we fix the ambient dimension of the data to be $n = 48$ and vary the subspace dimension $d$ from $2$ to $8$ in increments of $1$. Similarly, we vary the number of training samples $N$ from $2$ to $15$ in steps of $1$. For each pair of $(n,d)$, we generate all training samples according to the \MoLRG~distribution in \eqref{eq:MoG noise} with $\bm e_i = 0$ for different $K=1,2,3$, independently repeating the experiment for $20$ times to empirically estimate the success probability of subspace recovery. 
For the case $K=1$, we apply SVD to solve the PCA problem in \eqref{eq:PCA}. To solve the subspace clustering problem in \eqref{eq:SC} when $K>1$, we apply the K-subspace method with spectral initialization as described in \cite{wang2022convergence}. 
To train the DAE with the theoretical parameterization \eqref{eq:para Gau} or \eqref{eq:para MoG}, we optimize the training loss \eqref{eq:em loss} via stochastic gradient descent (see \Cref{alg:1} for more details).

\section{Discussion on Related Results}\label{sec:prior-work} 

% \qq{I feel we can move this to Sec 1.2 with separated paragraph title}

In this section, we discuss the relationship between our results and closely related works on diffusion models and subspace clustering. 

\paragraph{Memorization and generalization in diffusion models.} Many interesting studies have been conducted to investigate memorization and generalization of diffusion models. \cite{zhang2024emergence,kadkhodaie2023generalization} demonstrated that diffusion models tend to memorize the training data in the memorization regime and generate new samples in the generalization regime.  \cite{gu2023memorization,zhang2024emergence} showed that diffusion models learn the empirical optimal score function in the memorization regime. \cite{yoon2023diffusion} argued that diffusion models tend to generalize when they fail to memorize. Recently, \cite{lyu2025resolving} showed that the memorization problem can be resolved by a simple inertia update step. 
In the generalization regime, a popular line of research \cite{chen2023sampling,chen2023improved,debortoli2022convergence,dupuis2025algorithm,lee2022convergence,lee2023convergence} has established error bounds on the distance between the true data distribution and the learned data distribution under different metrics, including KL divergence and Wasserstein distance.

\paragraph{Diffusion models for learning low-dimensional distributions.} Recently, a growing body of work has studied how diffusion models learn distributions with different low-dimensional structures. An important line of research focuses on data supported on low-dimensional subspaces. For example, a seminal work by \cite{chen2023score} theoretically studied score approximation, estimation, and distribution recovery of diffusion models for learning from data supported on a low-dimensional linear subspace, with general latent variables beyond Gaussian distributions. Recently, \cite{chen2025diffusion} proposed a diffusion factor model to exploit the low-dimensional structure in asset returns and established an error bound for score estimation. In contrast to these studies, our work focuses on a union of subspaces simultaneously instead of a single subspace. In addition, while the analysis in \cite{chen2023score,chen2025diffusion}  establishes a polynomial sample complexity bound in terms of the intrinsic dimension, our results yield a sharper bound that scales linearly with the intrinsic dimension under the \MoLRG~model.  In addition, \cite{chen2025interpolation} assumed that the data lies in a one-dimensional linear subspace and demonstrated that the generalization ability of diffusion models stems from a smoothing-induced interpolation effect. 

Another important line of research investigates more general low-dimensional manifold structures. For example, \cite{yakovlev2025generalization} studied generalization and approximation errors of the score matching estimator under a nonparametric Gaussian mixture. \cite{gottwald2025localized} studied locality structure, a form of low-dimensional structure characterized by sparse dependencies among components of the data distribution, to reduce sample complexity for training diffusion models. \cite{cui2025precise} studied the training dynamics of diffusion models when the underlying distribution is an infinite Gaussian mixture supported on a latent low-dimensional manifold. A promising direction inspired by these works is to extend the \MoLRG~model to mixtures of low-dimensional manifolds and analyze the training loss of diffusion models.  % They proposed a localized diffusion model and proved that it achieves lower sample complexity, thus circumventing the curse of dimensionality. 

% \qq{this is about sampling not learning, does not fit the paragraph title well, needs a separate title}  

\paragraph{Sampling rate of diffusion models with low-dimensional structures.} In a complementary direction, recent works leverage low-dimensional structures in data to improve the sampling convergence analysis in diffusion models. For example, \cite{huang2024denoising,li2024adapting,liang2025low} showed that the sampling rate of denoising diffusion probabilistic models scales with the intrinsic dimension of the data distribution. \cite{azangulov2024convergence,tang2024adaptivity,debortoli2022convergence} established sharp convergence rates for score-based diffusion models when the data distribution lies on or near low-dimensional manifolds.  

\paragraph{Subspace clustering.} Subspace clustering is a fundamental problem in unsupervised learning, which aims to identify and group data points that lie in a union of low-dimensional subspaces in a high-dimensional space \cite{agarwal2004k,vidal2011subspace,lerman2018overview}. Over the past years, a substantial body of literature has explored various approaches to the algorithmic development and theoretical analysis of subspace clustering. These include techniques such as sparse representation \cite{elhamifar2013sparse,wang2013noisy,soltanolkotabi2012geometric,soltanolkotabi2014robust}, low-rank representation \cite{wang2022convergence,liu2013efficient,liu2017subspace,maunu2019well,lerman2025global}, and spectral clustering  \cite{pmlr-v139-li21f,vidal2005generalized}. In this work, we present a new interpretation of diffusion models from the perspective of subspace clustering. This is the first time that diffusion models have been analyzed through this lens, offering new insights into how these models can effectively learn complex data distributions by leveraging the intrinsic low-dimensional subspaces within the data.

\section{Practical Implications of Our Theoretical Results} \label{sec:experiments} 

Building on the results in \Cref{sec:results}, we study the practical value of our theoretical investigation by showing that: (i) our study of low-dimensional distribution learning offers key insights into the generalization behavior of real-world diffusion models (see \Cref{sec:appli_1}), and (ii) the basis vectors of the identified low-dimensional subspaces correspond to different semantic task vectors in practice, enabling controlled editing of specific attributes in content generation (see \Cref{sec:appli_2}).

\subsection{Phase Transition of Generalization in Real-World Diffusion Models}\label{sec:appli_1}
In this subsection, we conduct experiments on both synthetic \MoLRG~data and real image datasets to train U-Net-based diffusion models. Consistent with the predictions of \Cref{thm:2} and \Cref{thm:4}, we observe a similar phase transition in generalization from failure to success, on both synthetic datasets and real image datasets.  %As discussed in the end of \Cref{subsec:cont}, achieving good generalization is closely tied to accurately learning the underlying distribution in diffusion models. 
More specifically, we empirically show that the minimum number of training samples, denoted by $N_{\mathrm{min}}$, required for  generalization scales linearly with the intrinsic dimension, denoted by $\mathrm{ID}$, on both synthetic and real datasets, 
\begin{equation} \label{eq:linear_relation}
    N_{\mathrm{min}} \approx c \cdot \mathrm{ID}, 
\end{equation}
where $c > 0$ is a constant. As discussed at the end of \Cref{subsec:cont}, achieving good generalization is closely tied to accurately learning the underlying distribution in diffusion models. Therefore, our theoretical framework not only explains distribution learning in the \MoLRG~model but also offers valuable insights into the generalization of real-world diffusion models. Now, we introduce the experimental setup. 

\paragraph{Measuring generalization in diffusion models.} Recent studies \cite{zhang2024emergence,kadkhodaie2023generalization} have shown that diffusion models trained under different settings can reproduce each other’s outputs. This reproducibility provides strong evidence of generalization \cite{kadkhodaie2023generalization}. Furthermore, \cite{zhang2024emergence} empirically demonstrates that this phenomenon co-emerges with the models’ ability to generate novel samples distinct from their training data. Together, these findings suggest that the ability of diffusion models to generate new samples can serve as an indicator of good generalization. Let $\{\bm y^{(j)}\}_{j=1}^M$ denote $M$ samples generated by a diffusion model trained on the dataset $\{\bm x^{(i)}\}_{i=1}^N$. Then, we adopt a variant of the generalization (GL) score proposed in \cite{zhang2024emergence}, defined as follows:
\begin{align}\label{def:gl_score}
    \text{GL} := \frac{1}{M}\sum_{j=1}^M \mathbb{I}\left(\min_{i\in[N]} \left\|\bm \Psi \left(\bm x^{(i)}\right) -  \bm \Psi \left(\bm y^{(j)}\right)\right\| \geq \delta\right). 
\end{align}
Here, $\delta$ is a pre-defined threshold, $\bm \Psi(\bm x)$ denotes a descriptor function applied to $\bm x$, and $\mathbb{I}(\cdot)$ is the indicator function, where $\mathbb{I}(x \ge \delta) = 1$ if $x \ge \delta$ and $0$ otherwise. 
% Following \cite{zhang2023emergence}, we consider a diffusion model to successfully generalize if $\text{GL} \geq 0.95$. 
For the \MoLRG\ distribution, we set $\bm \Psi(\bm x)$ as the identity function and $\delta$ is defined in \eqref{eq:def_delta} in Appendix \ref{app:exp_setting_MoLRG_unet}. For real-world datasets, we set $\bm \Psi(\bm x)$ as the self-supervised copy detection descriptor introduced in \cite{pizzi2022self}, a neural feature extractor tailored for copy detection tasks, and set $\delta = 0.8$ according to \cite{pizzi2022self, somepalli2023understanding}. Additional details are provided in Appendix \ref{app sec:expset_appli_1}.

\begin{figure*}[t]
% \captionsetup[subfigure]{justification=centering}
\begin{center}
    \includegraphics[width = .9\linewidth]{figure/Phase-transition-real-MoG-dataset.jpg} 
    \caption{\textbf{Phase transition of generalization using U-Net.} Diffusion models with a U-Net architecture are trained on synthetic data sampled from the \MoLRG~distribution (left column; $K = 2$, $n = 48$, varying intrinsic dimensions) and on real image datasets: CIFAR-10, CelebA, FFHQ, and AFHQ (right column). The GL score is plotted against the ratio of training samples to the intrinsic dimension (top row) and to the square of the intrinsic dimension (bottom row). A black dashed line fits the data across different intrinsic dimensions (datasets) for each figure. A GL score above 0.95 (within the dark grey region) indicates good generalization, while a score below 0.95 ( within the light grey region) indicates poor generalization.
    % \qq{the title needs to be consistent, either all capitalized, or only the first word. Currently, "MoLRG \textbf{d}istribution" and "Real Datasets" are not consistent. Please be careful on all the figures} 
    } \label{fig:phase-transition-UNet}
\end{center}
\end{figure*} 

Intuitively, the GL score measures the dissimilarity between the generated samples $\{\bm y^{(j)}\}_{j=1}^M$ and the training samples $\{\bm x^{(i)} \}_{i=1}^N$ in the feature space. A higher GL score indicates that the generated samples are less similar to the training data, reflecting better generalization. In this work, we consider a diffusion model to generalize well when $\text{GL} > 0.95$, i.e.,  at least 95\% of the generated samples are distinct from the training set.

\paragraph{Experiments on synthetic data.} First, we demonstrate a phase transition in generalization when training U-Net on synthetic data generated from the \MoLRG~distribution. We use the \MoLRG\ distribution defined in \eqref{eq:MoG noise} and set the data dimension $n=48$, the number of components $K=2$, the noise level $\bm e_i = \bm 0$, and mixing proportion $\pi_k=1/2$. Then, we set each cluster to contain an equal number of samples and the total number is $N$. The intrinsic dimension of each subspace is set to $d$, identical across clusters. Because the subspace bases are orthogonal, the total intrinsic dimension of the distribution $\mathrm{ID} = Kd$. We optimize the training loss in \eqref{eq:em loss} using a DAE $\bm x_{\bm \theta}(\cdot,t)$, parameterized by U-Net. The same U-Net architecture is used across all experiments. The detailed experimental settings are provided in Appendix \ref{app:exp_setting_MoLRG_unet}.

In \Cref{fig:phase-transition-UNet} (top-left), we plot the GL scores against the ratio $\log_2\left(N/\mathrm{ID}\right)$ by varying both $N$ and $\mathrm{ID}$. Here, different scatter colors correspond to different choices of $\mathrm{ID} = 8, 10, 12$. The GL scores plotted against $\log_2\left(N/\mathrm{ID}\right)$ consistently exhibit a sigmoid-shaped curve across different $\mathrm{ID}$ values. This suggests that, for a fixed model architecture, the generalization ability depends primarily on the ratio $N/\mathrm{ID}$ rather than on the values of $N$ or $\mathrm{ID}$ individually. Specifically, we fit all points using a sigmoid function (the black dashed curve shown in \Cref{fig:phase-transition-UNet} top-left), denoted by $f_{\MoLRG} \left(N/\mathrm{ID}\right)$, with details provided in Appendix \ref{app:exp_setting_MoLRG_unet}. For comparison, we plot the GL scores against $\log_2\left(N/\mathrm{ID}^2\right)$ in \Cref{fig:phase-transition-UNet} (bottom-left) and fit them with a sigmoid function. The data points deviate more from the fitted curve than the curve in the top plot. The stronger alignment between the data points and the fitted curve in the top plot confirms that $N/\mathrm{ID}$ is a better indicator for GL score.

Recall that $\text{GL} > 0.95$ indicates successful generalization. To identify $N_{\mathrm{min}}$, we solve $\text{GL}({N_{\mathrm{min}}}/\mathrm{ID}) \approx f_{\MoLRG} \left(N_{\mathrm{min}}/\mathrm{ID}\right)= 0.95$, which implies $c = f_{\MoLRG}^{-1}(0.95)$ in \eqref{eq:linear_relation}.\footnote{It is worth noting that the linear relationship between $N_{\mathrm{min}}$ and $\mathrm{ID}$ holds as long as the data points (plotting GL score against $N/\mathrm{ID}$) can be well-fitted by a function. Changing the threshold for successful generalization affects only the slope $c$ of the linear relationship.} 
% This yields a linear relationship between $N_{\mathrm{min}}$ and $\mathrm{ID}$, as expressed in \eqref{eq:linear_relation}, where $c = f_{\MoLRG}^{-1}(0.95)$. 
This demonstrates that achieving successful generalization requires $N_{\mathrm{min}}$ to scale linearly with $\mathrm{ID}$, thereby corroborating our theoretical findings in \Cref{thm:4}.

\begin{table}[t]
    \centering
    \begin{tabular}{lcccc}
        \hline
         & CIFAR-10 & CelebA & FFHQ & AFHQ \\
        \hline
        $\mathrm{ID}$  & 10.8 & 11.5 & 15.8 & 16.7 \\
        \hline
    \end{tabular}\smallskip 
    \caption{\textbf{Intrinsic dimensions $\mathrm{ID}$ for different real world datasets.}}
    \label{tab:id}
    \vspace{-0.1in}
\end{table}

\paragraph{Experiments on real image datasets.} Next, our results in \Cref{fig:phase-transition-UNet} (top-right) reveal a similar phase transition in generalization across several real-world image datasets, including CIFAR-10 \cite{cifar10}, CelebA \cite{liu2015faceattributes}, FFHQ \cite{ffhq}, and AFHQ \cite{choi2020stargan}. Following a similar experimental setup for \MoLRG, we use the same U-Net architecture for different datasets with extra experimental details provided in Appendix \ref{app:exp_setting_real_unet}. Then, we respectively plot the GL score against $\log_2(N/\mathrm{ID})$ and $\log_2(N/\mathrm{ID}^2)$ by varying the number of training samples $N$. However, because the intrinsic dimension of image datasets here is not known, we estimate it using the method described in Appendix \ref{app:exp_setting_real_dataset_rank}, with the resulting estimates summarized in \Cref{tab:id}.

Our results across different real image distributions also suggest that the GL score is primarily determined by the ratio $N/\mathrm{ID}$.
As shown in \Cref{fig:phase-transition-UNet} (top-right), the plot of the GL score against $\log_2\left(
N/\mathrm{ID}\right)$ yields nearly identical sigmoid-shaped curves across different datasets. This behavior is consistent with our observations for the \MoLRG~distribution. In contrast, \Cref{fig:phase-transition-UNet} (bottom-right) shows the GL scores plotted against $N/\mathrm{ID^2}$, where data points cannot be captured by a single function across datasets. This comparison further supports that $N/\mathrm{ID}$ is a more appropriate indicator for GL score.

Accordingly, we fit all the points of GL scores using the same function
$f_{\texttt{real}}\left(N/\mathrm{ID}\right)$ (the black dashed curve shown in \Cref{fig:phase-transition-UNet} top-right), with more details provided in Appendix \ref{app:exp_setting_real_unet}. In this case, training diffusion models on real image datasets requires at least $N_{\mathrm{min}} =  f^{-1}_{\texttt{real}}\left(0.95\right) \mathrm{ID}$ number of training samples to achieve good generalization. This aligns with the linear relationship in \eqref{eq:linear_relation} with constant $c = f^{-1}_{\texttt{real}}\left(0.95\right)$.

\subsection{Correspondence between Basis Vectors of Low-Dimensional Subspaces and Semantic Attributes}\label{sec:appli_2} 
% \qq{change meanings to task vectors, and the following paragraph needs to be updated}
In this subsection, we demonstrate that our theoretical insights provide valuable guidance for improving the controllability of image generation.
% Since real-world image data distributions can be approximated by the \MoLRG~distribution, 
We begin by outlining a method for identifying low-rank subspaces in diffusion models trained on real-world image datasets. Next, we demonstrate how to verify that the orthogonal basis vectors of the identified subspaces are semantic task vectors. As illustrated in \Cref{fig:meta_1}, these vectors can be leveraged to steer diffusion models to edit image attributes, such as gender, hairstyle, and color. While previous studies \cite{manor2023posterior,manor2024zero,tinaz2025emergence} have explored similar methods for image editing, our study offers a new perspective for understanding the method through the lens of a low-dimensional subspace. Building on this study, our concurrent work \cite{Chen2024} proposes a training-free method that enables controllable image editing.  

\paragraph{Identifying low-rank subspaces in diffusion models.}
Although the DAE $\bm x_{\bm \theta}(\cdot,t)$ of real-world diffusion models cannot be exactly written as the theoretical parameterization introduced in \eqref{eq:para MoG}, it is still possible to locally identify a low-rank subspace. 
Recent studies \cite{li2024understanding, Chen2024} have shown that $\bm x_{\bm \theta}(\cdot,t)$ can be well approximated using a first-order Taylor expansion: 
\begin{align}
    \bm x_{\bm \theta}(\bm x_t + \bm \delta_t,t) \approx \bm x_{\bm \theta}(\bm x_t,t) + \bm J_t \bm \delta_t,
\end{align}
where $\bm \delta_t$ is the steering direction and $\bm J_t = \nabla_{\bm x_t} \bm x_{\bm \theta}(\bm x_t,t) $ denotes the Jacobian of the DAE $\bm x_{\bm \theta}(\cdot,t)$ at $\bm x_t$.
As we empirically verify in Appendix \ref{app:exp_setting_real_dataset_rank}, $\bm J_t$ is often a low-rank matrix at certain timesteps $t$, indicating that its range spans a low-dimensional subspace around $\bm x_t$. To identify an orthonormal basis for this subspace, we apply an SVD to $\bm J_t$ and obtain $\bm J_t = \bm P \bm \Sigma \bm Q^T$, %Therefore, if we treat $\bm x_{\bm \theta}(\bm x_t,t)$ as a constant with respect to $\bm \delta_t$, 
%Here, an eigenvalue decomposition of the Jacobian $\bm J_t = \bm Q \bm \Sigma \bm Q^T$ reveals a structural resemblance to the parameterization in \eqref{eq:para MoG}, albeit in an asymmetric form. 
where $r:=\rank(\bm J_t)$,  $\bm P = [\bm p_{1}, \cdots, \bm p_{r} ] \in \mathcal{O}^{n \times r}$, $\bm Q = [\bm q_{1}, \cdots, \bm q_{r} ] \in \mathcal{O}^{n \times r}$, and $\bm \Sigma = \mathrm{diag}(\sigma_{1},\dots,\sigma_{r})$ with $\sigma_{1}\ge\dots\ge \sigma_{r}\ge 0$. Each $\bm p_i$ serves as an orthogonal basis vector for the subspace.

\begin{figure*}[t]
\begin{center}
    \begin{subfigure}{0.45\textwidth}
        \includegraphics[width=1\textwidth]{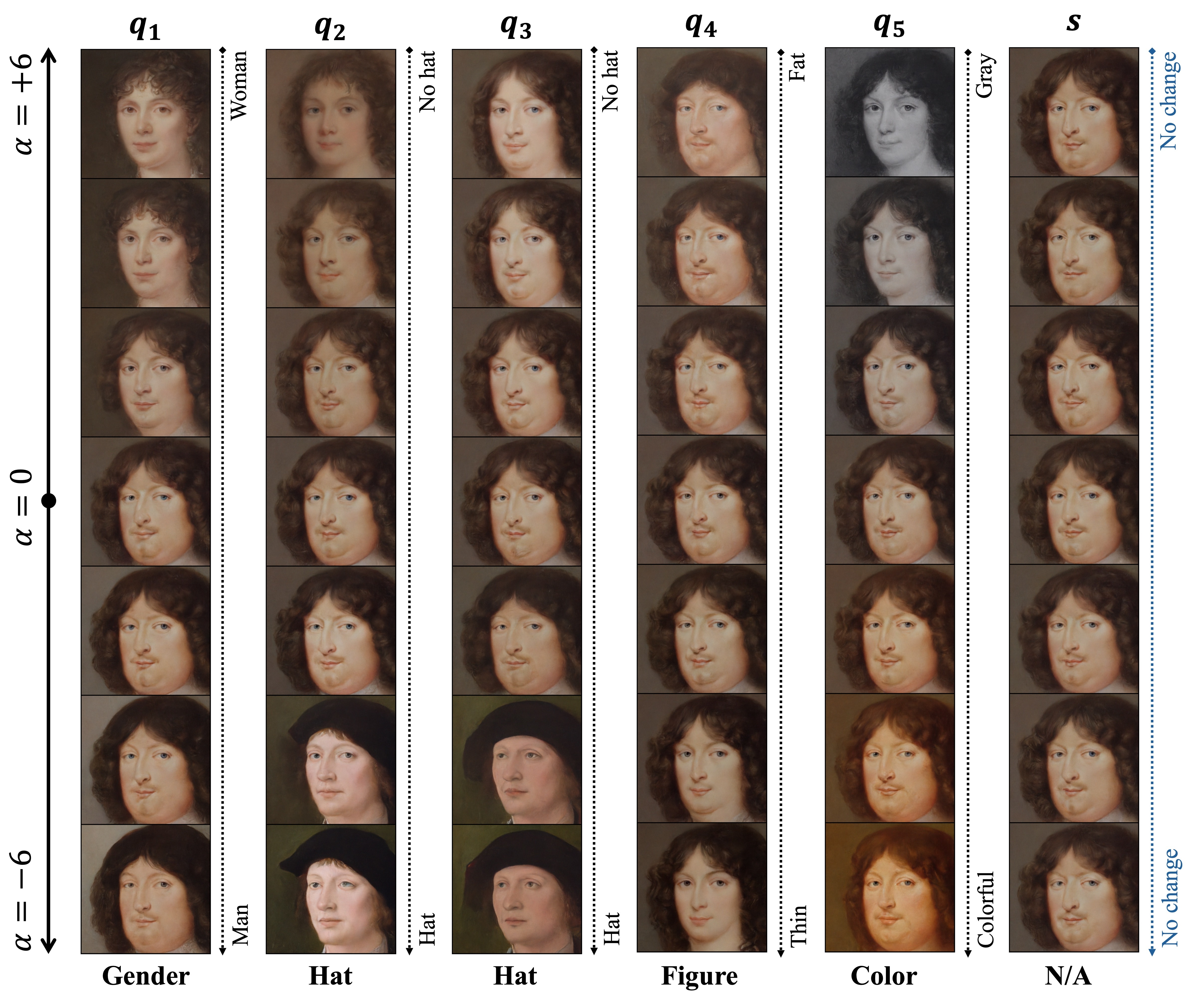}
    \caption{}
    \end{subfigure}  
    \begin{subfigure}{0.45\textwidth}
        \includegraphics[width=1\textwidth]{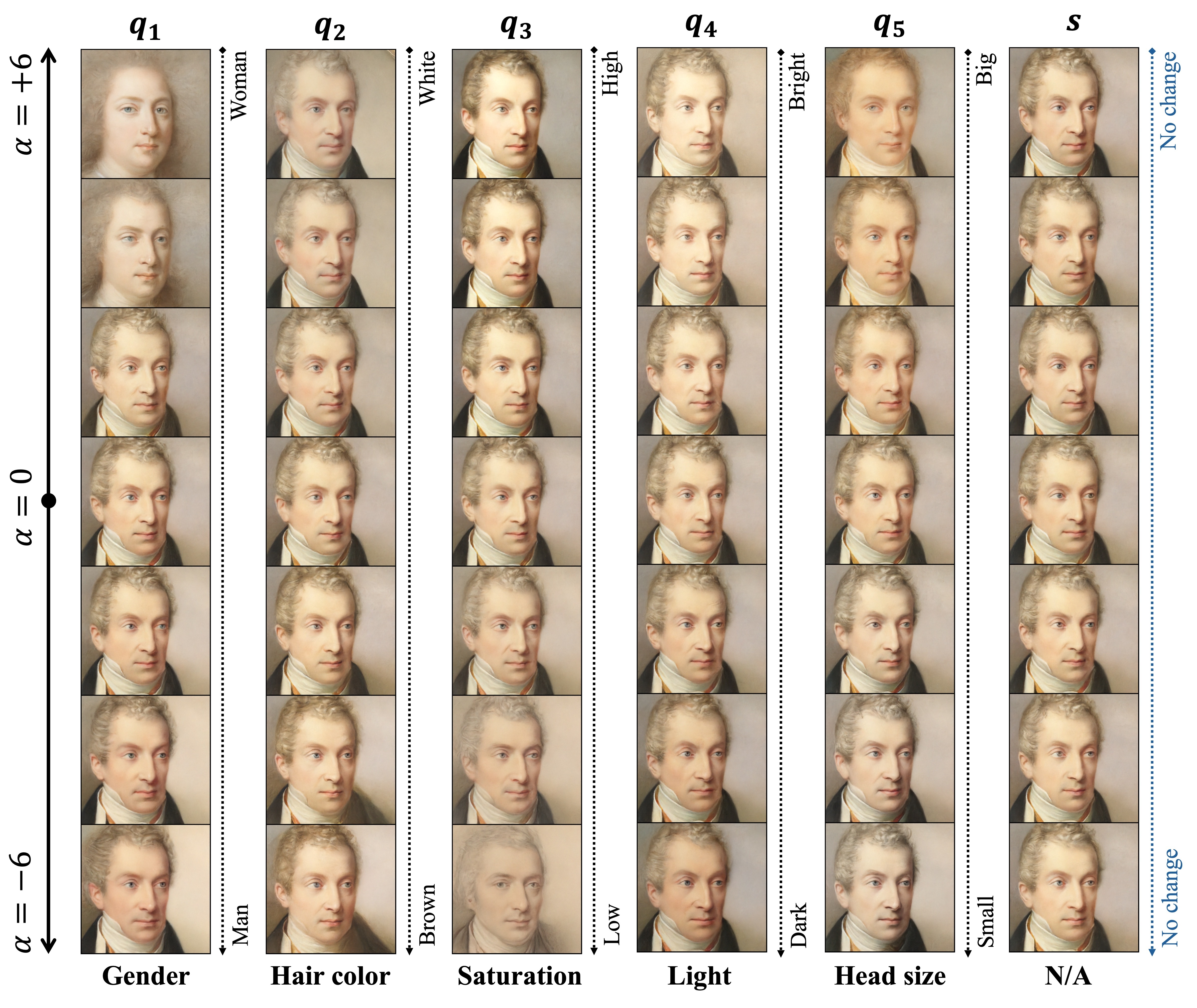} 
    \caption{}
    \end{subfigure}  
    \caption{\textbf{Correspondence between the singular vectors of the Jacobian of the DAE and semantic image attributes.} We use a pre-trained DDPM with U-Net on the MetFaces dataset \cite{metafaces}. We edit the original image $\bm x_0$ by changing $\bm x_t$ into $\bm x_t + \alpha \bm q_i$, where $\bm q_i$ is a singular vector of the Jacobian of the DAE $\bm x_{\bm \theta}(\bm x_t,t)$. 
    %(a, b) Some examples when $t = 0.7T$. (b) Ablation studies when $t = 0.1T$ and $0.9T$.    
    } 
    \label{fig:meta_1}
\end{center}
\end{figure*}

As such, if we choose $\bm \delta_t = \alpha \bm q_i $ to be one of the singular vectors $i \in [r]$, then we obtain  
\begin{align*}
    \bm x_{\bm \theta}(\bm x_t + \alpha \bm q_i,t) &\approx \bm x_{\bm \theta}(\bm x_t,t) + \sum_{j=1}^r \sigma_j \bm p_j \langle \bm q_j, \alpha \bm q_i \rangle = \bm x_{\bm \theta}(\bm x_t,t) + \alpha \sigma_i \bm p_i.
\end{align*}
Given that $\bm x_{\bm \theta}(\bm x_t,t) \approx \mathbb{E}[ \bm x_0 \mid \bm x_t ]$ serves as an estimate of the clean image, applying a perturbation $\bm \delta = \alpha \bm q_i$ modifies the original image $\bm x_0$ along the direction of the corresponding orthogonal basis vector $\bm p_i$ with a strength of $\alpha \sigma_i$. In the following, we empirically show that each $\bm p_i$ is often associated with semantic attributes.

\paragraph{Experimental implementation and results.} We use a pre-trained diffusion denoising probabilistic model (DDPM) \cite{ho2020denoising} on the MetFaces dataset \cite{metafaces}. We randomly select an image $\bm x_0$ from the dataset and use DDIM citep{song2020denoising} inversion to obtain the corresponding latent $\bm x_t$ at $t=0.7$. We choose the steering direction as the leading right singular vectors $\bm q_{1}, \ldots, \bm q_5$ and use $\tilde{\bm x}_t = \bm x_t + \alpha \bm q_i$ to generate new images with editing strength $\alpha \in [-6, 6]$. \Cref{fig:meta_1} shows that these singular vectors enable different semantic edits in terms of gender, hairstyle, and color of the image. For comparison, steering the image along a direction $\bm s$ drawn uniformly at random from the unit sphere results in almost \emph{no} perceptible change in the edited images. This implies that the low-dimensional subspace spanned by $\bm P$ is nontrivial, with its leading basis vectors corresponding to semantic task vectors. The experimental results for more images and ablation studies for $t=0.1$ and $0.9$ are shown in \Cref{fig:meta_more}.

\section{Conclusion \& Future Directions}\label{sec:conclusion}

In this work, we studied the training loss of diffusion models to investigate when and why they can learn the underlying distribution without suffering from the curse of dimensionality. Assuming that the data follow a \MoLRG~distribution—an assumption supported by extensive empirical evidence—we showed that, under an appropriate network parameterization, minimizing the training loss of diffusion models is equivalent to solving a subspace clustering problem. Based on this equivalence, we further showed that the optimal solutions to the training loss can recover the underlying subspaces when the minimum number of samples scales linearly with the intrinsic dimensionality of the data distribution. Moreover, we established a correspondence between the subspace basis and the semantic attributes of image data. 

Our work opens several new directions for advancing the theoretical understanding of diffusion models. First, as noted in the remarks of \Cref{thm:4}, while our work explains the generalization ability of diffusion models, it does not fully address the phenomenon of memorization or the phase transition from memorization to generalization. Future work should extend the current analysis to consider over-parameterized models and explore how these models contribute to memorization and generalization. Second, our study focuses on leveraging low-dimensional structures to understand the training process of diffusion models. However, the sampling process is also critical in diffusion models, as it influences the efficiency of generated samples. Third, our current analysis relies on simplifying assumptions, including the orthogonality and symmetry of subspaces as well as the hard-max parameterization; see Assumptions \ref{AS:2} and \ref{AS:3}. An important direction for future work is to relax these assumptions by considering more general subspace configurations (e.g., non-orthogonal or heterogeneous subspaces) and softmax parameterizations, and to understand how these more realistic settings affect the optimization landscape and generalization behavior of diffusion models. As discussed in \Cref{sec:prior-work}, many studies have exploited low-dimensional structures to improve the sampling rate. A key direction for future research is to analyze the sampling behavior of diffusion models in the \MoLRG~model.   

\acks{P. Wang is supported in part by the University of Macau under grants SRG2025-00043-FST and UMDF-TISF-I/2026/013/FST, and in part by the Macau Science and Technology Development Fund (FDCT) 0091/2025/ITP2. H. Zhang, Z. Zhang, S. Chen, and Q. Qu acknowledge support from NSF CAREER CCF-2143904, NSF CCF-2212066, NSF CCF-2212326, NSF IIS 2312842, NSF IIS 2402950, a gift grant from KLA, the MICDE Catalyst Grant, and a Google TPU Research Award. Y. Ma acknowledges support from the joint Simons Foundation-NSF DMS Grant 2031899, NSF IIS 2402951, and the ONR Grant N00014-22-1-2102. Y. Ma also acknowledges support from the startup fund from the University of Hong Kong and the JC STEM Lab fund by The Hong Kong Jockey Club Charities Trust.

The authors would also like to thank Laura Balzano (U. Michigan), Jeff Fessler (U. Michigan), Huikang Liu (SJTU), Dogyoon Song (UC Davis), Liyue Shen (U. Michigan), Rene Vidal (UPenn), and Zhihui Zhu (OSU) for stimulating discussions.  
}

% \qq{please double check all the references, and cite the journal/conf version, and update arxiv citation to published versions. For example, Binxu Wang and John J Vastola 2023 paper is published at TMLR, which we should cite that one}

% \acks{We would like to acknowledge support for this project
% from the National Science Foundation (NSF grant IIS-9988642)
% and the Multidisciplinary Research Program of the Department
% of Defense (MURI N00014-00-1-0637). }

% Manual newpage inserted to improve layout of sample file - not
% needed in general before appendices/bibliography.

% \newpage

% \appendix
% \section*{Appendix A.}
% \label{app:theorem}

% Note: in this sample, the section number is hard-coded in. Following
% proper LaTeX conventions, it should properly be coded as a reference:

%In this appendix we prove the following theorem from
%Section~\ref{sec:textree-generalization}:

% {\small 
% \bibliographystyle{unsrt}
% \bibliography{biblio/diffusion,biblio/mixture_model}
% }
\printbibliography
\newpage 
\appendix
% \begin{appendix}
% \begin{center}
% {\Huge \bf Appendices}
% \end{center} 
% \setcounter{section}{0}
% \renewcommand\thesection{\Alph{section}}

In the appendix, the organization is as follows. We first provide proof details for the results in Sections \ref{sec:setup} and \ref{sec:results} in Appendices \ref{app:pf sec2} and \ref{app sec:pf low Gau}, respectively. Then, we present our experimental setups for \Cref{sec:setup} in Appendix \ref{app sec:expsetting_sec2}, for \Cref{sec:results} in Appendix \ref {app sec:expsetting_sec3}, and for \Cref{sec:experiments} in Appendix \ref{app sec:expset_appli_1}. Finally, additional auxiliary results for proving the main theorems are provided in Appendix \ref{app sec:auxi}. For ease of exposition, we introduce additional notations. Given a Gaussian random vector $\bm x \sim {\cal N}(\bm \mu, \bm \Sigma)$,  if $\bm \Sigma \succ \bm 0$, with abuse of notation, we write its pdf as
\begin{align}\label{eq:pdf}
{\cal N}(\bm x; \bm \mu, \bm \Sigma) := \frac{1}{(2\pi)^{n/2}\det^{1/2}(\bm \Sigma)} \exp\left( -\frac{1}{2}(\bm x - \bm \mu)^T\bm \Sigma^{-1}(\bm x - \bm \mu) \right).     
\end{align}  

\section{Proofs in Section \ref{sec:setup}}\label{app:pf sec2} 

When the data $\bm{x}_0$ is drawn from the \MoLRG~distribution (see Definition \ref{def:MoG}), the simplicity of Gaussian components allows us to derive a closed-form expression for the ground-truth posterior mean $\mathbb{E}\left[ \bm{x}_0 \mid \bm{x}_t \right]$ for all $t \in (0,1]$ as follows. To proceed, for each $\bm \Sigma_k^\star$,  we write its eigen-decomposition as follows: 
\begin{align}\label{eq:eig}
\bm \Sigma_k^\star = \bm U_k^\star \bm \Lambda_k^\star \bm U_k^{\star T},
\end{align}
where $\bm \Lambda_k^\star  = \mathrm{diag}(\lambda_{k,1}^\star,\dots,\lambda_{k,d_k}^\star)$ is a diagonal matrix with $\lambda_{k,1}^\star \ge \dots \ge \lambda_{k,d_k}^\star > 0$ being its positive eigenvalues and $\bm U_k^\star \in \mathcal{O}^{n\times d_k}$ is an orthonormal matrix whose columns are the corresponding eigenvectors.   

\begin{prop}\label{prop:score MoG}
    Suppose that the underlying data distribution $p_{\rm data}$ is a mixture of low-rank Gaussian distributions in Definition \ref{def:MoG}. In the forward process of diffusion models, the pdf of $\bm x_t$ for each $t > 0$ is 
    \begin{align}\label{eq:pt(x)}
        p_t(\bm x) =  \sum_{k=1}^K \pi_k \mathcal{N}\left(\bm x; s_t\bm \mu_k^\star, s_t^2\bm \Sigma_k^\star + \gamma_t^2 \bm I_n \right),
    \end{align}
    where $\gamma_t := s_t\sigma_t$. Moreover, the score function of $p_t(\bm x)$ is 
    \begin{align}\label{eq:score pt(x)}
        \nabla \log p_t(\bm x) =  \frac{1}{\gamma_t^2}\frac{\sum_{k=1}^K \pi_k \mathcal{N}\left(\bm x; s_t\bm \mu_k^\star, s_t^2\bm \Sigma_k^\star + \gamma_t^2 \bm I_n \right)\left(\bm I_n - \bm U_k^\star\bm D_{k,t}^\star\bm U_k^{\star T}
 \right) \left( s_t\bm \mu_k^\star - \bm x \right)}{ \sum_{k=1}^K \pi_k \mathcal{N}\left(\bm x; s_t\bm \mu_k^\star, s_t^2\bm \Sigma_k^\star + \gamma_t^2 \bm I_n \right)},
    \end{align}
where $\bm D_{k,t}^\star = \mathrm{diag}\left( \frac{s_t^2\lambda_{k,1}^\star}{\gamma_t^2 + s_t^2\lambda_{k,1}^\star},\dots, \frac{s_t^2\lambda_{k,d_k}^\star}{\gamma_t^2 + s_t^2\lambda_{k,d_k}^\star} \right)$. 
\end{prop} 
\begin{proof}
    Let $Y \in \{1,\dots,K\}$ be a discrete random variable that denotes the value of components of the mixture model. Note that $\gamma_t = s_t\sigma_t$. It follows from \Cref{def:MoG} that $\mathbb{P}(Y=k) = \pi_k$ for each $k \in [K]$. Conditioned on $Y=k$, we have $\bm x_0 \sim \mathcal{N}(\bm \mu_k^\star,\bm \Sigma_k^\star)$. This, together with \eqref{eq:trans}, implies $\bm x_t \sim \mathcal{N}\left(s_t\bm \mu_k^\star, s_t^2\bm \Sigma_k^\star + \gamma_t^2 \bm I_n  \right)$. Therefore, we have
\begin{align*}
    p_t(\bm x) = \sum_{k=1}^K p_t(\bm x \mid Y=k) \mathbb{P}\left(Y=k\right) = \sum_{k=1}^K \pi_k \mathcal{N}\left(\bm x; s_t\bm \mu_k^\star, s_t^2\bm \Sigma_k^\star + \gamma_t^2 \bm I_n \right). 
\end{align*}
Next, we directly compute
\begin{align*}
    \nabla \log p_t(\bm x) & = \frac{\nabla p_t(\bm x)}{p_t(\bm x)} =  \frac{\sum_{k=1}^K \pi_k \mathcal{N}\left(\bm x; s_t\bm \mu_k^\star, s_t^2\bm \Sigma_k^\star + \gamma_t^2 \bm I_n \right)\left( s_t^2\bm \Sigma_k^\star + \gamma_t^2 \bm I_n 
 \right)^{-1}\left( s_t\bm \mu_k^\star - \bm x \right)}{ \sum_{k=1}^K \pi_k \mathcal{N}\left(\bm x; s_t\bm \mu_k^\star, s_t^2\bm \Sigma_k^\star + \gamma_t^2 \bm I_n \right)}. 
\end{align*}
Using \eqref{eq:eig} and the matrix inversion lemma, we compute
\begin{align}\label{eq:matrix inversion}
    \left( s_t^2\bm \Sigma^\star_k + \gamma_t^2 \bm I_n 
 \right)^{-1} & =  \left( s_t^2\bm U_k^\star \bm \Lambda_k^\star\bm U_k^{\star T}  + \gamma_t^2 \bm I_n 
 \right)^{-1} = \frac{1}{\gamma_t^2} \left( \bm I_n - \bm U_k^\star \bm D_{k,t}^\star \bm U_k^{\star T}\right), 
\end{align}
where $\bm D_{k,t}^\star = \mathrm{diag}\left( \frac{s_t^2\lambda_{k,1}^\star}{\gamma_t^2 + s_t^2\lambda_{k,1}^\star},\dots, \frac{s_t^2\lambda_{k,d_k}^\star}{\gamma_t^2 + s_t^2\lambda_{k,d_k}^\star} \right)$. This, together with the above equation, implies \eqref{eq:score pt(x)}. 
\end{proof} 

Using the above result and \eqref{eq:Tweedie}, we compute $\mathbb{E}\left[ \bm x_0\mid \bm x_t\right]$ when $\bm{x}_0$ is drawn from the \MoLRG~distribution as follows: 
\begin{lemma}\label{lem:E[x_0]}
    Suppose $\bm x_0$ is drawn from the \MoLRG~distribution with parameters $\{\pi_k\}_{k=1}^K$, $\{\bm \mu_k^\star\}_{k=1}^K$, and $\{\bm \Sigma^\star_k \}_{k=1}^K$. For each time $t \in (0,1]$, it holds that  
    \begin{align}\label{eq:E MoG}
    &\mathbb{E} \left[ \bm x_0\mid \bm x_t\right] = \sum_{k=1}^K w_{k,t}^\star(\bm x_t) \left( \bm \mu_k^\star + \bm U_k^\star \bm D_{k,t}^\star \bm U_k^{\star T}\left( \frac{\bm x_t}{s_t} - \bm \mu_k^\star \right) \right), 
    % &\text{where}\ \bm D_{k,t}^\star = \mathrm{diag}\left( \frac{s_t^2\lambda_{k,1}^\star}{\gamma_t^2 + s_t^2\lambda_{k,1}^\star},\dots, \frac{s_t^2\lambda_{k,d_k}^\star}{\gamma_t^2 + s_t^2\lambda_{k,d_k}^\star} \right)\ \text{and}\ \notag \\
    % & w_{k,t}^\star(\bm x) := \frac{\pi_k\mathcal{N}\left(\bm x; s_t\bm \mu_k^\star, s_t^2\bm \Sigma_k^\star + \gamma_t^2 \bm I_n \right)}{\sum_{l=1}^K \pi_l\mathcal{N}\left(\bm x; s_t\bm \mu_l^\star, s_t^2\bm \Sigma_l^\star  + \gamma_t^2 \bm I_n \right)}. 
    \end{align} 
where 
\begin{align*}
    \bm D_{k,t}^\star = \mathrm{diag}\left( \frac{s_t^2\lambda_{k,1}^\star}{\gamma_t^2 + s_t^2\lambda_{k,1}^\star},\dots, \frac{s_t^2\lambda_{k,d_k}^\star}{\gamma_t^2 + s_t^2\lambda_{k,d_k}^\star} \right),\  w_{k,t}^\star(\bm x) := \frac{\pi_k\mathcal{N}\left(\bm x; s_t\bm \mu_k^\star, s_t^2\bm \Sigma_k^\star + \gamma_t^2 \bm I_n \right)}{\sum_{l=1}^K \pi_l\mathcal{N}\left(\bm x; s_t\bm \mu_l^\star, s_t^2\bm \Sigma_l^\star  + \gamma_t^2 \bm I_n \right)}. 
\end{align*}
\end{lemma}
\begin{proof} 
    According to \eqref{eq:Tweedie} and \eqref{eq:score pt(x)} in \Cref{prop:score MoG}, we compute 
    \begin{align*}
        \mathbb{E}\left[\bm x_0 \mid \bm x_t \right] & = \frac{\bm x_t + \gamma_t^2 \nabla \log p_t(\bm x_t)}{s_t} = \sum_{k=1}^K w_{k,t}^\star(\bm x_t) \left(\bm \mu_k^\star + \bm U_k^\star \bm D_{k,t}^\star \bm U_k^{\star T}\left( \frac{\bm x_t}{s_t} - \bm \mu_k^\star \right) \right). 
\end{align*}
\end{proof}
This lemma implies that the ground-truth posterior mean is a convex combination of the terms $\bm \mu_k^\star + \bm U_k^\star \bm D_{k,t}^\star \bm U_k^{\star T}\left( {\bm x_t}/{s_t} - \bm \mu_k^\star \right)$, where the weights are $w_{k,t}^\star(\bm x_t)$ for each $k \in [K]$. 

\section{Proofs in Section \ref{sec:results}}\label{app sec:pf low Gau}

When $\bm \mu_k^\star = \bm 0$ and $\bm \Lambda_k^\star = \bm I_{d_k}$ for each $k \in [K]$, we focus on a special instance of the \MoLRG~distribution in Definition \ref{def:MoG} as follows:
\begin{align}\label{eq:MoG dist}
   \bm x_0 \sim \sum_{k=1}^K \pi_k \mathcal{N}\left(\bm 0, \bm U_k^\star \bm U_k^{\star T} \right). 
\end{align} 
This, together with \Cref{lem:E[x_0]}, yields that the optimal parameterization for the DAE to learn the above distribution is  
\begin{align}\label{eq:MoG1}
   \bm x_{\bm \theta}(\bm x_t, t) = \dfrac{s_t}{s_t^2 + \gamma_t^2}  \sum_{k=1}^K w_{k,t}(\bm \theta; \bm x_t) \bm U_k\bm U_k^T \bm x_t,
\end{align}
where $\bm U_k \in \mathcal{O}^{n\times d_k}$ for each $k \in [K]$ and 
\begin{align}\label{eq:weight}
   w_{k,t}(\bm \theta; \bm x_t) = \frac{\pi_k \mathcal{N}(\bm x_t; \bm 0, s_t^2\bm U_k\bm U_k^T + \gamma_t^2 \bm I)}{\sum_{l=1}^K \pi_l \mathcal{N}(\bm x_t; \bm 0, s_t^2\bm U_l\bm U_l^T + \gamma_t^2 \bm I)}. 
\end{align} 

\subsection{Proof of \Cref{thm:1}}\label{app:pf thm1} 

\begin{proof} 
    Plugging \eqref{eq:para Gau} and $\bm x_t = s_t\bm x^{(i)} + \gamma_t\bm \epsilon$  into the integrand of \eqref{eq:em loss} yields 
    \begin{align*}
        &\ \mathbb{E}_{\bm \epsilon \sim \mathcal{N}(\bm 0, \bm I_n)} \left[\left\| \frac{s_t}{s_t^2 + \gamma_t^2} \bm U\bm U^T\left( s_t\bm x^{(i)} + \gamma_t \bm \epsilon\right) -  \bm x^{(i)} \right\|^2\right] \\
        = &\  \left\| \frac{s_t^2}{s_t^2 + \gamma_t^2} \bm U\bm U^T \bm x^{(i)} -  \bm x^{(i)}  \right\|^2 + {\color{black}\frac{(s_t\gamma_t)^2}{(s_t^2 + \gamma_t^2)^2}} \mathbb{E}_{\bm \epsilon \sim \mathcal{N}(\bm 0, \bm I_n)} \left[  \|\bm U\bm U^T \bm \epsilon \|^2 \right] \\
        = & \left\| \frac{s_t^2}{s_t^2 + \gamma_t^2} \bm U\bm U^T \bm x^{(i)} -  \bm x^{(i)}  \right\|^2 + {\color{black}\frac{(s_t\gamma_t)^2d}{(s_t^2 + \gamma_t^2)^2}},
    \end{align*}
 	where the first equality follows from $\mathbb{E}_{\bm \epsilon}[\langle \bm x, \bm \epsilon \rangle] = 0$ for any given $\bm x \in \R^n$ due to $\bm \epsilon \sim \mathcal{N}(\bm 0, \bm I_n)$, and the second equality uses $\mathbb{E}_{\bm \epsilon} \left[  \|\bm U\bm U^T \bm \epsilon \|^2 \right] = \mathbb{E}_{\bm \epsilon} \left[  \|\bm U^T \bm \epsilon \|^2 \right] = \sum_{i=1}^d \mathbb{E}_{\bm \epsilon}\left[  \|\bm u_i^T \bm \epsilon \|^2 \right] = d$ due to $\bm U \in \mathcal{O}^{n\times d}$ and $\bm \epsilon \sim \mathcal{N}(\bm 0, \bm I_n)$. 
    This, together with $\gamma_t = s_t\sigma_t$ and \eqref{eq:em loss}, yields 
\begin{align*}
    \ell(\bm U) = \frac{1}{N}\sum_{i=1}^N \int_{0}^1 \lambda_t \left( \|\bm x^{(i)}\|^2 - \frac{1+2\sigma_t^2}{(1+\sigma_t^2)^2} \|\bm U^T\bm x^{(i)}\|^2 + \frac{\sigma_t^2d}{(1+\sigma_t^2)^2} \right) {\rm d}t. 
\end{align*}
Therefore, minimizing the above function in terms of $\bm U$ amounts to
\begin{align*}
    \min_{\bm U^T\bm U = \bm I_d} -\int_{0}^1\frac{(1+2\sigma_t^2)\lambda_t}{(1+\sigma_t^2)^2}   {\rm d}t  \frac{1}{N}\sum_{i=1}^N \| \bm U^T\bm x^{(i)}\|^2,
\end{align*}
which is equivalent to Problem (\ref{eq:PCA}). 
\end{proof}

\subsection{Proof of Theorem \ref{thm:2}}\label{app:pf thm2} 

\begin{proof}
	For ease of exposition, let 
    \begin{align*}
        \bm X = \begin{bmatrix}
            \bm x^{(1)} & \dots & \bm x^{(N)} 
        \end{bmatrix} \in \R^{n\times N},\ \bm A = \begin{bmatrix}
            \bm a_1 & \dots & \bm a_N 
        \end{bmatrix} \in \R^{d \times N},\ \bm E = \begin{bmatrix}
            \bm e_1 & \dots & \bm e_N 
        \end{bmatrix} \in \R^{n \times N}. 
    \end{align*}
Using this and \eqref{eq:Ua+e}, we obtain
    \begin{align}\label{eq1:thm2}
        \bm X = \bm U^\star \bm A + \bm E. 
    \end{align} 
Let $r_A := \mathrm{rank}(\bm A) \le \min\{d,N\}$ and $\bm A = \bm U_A \bm \Sigma_A \bm V_A^T$ be a singular value decomposition (SVD) of $\bm A$, where $\bm U_A \in \mathcal{O}^{d \times r_A}$, $\bm V_A \in \mathcal{O}^{N \times r_A}$, and $\bm \Sigma_A \in \R^{r_A\times r_A}$.
    It follows from \Cref{thm:1} that Problem (\ref{eq:em loss}) with the parameterization \eqref{eq:para Gau} is equivalent to Problem (\ref{eq:PCA}).
    
    (i) Suppose that $N \ge d$. Applying \Cref{lem:Gau} with $\varepsilon =   {1}/{(2c_1)}$ to $\bm A \in \R^{d\times N}$, it holds with probability at least $1 - 1/2^{N-d+1} - \exp\left(-c_2N\right)$ that 
    \begin{align}
     \sigma_{\min}(\bm A) = \sigma_d(\bm A) \ge \frac{\sqrt{N}-\sqrt{d-1}}{2c_1},
    \end{align}
    where $c_1,c_2> 0$ are constants depending polynomially only on the Gaussian moment. This implies $r_A = d$ and $\bm U_A \in \mathcal{O}^d$. Since Problem (\ref{eq:PCA}) is a PCA problem, the columns of any optimal solution $\hat{\bm U} \in \mathcal{O}^{n\times d}$ consist of left singular vectors associated with the top $d$ singular values of $\bm X$. This, together with Wedin's Theorem \cite{wedin1972perturbation} and \eqref{eq1:thm2}, yields 
    \begin{align*}
        \left\|\hat{\bm U}\hat{\bm U}^T - \bm U^\star\bm U^{\star T}\right\|_F = \left\|\hat{\bm U}\hat{\bm U}^T - (\bm U^\star\bm U_A)(\bm U^{\star}\bm U_A)^T\right\|_F \le \frac{2\|\bm E\|_F}{\sigma_{\min}(\bm A)} {  \le } \frac{4c_1\|\bm E\|_F}{\sqrt{N} - \sqrt{d-1}}. 
    \end{align*}
This, together with absorbing $4$ into $c_1$, yields \eqref{rst1:thm 2}.

(ii) Suppose that $N < d$. According to \Cref{lem:Gau} with $\varepsilon =   {1}/{(2c_1)}$, it holds with probability at least $1 - 1/2^{d-N+1} - \exp\left(-c_2d\right) $ that 
    \begin{align}\label{eq2:thm2}
     \sigma_{\min}(\bm A) = \sigma_N(\bm A) \ge \frac{\sqrt{d}-\sqrt{N-1}}{2c_1},
    \end{align}
    where $c_1,c_2> 0$ are constants depending polynomially only on the Gaussian moment. This implies $r_A = N$ and $\bm U_A \in \mathcal{O}^{d\times N}$. This, together with the fact that $\bm A = \bm U_A \bm \Sigma_A \bm V_A^T$ is an SVD of $\bm A$, yields that $\bm U^\star \bm A = \left(\bm U^\star\bm U_A\right) \bm \Sigma_A \bm V_A^T$ is an SVD of $\bm U^\star \bm A$ with $\bm U^\star\bm U_A \in \mO^{n\times N}$. Note that $\mathrm{rank}(\bm X) \le N$. Let \(\bm U_X\in\mathcal O^{n\times N}\) be an orthonormal basis containing the left singular vectors of \(\bm X\) associated with its nonzero singular values. This, together with Wedin's Theorem \cite{wedin1972perturbation} and \eqref{eq2:thm2}, yields 
    \begin{align}\label{eq3:thm2}
        \left\|\bm U_X\bm U_X^T - \bm U^\star\bm U_A\bm U_A^T\bm U^{\star T}\right\|_F  \le \frac{2\|\bm E\|_F}{\sigma_{\min}(\bm A)} {  \le } \frac{4c_1\|\bm E\|_F}{\sqrt{d} - \sqrt{N-1}}. 
    \end{align}  
Let \(\bm S := \bm U^\star(\bm I-\bm U_A\bm U_A^T)\). Then
\(\mathrm{rank}(\bm S)=d-N\), and $\mathrm{rank}\big([\bm U_X,\bm S]\big)\le d.$ Since \(N<d\), one can verify that for any feasible
\(\bar{\bm U}_X\in\mathcal{O}^{n\times(d-N)}\) satisfying
\(\bm U_X^T\bar{\bm U}_X=\bm 0\), the matrix
\[
\hat{\bm U}=\begin{bmatrix}
    \bm U_X & \bar{\bm U}_X
\end{bmatrix} \in \mathcal{O}^{n\times d}
\]
is an optimal solution of Problem \eqref{eq:PCA}. 

We choose \(\bar{\bm U}_X\) as an optimal solution of the following optimization problem: 
\begin{align}\label{eq:P}
\bar{\bm U}_X \in \arg\min_{\bm V\in\mathcal{O}^{n\times(d-N)},\ \bm U_X^T\bm V=\bm 0}
\|\bm V^T\bm S\|_F^2.    
\end{align}
\begin{itemize}[leftmargin=*]
    \item If \(n\ge 2d-N\), then the orthogonal complement of
\(\mathrm{span}([\bm U_X,\bm S])\) has dimension at least \(n -d \ge d-N\). Therefore, there exists a feasible \(\bm V\) such that \(\bm V^T\bm S=\bm 0\), and the optimal value is \(0\).

    \item If \(n<2d-N\), the orthogonal complement of
\(\mathrm{span}([\bm U_X,\bm S])\) has dimension at least \(n-d\). Thus, we can choose
\(\bm V_1\in\mathbb{R}^{n\times(n-d)}\) such that \(\bm V_1^T[\bm U_X,\bm S]=\bm 0\), and complete it with
\(\bm V_2\in\mathbb{R}^{n\times(2d-N-n)}\) so that
\(\bm V=[\bm V_1,\bm V_2]\) is feasible. Then
\[
\|\bm V^T\bm S\|_F^2
=
\|\bm V_2^T\bm S\|_F^2
\le 2d-N-n.
\]
\end{itemize}
These, together with $\bm S = \bm U^\star(\bm I-\bm U_A\bm U_A^T)$ and \eqref{eq:P}, yield
\begin{align}\label{eq5:thm2}
    \left\| \bar{\bm U}_X^T \bm U^\star(\bm I-\bm U_A\bm U_A^T) \right\|_F^2 \le \max\{0,2d-N-n\}.
\end{align}

Then, we obtain that 
\begin{align*}
     \left\|\hat{\bm U}\hat{\bm U}^T - \bm U^\star\bm U^{\star T}\right\|_F
     &=
     \left\|
     \bm U_X \bm U_X^T + \bar{\bm U}_X \bar{\bm U}_X^T
     - \bm U^\star\bm U_A\bm U_A^T\bm U^{\star T}
     - \bm U^\star(\bm I - \bm U_A\bm U_A^T)\bm U^{\star T}
     \right\|_F \\
     &\ge
     \left\|
     \bar{\bm U}_X \bar{\bm U}_X^T
     - \bm U^\star(\bm I - \bm U_A\bm U_A^T)\bm U^{\star T}
     \right\|_F
     -
     \left\|
     \bm U_X\bm U_X^T
     - \bm U^\star\bm U_A\bm U_A^T\bm U^{\star T}
     \right\|_F \\
     &\ge
     \sqrt{2(d-N)-2\max\left\{0,2d-(n+N)\right\}}
     -
     \frac{4c_1\|\bm E\|_F}{\sqrt{d}-\sqrt{N-1}} \\
     &=
     \sqrt{2\min\{d-N,n-d\}}
     -
     \frac{4c_1\|\bm E\|_F}{\sqrt{d}-\sqrt{N-1}},
\end{align*}
where the second inequality follows from \eqref{eq3:thm2}, \eqref{eq5:thm2}, and the identity
\[
\left\|
\bar{\bm U}_X \bar{\bm U}_X^T
-
\bm U^\star(\bm I-\bm U_A\bm U_A^T)\bm U^{\star T}
\right\|_F^2
=
2(d-N)
-
2
\left\|
\bar{\bm U}_X^T\bm U^\star(\bm I-\bm U_A\bm U_A^T)
\right\|_F^2 .
\]
\end{proof}

\subsection{Theoretical Justification of the DAE in \eqref{eq:para MoG}}\label{app sec:para MoG}  

Substituting $\pi_1=\dots=\pi_K$ into \eqref{eq:weight} yields 
\begin{align*}
       w_{k,t}(\bm \theta; \bm x_t) & = \frac{ \mathcal{N}(\bm x_t; \bm 0, s_t^2\bm U_k\bm U_k^T + \gamma_t^2 \bm I)}{\sum_{l=1}^K \mathcal{N}(\bm x_t; \bm 0, s_t^2\bm U_l\bm U_l^T + \gamma_t^2 \bm I)} = \frac{\exp\left(  \phi_t\left\| \bm U_k^T \bm x_t \right\|^2 \right)}{\sum_{l=1}^K \exp\left(  \phi_t\left\| \bm U_l^T \bm x_t \right\|^2 \right)},
\end{align*}
where the second equality follows from \eqref{eq:pdf}, \eqref{eq:matrix inversion}, $d_1=\dots=d_K$, and $\phi_t := s_t^2/(2\gamma_t^2(s_t^2+\gamma_t^2))$. Noting $\bm x_t = s_t\bm x_0 + \gamma_t \bm \epsilon$, we compute
\begin{align*}
    \mathbb{E}_{\bm \epsilon}\left[ \|\bm U_k^T(s_t\bm x_0 + \gamma_t \bm \epsilon)\|^2 \right] = s_t^2 \|\bm U_k^T\bm x_0\|^2 + \gamma_t^2\mathbb{E}_{\bm \epsilon}[\|\bm U_k^T\bm \epsilon\|^2] = s_t^2 \|\bm U_k^T\bm x_0\|^2 + \gamma_t^2 d,
\end{align*} 
where the first equality is due to $\bm \epsilon \sim \mathcal{N}(\bm 0, \bm I_n)$ and $\mathbb{E}_{\bm \epsilon}[\langle \bm U_k^T\bm x_0, \bm U_k^T\bm \epsilon \rangle] =  0$ for each $k \in [K]$. 
This implies that when the random fluctuation of
\(\|\bm U_k^T(s_t\bm x_0+\gamma_t\bm\epsilon)\|^2\) is small compared to its mean, we can approximate $w_{k,t}(\bm \theta; \bm x_t)$ in  \eqref{eq:MoG1} well by
\begin{align*}
   w_{k,t}(\bm \theta; \bm x_t) & \approx \frac{\exp\left( \phi_t \left( s_t^2\|\bm U_k^T\bm x_0\|^2 + \gamma_t^2 d \right) \right)}{\sum_{l=1}^K \exp\left(  \phi_t  \left( s_t^2\|\bm U_l^T\bm x_0\|^2 + \gamma_t^2 d \right) \right)}. 
\end{align*} 
This softmax function can be further approximated by the hardmax function. Therefore, we obtain the parameterization \eqref{eq:wk1}. 

\subsection{Proof of Theorem \ref{thm:3}}\label{app:pf thm3} 

\begin{proof} 
    Plugging \eqref{eq:para MoG} into the integrand of \eqref{eq:em loss} yields 
    \begin{align*} 
         &\  \mathbb{E}_{\bm \epsilon} \left[\left\| \frac{s_t}{s_t^2 + \gamma_t^2} \sum_{k=1}^K \hat{w}_k(\bm \theta; \bm x^{(i)}) \bm U_k \bm U_k^{T} (s_t\bm x^{(i)} + \gamma_t \bm \epsilon)  -  \bm x^{(i)} \right\|^2\right] \notag  \\ 
         = &\  \left\| \frac{s_t^2}{s_t^2 + \gamma_t^2} \sum_{k=1}^K \hat{w}_k(\bm \theta; \bm x^{(i)}) \bm U_k \bm U_k^{T} \bm x^{(i)} - \bm x^{(i)} \right\|^2 + \frac{(s_t\gamma_t)^2}{(s_t^2 + \gamma_t^2)^2} \mathbb{E}_{\bm \epsilon} \left[ \left\|\sum_{k=1}^K\hat{w}_k(\bm \theta; \bm x^{(i)}) \bm U_k \bm U_k^{T} \bm \epsilon \right\|^2 \right] \\
         = &\   \frac{s_t^2}{s_t^2 + \gamma_t^2} \sum_{k=1}^K \left( \frac{s_t^2}{s_t^2 + \gamma_t^2} \hat{w}_k^2(\bm \theta; \bm x^{(i)}) - 2\hat{w}_k(\bm \theta; \bm x^{(i)}) \right) \|\bm U_k^{T} \bm x^{(i)}\|^2 + \|\bm x^{(i)}\|^2 + \frac{(s_t\gamma_t)^2d}{(s_t^2 + \gamma_t^2)^2} \sum_{k=1}^K\hat{w}_k^2(\bm \theta; \bm x^{(i)}),
    \end{align*}
   where the first equality follows from $\mathbb{E}_{\bm \epsilon} [\langle \bm x, \bm \epsilon \rangle] = 0$ for any fixed $\bm x\in \R^n$ due to $\bm \epsilon \sim \mathcal{N}(\bm 0, \bm I_n)$, and the last equality uses $\bm U_k \in \mathcal{O}^{n\times d}$ and $\bm U_k^T\bm U_l = \bm 0$ for all $k \neq l$.  This, together with \eqref{eq:em loss} and $\gamma_t = s_t\sigma_t$, yields
    \begin{align*}
    \ell(\bm \theta) & = \frac{1}{N}\sum_{i=1}^N\sum_{k=1}^K  \int_0^1 \frac{\lambda_t}{1 + \sigma_t^2} \left(  \frac{1}{1 + \sigma_t^2} \hat{w}_k^2(\bm \theta; \bm x^{(i)}) - 2\hat{w}_k(\bm \theta; \bm x^{(i)})\right){\rm d}t  \|\bm U_k^T\bm x^{(i)}\|^2 + \\
    & \frac{1}{N} \int_0^1 \lambda_t {\rm d}t \sum_{i=1}^N \|\bm x^{(i)}\|^2  +  \left(\int_0^1 \frac{\sigma_t^2\lambda_t}{(1 + \sigma_t^2)^2} {\rm d}t\right) \frac{d}{N} \sum_{i=1}^N \sum_{k=1}^K \hat{w}_k^2(\bm \theta; \bm x^{(i)}). 
    \end{align*} 
    According to \eqref{eq:para MoG}, we partition \([N]\) into
\(\{C_k(\bm \theta)\}_{k=1}^K\) defined in \eqref{eq:Ck}.  Then, by the definition of the hardmax weights in \eqref{eq:para MoG}, we have
\[
\hat w_k(\bm\theta;\bm x^{(i)})=
\begin{cases}
1, & i\in C_k(\bm\theta),\\
0, & i\notin C_k(\bm\theta).
\end{cases}
\]
Then, we obtain 
\begin{align*}
\sum_{i=1}^N \sum_{k=1}^K  \hat{w}_k^2(\bm \theta; \bm x^{(i)})
= \sum_{k=1}^K \sum_{i \in C_k(\bm \theta)} 1 = N.      
\end{align*}
    Using this partition in the above loss function, minimizing \(\ell(\bm\theta)\) is equivalent to minimizing  
    \begin{align*}
        & \frac{1}{N}\sum_{i=1}^N\sum_{k=1}^K  \int_0^1 \frac{\lambda_t}{1 + \sigma_t^2} \left(  \frac{1}{1 + \sigma_t^2} \hat{w}_k^2(\bm \theta; \bm x^{(i)}) - 2 \hat{w}_k(\bm \theta; \bm x^{(i)})\right){\rm d}t  \|\bm U_k^T\bm x^{(i)}\|^2 \\
        = & \left(\int_0^1 \frac{\lambda_t}{1+\sigma_t^2}  \left(\frac{1}{1+\sigma_t^2} - 2\right) {\rm d}t\right) \frac{1}{N} \sum_{k=1}^K \sum_{ i \in C_k(\bm \theta)}  \|\bm U_k^T\bm x^{(i)}\|^2.
    \end{align*}
    Since $\frac{\lambda_t}{1+\sigma_t^2}  \left(\frac{1}{1+\sigma_t^2} - 2\right) < 0$ for all $t \in [0,1]$, minimizing the above function is equivalent to 
    \begin{align*}
        \max_{\bm \theta} \frac{1}{N} \sum_{k=1}^K \sum_{ i \in C_k(\bm \theta)} \|\bm U_k^T\bm x^{(i)}\|^2 \qquad \mathrm{s.t.}\ \left[
            \bm U_1\ \dots\ \bm U_K
        \right] \in \mathcal{O}^{n\times dK}. 
    \end{align*}
\end{proof}

\subsection{Proof of \Cref{thm:4}}\label{app:pf thm4}

We first establish two basic consequences of the hard assignment rule. The first shows that the ground-truth subspace bases recover the true mixture assignments. 
The second compares the objective value at a global optimizer with that at the ground-truth subspaces. For ease of exposition, let
\[
\bm\theta^\star:=\{\bm U_k^\star\}_{k=1}^K,
\quad
C_k^\star:=\{i\in[N]:z_i=k\},
\]
where \(z_i\) denotes the latent component label in \Cref{AS:1}. 
\begin{prop}\label{prop:true-assignment-optimality}
Suppose that Assumptions~\ref{AS:1}, \ref{AS:2}, and \ref{AS:3} hold,
$d \gtrsim \log N$, and 
\begin{align}\label{eq:noise 1}
\|\bm e_i\|
<
\frac{\sqrt d-2\sqrt{\log N}-2}{2},
\quad \forall i\in[N].
\end{align}
Then, with probability at least \(1-2N^{-1}\), the following statements hold:\\
(i) It holds that 
\begin{align}\label{eq:true Ck}
C_k(\bm\theta^\star)=C_k^\star,\quad \forall k\in[K].
\end{align}
(ii) For any optimal solution
\(\hat{\bm\theta}:=\{\hat{\bm U}_k\}_{k=1}^K\) of Problem~\eqref{eq:SC}, we have
\begin{align}\label{eq:optimality gap}
\sum_{i=1}^N \|\bm a_i\|^2 - \sum_{l=1}^K\sum_{k=1}^K
\sum_{i\in C_k(\hat{\bm\theta})\cap C_l^\star}
\|\hat{\bm U}_k^T\bm U_l^\star\bm a_i\|^2 \le
6\delta N\sqrt d+N\delta^2,
\end{align}
where \(\delta:=\max_{i\in[N]}\|\bm e_i\|\).
\end{prop}
\begin{proof}
Suppose that \eqref{eq:norm Gau} holds for all \(i\in[N]\), which happens with probability at least \(1 - 2N^{-1}\) according to \Cref{lem:norm Gau}. Then, for all \(i\in[N]\),
\begin{align}\label{eq0:prop true assignment}
\sqrt d-(2\sqrt{\log N}+2)
\le
\|\bm a_i\|
\le
\sqrt d+(2\sqrt{\log N}+2).
\end{align}

We first prove part (i). For any \(i\in C_k^\star\), \Cref{AS:1} gives
\(\bm x^{(i)}=\bm U_k^\star\bm a_i+\bm e_i\). Therefore,
\begin{align}
\|\bm U_k^{\star T}\bm x^{(i)}\|
&=
\|\bm a_i+\bm U_k^{\star T}\bm e_i\|
\ge
\|\bm a_i\|-\|\bm e_i\|, \label{eq:true-proj}\\
\|\bm U_l^{\star T}\bm x^{(i)}\|
&=
\|\bm U_l^{\star T}\bm e_i\|
\le
\|\bm e_i\|,
\quad \forall l\neq k, \label{eq:false-proj}
\end{align}
where the second line uses \(\bm U_l^{\star T}\bm U_k^\star=\bm 0\) for \(l\neq k\). These, together with \eqref{eq:noise 1} and \eqref{eq0:prop true assignment}, yield
\[
\|\bm U_k^{\star T}\bm x^{(i)}\|
>
\|\bm U_l^{\star T}\bm x^{(i)}\|,
\quad \forall l\neq k.
\]
Hence, \(k\) is the unique maximizer in the definition of \(C_k(\bm\theta^\star)\), and thus
\(i\in C_k(\bm\theta^\star)\). This implies
\(C_k^\star\subseteq C_k(\bm\theta^\star)\) for all \(k\in[K]\).
Since both \(\{C_k^\star\}_{k=1}^K\) and
\(\{C_k(\bm\theta^\star)\}_{k=1}^K\) form partitions of \([N]\), we obtain
\eqref{eq:true Ck}.

We next prove part (ii). For ease of exposition, let 
\[
f(\bm\theta) := \sum_{k=1}^K
\sum_{i\in C_k(\bm\theta)}
\|\bm U_k^T\bm x^{(i)}\|^2.
\]
By part (i), we have \(C_k(\bm\theta^\star)=C_k^\star\) for all \(k\in[K]\). Hence,
\begin{align}\label{eq:f theta star}
f(\bm\theta^\star)
&=
\sum_{k=1}^K
\sum_{i\in C_k^\star}
\|\bm U_k^{\star T}\bm x^{(i)}\|^2
=
\sum_{k=1}^K
\sum_{i\in C_k^\star}
\|\bm a_i+\bm U_k^{\star T}\bm e_i\|^2 \notag \\
&=
\sum_{i=1}^N\|\bm a_i\|^2
+
2\sum_{k=1}^K
\sum_{i\in C_k^\star}
\langle \bm a_i,\bm U_k^{\star T}\bm e_i\rangle
+
\sum_{k=1}^K
\sum_{i\in C_k^\star}
\|\bm U_k^{\star T}\bm e_i\|^2.
\end{align}
On the other hand,
\begin{align}\label{eq:f theta hat}
f(\hat{\bm\theta})
&=
\sum_{k=1}^K
\sum_{i\in C_k(\hat{\bm\theta})}
\|\hat{\bm U}_k^T\bm x^{(i)}\|^2 =
\sum_{l=1}^K\sum_{k=1}^K
\sum_{i\in C_k(\hat{\bm\theta})\cap C_l^\star}
\|\hat{\bm U}_k^T(\bm U_l^\star\bm a_i+\bm e_i)\|^2 \notag \\
&=
\sum_{l=1}^K\sum_{k=1}^K
\sum_{i\in C_k(\hat{\bm\theta})\cap C_l^\star}
\left(
\|\hat{\bm U}_k^T\bm U_l^\star\bm a_i\|^2
+
2\langle
\bm a_i,
\bm U_l^{\star T}\hat{\bm U}_k\hat{\bm U}_k^T\bm e_i
\rangle
\right) \notag \\
&\quad+
\sum_{k=1}^K
\sum_{i\in C_k(\hat{\bm\theta})}
\|\hat{\bm U}_k^T\bm e_i\|^2.
\end{align}
Since \(\hat{\bm\theta}\) is an optimal solution of Problem~\eqref{eq:SC}, we have
\(f(\hat{\bm\theta})\ge f(\bm\theta^\star)\). Combining this with
\eqref{eq:f theta star} and \eqref{eq:f theta hat} gives
\begin{align*}
\sum_{i=1}^N \|\bm a_i\|^2
-
\sum_{l=1}^K\sum_{k=1}^K
\sum_{i\in C_k(\hat{\bm\theta})\cap C_l^\star}
\|\hat{\bm U}_k^T\bm U_l^\star\bm a_i\|^2 
&\le
2\sum_{l=1}^K\sum_{k=1}^K
\sum_{i\in C_k(\hat{\bm\theta})\cap C_l^\star}
\left|
\langle
\bm a_i,
\bm U_l^{\star T}\hat{\bm U}_k\hat{\bm U}_k^T\bm e_i
\rangle
\right| \\
& +
\sum_{k=1}^K
\sum_{i\in C_k(\hat{\bm\theta})}
\|\hat{\bm U}_k^T\bm e_i\|^2
+
2\sum_{k=1}^K
\sum_{i\in C_k^\star}
\left|
\langle \bm a_i,\bm U_k^{\star T}\bm e_i\rangle
\right|.
\end{align*}
Using \(\|\bm e_i\|\le\delta\) and
\(\bm U_k^\star,\hat{\bm U}_k\in\mathcal O^{n\times d}\), we obtain
\[
\sum_{i=1}^N \|\bm a_i\|^2
-
\sum_{l=1}^K\sum_{k=1}^K
\sum_{i\in C_k(\hat{\bm\theta})\cap C_l^\star}
\|\hat{\bm U}_k^T\bm U_l^\star\bm a_i\|^2
\le
4\delta\sum_{i=1}^N\|\bm a_i\|+N\delta^2.
\]
Finally, by \eqref{eq0:prop true assignment} and \(d\gtrsim\log N\), we have $\|\bm a_i\|\le \sqrt d+2\sqrt{\log N}+2\le 3\sqrt d/2,$
and hence
\[
4\delta\sum_{i=1}^N\|\bm a_i\|+N\delta^2
\le
6\delta N\sqrt d+N\delta^2.
\]
This proves \eqref{eq:optimality gap}.
\end{proof}

We now show that, under a sufficiently small noise level, any global solution of the subspace clustering problem recovers the true assignments up to a permutation. The key challenge is that the overlaps \(C_r(\hat{\bm\theta})\cap C_k^\star\) are data-dependent subsets. To address this, we use \Cref{lem:cov arbitrary subset}, which holds uniformly over arbitrary subsets.

\begin{prop}\label{prop:assignment recovery}
Suppose that Assumptions~\ref{AS:1}, \ref{AS:2}, and \ref{AS:3} hold. Let
\(\hat{\bm\theta}=\{\hat{\bm U}_k\}_{k=1}^K\) be an optimal solution of Problem~\eqref{eq:SC} and
\(N_{\min}:=\min_{k\in[K]}N_k\). Suppose that \(d\gtrsim \log N\) and
\begin{align}\label{eq:delta}
\|\bm e_i\| \le c_\delta
\min\left\{\frac{N_{\min}}{K^3N\sqrt d},
\frac{\sqrt d}{N}
\right\},\quad \forall i \in [N], 
\end{align}
where \(c_\delta>0\) is a sufficiently small absolute constant. Then, with probability at least
\[
1
-
2N^{-1}
-
CK\exp\left(
Cd\log(CK)
-
c\frac{N_{\min}}{K^6}
\right),
\]
there exists a permutation \(\Pi:[K]\to[K]\) such that
\[
C_{\Pi(k)}(\hat{\bm\theta})=C_k^\star,\qquad \forall k\in[K],
\]
where \(c,C>0\) are absolute constants.
\end{prop}
 \begin{proof}
Recall that \(\delta:=\max_{i\in[N]}\|\bm e_i\|\). 
By \eqref{eq:delta}, \(d\gtrsim\log N\), and a sufficiently small choice of \(c_\delta>0\), the noise condition in \Cref{prop:true-assignment-optimality}
holds. Therefore, according to \Cref{prop:true-assignment-optimality},  it holds with probability at least \(1-2N^{-1}\) that 
\begin{align}\label{eq:optimality gap use}
\sum_{i=1}^N \|\bm a_i\|^2 -
\sum_{l=1}^K\sum_{r=1}^K
\sum_{i\in C_r(\hat{\bm\theta})\cap C_l^\star}
\|\hat{\bm U}_r^T\bm U_l^\star\bm a_i\|^2
\le B :=6\delta N\sqrt d+N\delta^2. 
\end{align}
Using \eqref{eq:delta}, we obtain for sufficiently small \(c_\delta>0\) that
\begin{align}\label{eq:B-small}
B\le c_0\frac{N_{\min}}{K^3},
\quad
8B<d,
\end{align}
where \(c_0>0\) is a sufficiently small absolute constant. For ease of exposition, let
\[
N_{rk}:=|C_r(\hat{\bm\theta})\cap C_k^\star|,
\quad \forall r,k\in[K].
\]
For each \(k\in[K]\), choose
\(\Pi(k)\in\arg\max_{r\in[K]}N_{rk}\). Since
\(\sum_{r=1}^K N_{rk}=N_k\), we have
\begin{align}\label{eq2:prop assignment}
N_{\Pi(k)k}\ge \frac{N_k}{K}.    
\end{align}

Fix any \(k\in[K]\). From \eqref{eq:optimality gap use}, and using
\(\|\hat{\bm U}_r^T\bm U_l^\star\bm a_i\|\le \|\bm a_i\|\), we obtain
\begin{align}\label{eq:main-overlap-k}
B
&\ge
\sum_{i\in C_{\Pi(k)}(\hat{\bm\theta})\cap C_k^\star}
\left(
\|\bm a_i\|^2
-
\|\hat{\bm U}_{\Pi(k)}^T\bm U_k^\star\bm a_i\|^2
\right) \notag \\
&=
\left\langle
\bm I-\bm U_k^{\star T}\hat{\bm U}_{\Pi(k)}
\hat{\bm U}_{\Pi(k)}^T\bm U_k^\star,
\sum_{i\in C_{\Pi(k)}(\hat{\bm\theta})\cap C_k^\star}
\bm a_i\bm a_i^T
\right\rangle .
\end{align}
Conditioning on the latent labels, the vectors
\(\{\bm a_i:i\in C_k^\star\}\) are i.i.d. standard Gaussian vectors. We apply \Cref{lem:cov arbitrary subset} with \(\eta=1/4\), \(\kappa=1/K\),
\(M=N_k\), and \(m=d\). In this case,
\[
\gamma_K
=
\Phi^{-1}\left(\frac12+\frac{3}{8K}\right),
\]
and a Taylor expansion of \(\Phi\) around zero gives
\[
\gamma_K\asymp K^{-1},
\quad
\mu_K
:=
\frac{3}{4K}
-
\sqrt{\frac{2\gamma_K^2}{\pi}}
\exp\left(-\frac{\gamma_K^2}{2}\right)
\asymp K^{-3}.
\]
Therefore, with probability at least $1-C_1\exp\left(
C_1d\log(C_1K) - c_1{N_k}/{K^6}\right),$
the following bound holds simultaneously for every subset
\(\Omega\subseteq C_k^\star\) satisfying \(|\Omega|\ge N_k/K\):
\[
\lambda_{\min}\left(
\sum_{i\in\Omega}\bm a_i\bm a_i^T
\right)
\ge
c_2\frac{N_k}{K^3},
\]
where \(c_1,c_2,C_1>0\) are absolute constants. By
\eqref{eq2:prop assignment}, \(C_{\Pi(k)}(\hat{\bm\theta})\cap C_k^\star\)
is such a subset. Hence,
\[
\lambda_{\min}\left(
\sum_{i\in C_{\Pi(k)}(\hat{\bm\theta})\cap C_k^\star}
\bm a_i\bm a_i^T
\right)
\ge
c_2\frac{N_k}{K^3}.
\]
This, together with \Cref{lem:trace} and \eqref{eq:main-overlap-k}, yields
\[
\mathrm{Tr}\left(
\bm I-\bm U_k^{\star T}\hat{\bm U}_{\Pi(k)}
\hat{\bm U}_{\Pi(k)}^T\bm U_k^\star
\right) \le \frac{K^3B}{c_2N_k}.
\]
Since \([\bm U_1^\star,\dots,\bm U_K^\star]\in\mathcal O^{n\times dK}\), it follows that
\begin{align}\label{eq:cross-subspace-bound}
\sum_{l\neq k}
\|\hat{\bm U}_{\Pi(k)}^T\bm U_l^\star\|_F^2
&\le
\mathrm{Tr}\left(
\bm I-\hat{\bm U}_{\Pi(k)}^T
\bm U_k^\star\bm U_k^{\star T}
\hat{\bm U}_{\Pi(k)}
\right) \le \frac{K^3B}{c_2N_k} \le \frac12,
\end{align}
where the last inequality follows from \eqref{eq:B-small} by taking
\(c_0>0\) sufficiently small.

Next, again by \eqref{eq:optimality gap use},
\[
B
\ge
\sum_{l\neq k}
\sum_{i\in C_{\Pi(k)}(\hat{\bm\theta})\cap C_l^\star}
\left(
\|\bm a_i\|^2
-
\|\hat{\bm U}_{\Pi(k)}^T\bm U_l^\star\bm a_i\|^2
\right).
\]
For each \(l\neq k\), \eqref{eq:cross-subspace-bound} gives $\|\hat{\bm U}_{\Pi(k)}^T\bm U_l^\star\|^2
\le \|\hat{\bm U}_{\Pi(k)}^T\bm U_l^\star\|_F^2
\le 1/2$. Thus, for every \(i\in C_{\Pi(k)}(\hat{\bm\theta})\cap C_l^\star\) with \(l\neq k\),
\[
\|\bm a_i\|^2
-
\|\hat{\bm U}_{\Pi(k)}^T\bm U_l^\star\bm a_i\|^2
\ge
\frac12\|\bm a_i\|^2
\ge
\frac12(\sqrt d-2\sqrt{\log N}-2)^2
\ge
\frac d8,
\]
where the last inequality follows from \(d\gtrsim\log N\). Hence,
\[
B
\ge
\frac d8
\sum_{l\neq k}N_{\Pi(k)l}.
\]
Using \(8B<d\), we obtain
\(\sum_{l\neq k}N_{\Pi(k)l}<1\). Since the left-hand side is an integer,
\(N_{\Pi(k)l}=0\) for all \(l\neq k\). Therefore,
\[
C_{\Pi(k)}(\hat{\bm\theta})\subseteq C_k^\star,
\quad \forall k\in[K].
\]

It remains to show that \(\Pi\) is a permutation. If \(\Pi(k)=\Pi(l)\) for some
\(k\neq l\), then
\[
C_{\Pi(k)}(\hat{\bm\theta})
=
C_{\Pi(l)}(\hat{\bm\theta})
\subseteq
C_k^\star\cap C_l^\star
=
\varnothing,
\]
which contradicts \(N_{\Pi(k)k}\ge N_k/K>0\). Hence \(\Pi\) is injective and
therefore a permutation of \([K]\). Since both
\(\{C_{\Pi(k)}(\hat{\bm\theta})\}_{k=1}^K\) and
\(\{C_k^\star\}_{k=1}^K\) are partitions of \([N]\), we conclude that
\[
C_{\Pi(k)}(\hat{\bm\theta})=C_k^\star,
\qquad \forall k\in[K].
\]

Finally, applying a union bound over \(k\in[K]\) for the covariance events and
combining it with the event in \Cref{prop:true-assignment-optimality}
gives probability at least
\[
1
-
2N^{-1}
-
C_1K\exp\left(
C_1d\log(C_1K)
-
c_1\frac{N_{\min}}{K^6}
\right).
\]
This proves the desired result.
\end{proof}

\begin{proof}{\bf of \Cref{thm:4}.}
By \Cref{thm:3}, Problem~\eqref{eq:em loss} is equivalent to the subspace clustering problem in \eqref{eq:SC}. Hence, for part (i), it suffices to analyze the optimal solutions of Problem~\eqref{eq:SC}.

We first prove part (i). Let \(\mathcal E\) denote the event in  \Cref{prop:assignment recovery}. Under the assumptions in part (i),  \Cref{prop:assignment recovery} implies that \(\mathcal E\) holds with probability at least \(1-N^{-\Omega(1)}\). On this event, for any optimal solution
\(\hat{\bm\theta}=\{\hat{\bm U}_k\}_{k=1}^K\) of Problem~\eqref{eq:SC}, there exists a permutation \(\Pi:[K]\to[K]\) such that
\begin{align}\label{eq:assign recovered thm4}
    C_{\Pi(k)}(\hat{\bm\theta})=C_k^\star,
    \quad \forall k\in[K].
\end{align}
After relabeling the estimated subspaces according to \(\Pi\), the objective over the recovered clusters decomposes into the component-wise PCA problems
\begin{align}\label{eq:component PCA thm4}
    \max_{\bm U\in\mathcal O^{n\times d}}
    \frac{1}{N_k}
    \sum_{i\in C_k^\star}
    \|\bm U^T\bm x^{(i)}\|^2,
    \qquad k\in[K].
\end{align}
For each \(k\in[K]\), the samples in \(C_k^\star\) satisfy $\bm x^{(i)}=\bm U_k^\star\bm a_i+\bm e_i$ for all $i\in C_k^\star$, where \(\bm a_i\overset{i.i.d.}{\sim}\mathcal N(\bm0,\bm I_d)\). Since the additional sample-size condition in part (i) implies \(N_k\ge d+\Omega(\log N)\) for all \(k\in[K]\), applying \Cref{thm:2} to \eqref{eq:component PCA thm4} yields that, with probability at least \(1-N^{-\Omega(1)}\), for all \(k\in[K]\),
\[
\left\|\hat{\bm U}_{\Pi(k)}\hat{\bm U}_{\Pi(k)}^T
-
\bm U_k^\star\bm U_k^{\star T}
\right\|_F
\le
\frac{
c_1\sqrt{\sum_{i\in C_k^\star}\|\bm e_i\|^2}
}{
\sqrt{N_k}-\sqrt{d-1}
}.
\]
Combining this event with \(\mathcal E\) proves part (i).

We next prove part (ii). This part follows directly from \Cref{thm:2}(ii) applied to the PCA problem over \(C_{k_0}^\star\), since for \(i\in C_{k_0}^\star\), \Cref{AS:1} gives $\bm x^{(i)}=\bm U_{k_0}^\star\bm a_i+\bm e_i$ for all $\bm a_i\overset{i.i.d.}{\sim}\mathcal N(\bm0,\bm I_d).$  This proves part (ii).
\end{proof}

\section{Experimental Setups in \Cref{sec:setup}}\label{app sec:expsetting_sec2} 

In this section, we provide the detailed experimental setup for \Cref{subsec:exp-real}.  Given a real-world dataset $\{\bm x^{(i)}\}_{i=1}^N$ with $K$ classes, we outline the procedure to estimate a \MoLRG~distribution from the data. First, we set $\pi_k = 1/K$ and compute $\bm \mu_k$ as the mean of all images in class $k$. We then estimate the $\bm U_k$ and $\bm \Lambda_k$ by computing a rank-$d_k$ truncated SVD of the covariance matrix for the samples in class $k$. We plug these parameters into $\E[\bm x_0\mid \bm x_t]$ in $\eqref{eq:DAE para}$ and compute the score function $\nabla\log p_t\left(\bm x_t\right)$ using \eqref{eq:Tweedie}. Finally, we use the estimated score function to generate images by numerically solving \eqref{eq:reve}. 

\begin{figure*}[t]
\begin{center}
	\begin{subfigure}{0.43\textwidth}
    	\includegraphics[width=1\textwidth]{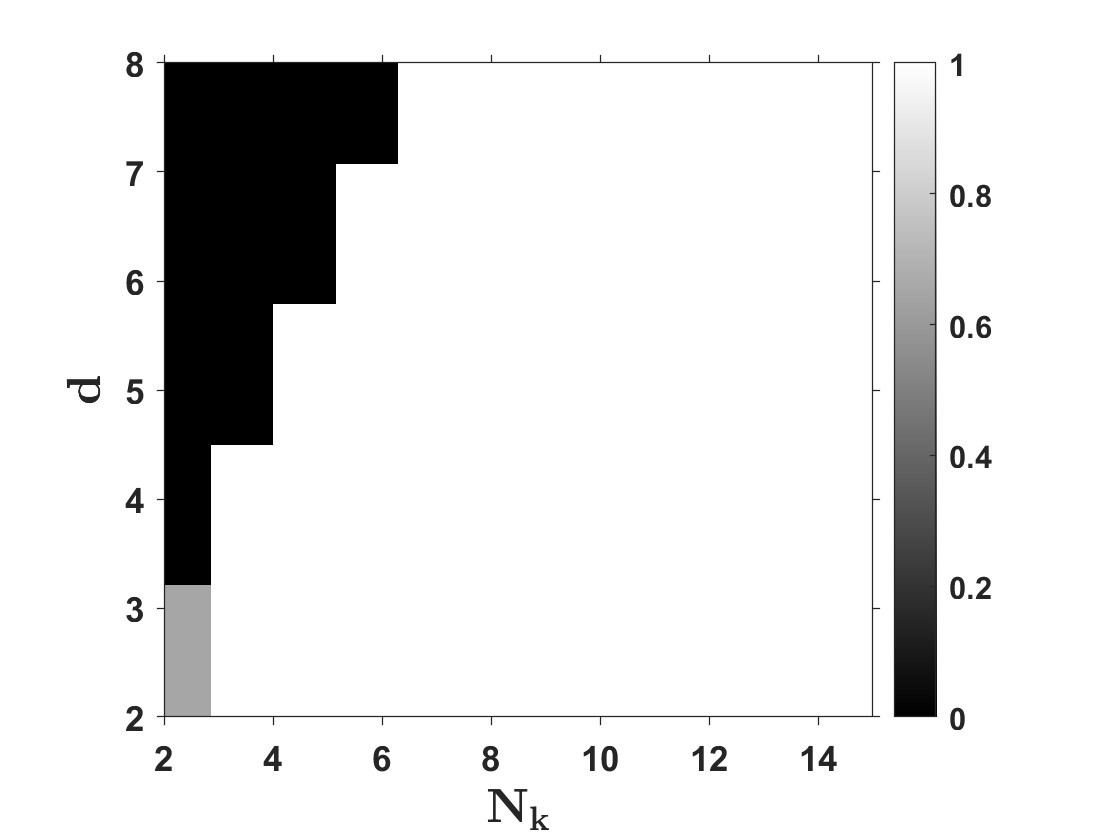}
    \caption{\bf PCA} 
    \end{subfigure}  
    \begin{subfigure}{0.43\textwidth}
    	\includegraphics[width =1\linewidth]{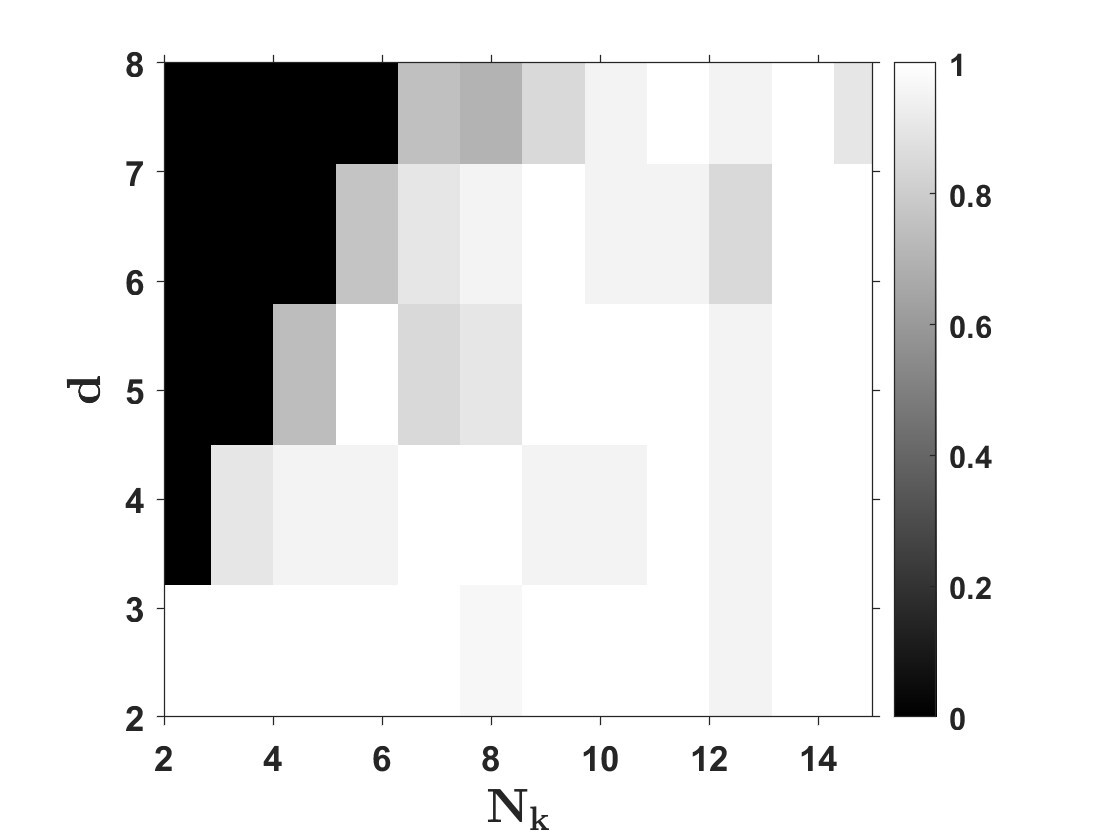}
     % \vspace{-0.05in}
    \caption{\bf Diffusion model} 
    \end{subfigure}
    \caption{\textbf{Phase transition of learning the \MoLRG~distribution when $K=3$.} The $x$-axis is the number of training samples and $y$-axis is the dimension of subspaces. We apply a subspace clustering method and train diffusion models for solving Problems \eqref{eq:SC} and \eqref{eq:em loss}, visualizing the results in (a) and (b), respectively.}  \label{fig:phase-transition-MoG-add-exp}
\end{center}
\end{figure*}

\rebuttal{For the CIFAR-10 Low-Frequency and FFHQ Low-Frequency datasets, we apply Gaussian blurring with a kernel size of 13 and a variance of 2.0 to extract the low-frequency components from CIFAR-10 and FFHQ.}

We set $K = 10$, $d_k = 20$ for MNIST and FashionMNIST, $K = 10$, $d_k = 200$ for CIFAR-10, and $K = 5$, $d_k = 500$ for FFHQ. Since FFHQ lacks annotated labels, we apply the expectation-maximization algorithm for clustering and label generation. For comparison, we use a Gaussian distribution with its mean and covariance set to the mean and covariance of all training samples. In addition, we train an EDM-based diffusion model \cite{karras2022elucidating} on each dataset as a comparison. We employ the second-order Heun Solver \cite{karras2022elucidating} with 35 steps as the diffusion sampler to numerically solve \eqref{eq:reve} and generate samples from learned distributions. For qualitative evaluation, we visualize samples from 12 initial noise inputs per dataset for both theoretical (\MoLRG\ and Gaussian) and real diffusion models. For quantitative evaluation, we generate 10K noise samples and compute the Euclidean distance between the theoretical and real model outputs (defined in \eqref{eq:dist}).

\section{Experimental Setups in Section \ref{sec:results}} \label{app sec:expsetting_sec3} 
 
In this section, we provide detailed setups for the experiment in \Cref{subsec:thm verification}. This experiment aims to validate the \Cref{thm:2} and \Cref{thm:4}. Here, we present the stochastic gradient descent (SGD) algorithm for solving Problem \eqref{eq:em loss} in \Cref{alg:1}.

Now, we specify how to choose the parameters of the SGD in our implementation. We divide the time interval $[0,1]$ into $64$ time steps. When $K = 1$, we set the learning rate $\eta =  10^{-4}$, batch size $M = 128 N_k$, and number of iterations $J = 10^4$. When $K = 2$, we set the learning rate $\eta = 2 \times 10^{-5}$, batch size $M = 1024$, number of iterations $J = 10^5$. % In practice, we find although theoretically there exists only one optimal solution $\bm U^*$ when $\bm N_k \geq d_k$, diffusion model with theoretically parameterized networks is hard to converge to this solution when $\bm N_k$ is close to $d_k$. Therefore, we use a special initialization $\bm U = \bm U_0$ when $K = 2$:
In particular, when $K=2$, we use the inner-product based spectral method (TIPS) initialization \cite{wang2022convergence} to improve the convergence of SGD. Specifically, TIPS generates the initial clusters based on the pairwise similarities of $\{\vx^{(i)}\}_{i = 1}^{N}$ and then estimates the initialization $\bm \theta^0 = \{\bm U_k^0\}$ according to the clustering results. Further details can be found in Section 2.3 of \cite{wang2022convergence}.
% \begin{align}
%     \label{eq:initialization}
%     \bm U_k^0 = \bm U_k^\star + 0.2 \bm \Delta,\ k \in \{1,2\},
% \end{align}
% where $\bm \Delta \sim \mathcal{N}(\bm 0, \bm I_n)$. 
% Notably, we argue that this initialization is still valid to verify our theorem. When $\bm N_k < d_k$, there exists infinite optimal solution so even under the initialization, the diffusion model will not converge to $\bm U^*$.
We calculate the success rate as follows. If the returned subspace basis matrices $\{\bm U_k\}_{k=1}^K$ satisfy 
\begin{align*}
    \frac{1}{K} \sum_{k=1}^K \left\|\bm U_{\Pi(k)} \bm U_{\Pi(k)}^T - \bm U_k^{\star} \bm U_k^{\star T}\right\| \le 0.5
\end{align*}
for some permutation $\Pi:[K]\to [K]$, it is considered successful.

\begin{algorithm}[t]
   \caption{SGD for optimizing the training loss \eqref{eq:em loss}}
   \label{alg:1}
   {\bfseries Input:} Training samples $\{\bm x^{(i)}\}_{i=1}^N$ \\
    \textbf{for} $j=0,1,2,\dots,J$ \textbf{do} \\
    1. Randomly select $\{(i_m,t_m)\}_{m=1}^M$, where $i_m \in [N]$ and $t_m \in (0,1)$   and a noise $\bm \epsilon \sim \mathcal{N}(\bm 0, \bm I)$ \\
   2. Take a gradient step 
   \begin{align*}
       \bm \theta^{j+1} \leftarrow \bm \theta^{j} - \frac{\eta}{M} \sum_{m \in [M]} \nabla_{\bm \theta} \left\| \bm x_{\bm \theta^j}(s_{t_m} \bm x^{(i_m)} + \gamma_{t_m}\bm \epsilon, t_m) -  \bm x^{(i_m)} \right\|^2
   \end{align*}
    \textbf{end for}
\end{algorithm}

\section{Experimental Setups in Section \ref{sec:experiments}} \label{app sec:expset_appli_1}

In this section, we provide detailed setups for the experiments in \Cref{sec:experiments}. Specifically, we describe the settings for using a U-Net-based diffusion model to (1) learn \MoLRG\ distribution (Appendix \ref{app:exp_setting_MoLRG_unet}), (2) learn real-world image distribution (Appendix \ref{app:exp_setting_real_unet}), and (3) estimate the intrinsic dimension of real-world image distribution (Appendix \ref{app:exp_setting_real_dataset_rank}).

\subsection{Learning the \MoLRG~distribution with U-Net}
\label{app:exp_setting_MoLRG_unet}

In our implementation, we set $\mathrm{ID} \in \{8, 10, 12\}$.
\begin{itemize}
    \item When $\mathrm{ID} = 8$, $N_k \in\{20, 50, 70,  100, 200, 300, 1000\}$; \item When $\mathrm{ID} = 10$, $N_k \in\{100, 150, 200, 250, 300, 1000\}$;
    \item When $\mathrm{ID} = 12$, $N_k \in\{100, 150, 200, 250, 300, 350, 400, 1000\}$.
\end{itemize}
To train U-Net, we use the stochastic gradient descent in \Cref{alg:1}. We use DDPM++ architecture \cite{song2020score} for the U-Net and EDM \cite{karras2022elucidating} noise scheduler. We set the learning rate $10^{-3}$, batch size $64$, and number of iterations $ J = 10^{4}$.  

For a specific \MoLRG\ distribution $p_{\text{data}}$ with $N$ pre-selected training data $\bm x^{(i)} \sim p_{\text{data}}$, the threshold $\delta$ is chosen such that the following inequality holds: 
\begin{equation} \label{eq:def_delta}
    \frac{1}{M_z}\sum_{j=1}^{M_z} \mathbb{I}\left(\min_{i\in[N]} \|\bm \Psi \left(\bm x^{(i)}\right) -  \bm \Psi \left(\bm z^{(j)}\right)\| \geq \delta\right)  = 0.95.    
\end{equation}
Intuitively, this definition ensures that with $95\%$ probability, a newly drawn sample $\bm z^{(j)} \sim p_{\text{data}}$ will be at least $\delta$ away (in the $\bm \Psi$-transformation space) from its nearest neighbor among the training samples $\bm x^{(i)}$. While \eqref{eq:def_delta} has a theoretical analytical solution, we approximate $\delta$ numerically in practice. Specifically, we set $M_z = 10^3$, compute the minimum distance $\min_{i\in[N]} ||\bm \Psi \left(\bm x\right) -  \bm \Psi \left(\bm y_i\right)||_2$ for each, and set $\delta$ as the 5th percentile (i.e., the 0.05-quantile) of the resulting distance distribution. To empirically estimate \eqref{def:gl_score}, we set $M = 10^3$. 

To quantitatively estimate the transition in \Cref{fig:phase-transition-UNet} (top-left), we fit the curve using the following sigmoid-parameterized function:
\begin{equation}\label{eq:sigmoid-shaped function}
    \mathrm{GL}\left(\frac{N}{\mathrm{ID}}\right) \approx f_{\MoLRG}\left(\frac{N}{\mathrm{ID}}\right) = \frac{1}{1 + \exp\left(-a(\log_2\left(N/\mathrm{ID}\right) - b)\right)},
\end{equation}
where the fitted parameters are $a = 6.22$ and $b = 5.20$. Solving this equation numerically gives $f_{\MoLRG}^{-1}\left(0.95\right) = 50.2$, indicating that U-Net architectures trained on the \MoLRG\ distribution generalize when $N_k \geq 50.2 d_k$. We use the same parameterized function \eqref{eq:sigmoid-shaped function} for the fitted curve in \Cref{fig:phase-transition-UNet} (bottom-left), by changing the input variable to $N/\mathrm{ID}^2$.

\subsection{Learning real-world image data distributions with U-Net}
\label{app:exp_setting_real_unet}
To train diffusion models for real-world image datasets, we use the DDPM++ architecture \cite{song2020score} for U-Net and variance preserving (VP) \cite{song2020score} noise scheduler. The U-Net is trained using the Adam optimizer \cite{kingma2014adam}, a variant of SGD in \Cref{alg:1}. We set the learning rate $\eta = 10^{-3}$, batch size $M = 512$, and the total number of iterations $10^5$. To empirically estimate \eqref{def:gl_score}, we set $M = 10^4$. 

The curve $f_{\texttt{real}}$ is parameterized the same way as $f_{\MoLRG}$ in \eqref{eq:sigmoid-shaped function}, with $a = 1.88$ and $b=7.74$. Then, we numerically solve that $f^{-1}_{\texttt{real}}\left(0.95\right) = 630.3$, indicating that U-Net architectures trained on real data distribution generalize when $N \geq 630.3 \mathrm{ID}$. We use the same parameterized function \eqref{eq:sigmoid-shaped function} for the fitted curve in \Cref{fig:phase-transition-UNet} (bottom-right), by changing the input variable to $N/\mathrm{ID}^2$.

\subsection{Estimating the intrinsic dimension of real-world dataset} \label{app:exp_setting_real_dataset_rank}

In this subsection, we conduct numerical experiments to estimate the intrinsic dimension of real-world image data distribution. Following from \Cref{lem:E[x_0]}, as $t \rightarrow 1$,  we have
\begin{align}
    {\color{black}\nabla_{\bm x_t} \mathbb{E}[\bm x_0\mid \bm x_t]} \approx \frac{1}{s_t}\sum_{k = 1}^{K}\bm U_k^\star\bm D_{k,t}^\star\bm U_k^{\star T},  
\end{align}
given $w_{k,t}^*(\bm x_t) \approx 1$ and $\nabla_{\bm x_t} w_{k,t}^*(\bm x_t) \approx \bm 0$. This relationship allows us to estimate the intrinsic dimension $\mathrm{ID}$ of a \MoLRG\ distribution as: 
\begin{equation}
    \mathrm{ID} \coloneqq \rank\left(\sum_{k = 1}^{K}\bm U_k^\star\bm D_{k,t}^\star\bm U_k^{\star T}\right) \approx \rank\left(\nabla_{\bm x_t} \mathbb{E}[\bm x_0\mid \bm x_t]\right),
\end{equation}
as $t \rightarrow 1$. Note that the DAE $\bm x_{\bm \theta}(\cdot,t)$ of the trained diffusion models satisfies $\bm x_{\bm \theta}(\bm x_t,t) \approx \mathbb{E}[\bm x_0 \mid \bm x_t]$. Combining this with the observation in \Cref{subsec:exp-real} that real-world image distributions can be well approximated by \MoLRG\ distributions, we conclude that the intrinsic dimension can be estimated by:
\begin{equation}\label{eq:estimate_id}
    \mathrm{ID} \approx \rank\left(\nabla_{\bm x_{t}}\bm x_{\bm \theta}(\bm x_t, t)\right).
\end{equation}

We evaluate the intrinsic dimension $\mathrm{ID}$ over four different datasets: CIFAR-10, CelebA, FFHQ, and AFHQ. We resize images from FFHQ and AFHQ such that $n = 3072$ for all datasets. We calculate the numerical rank of the Jacobian $\nabla_{\bm x_{t}}\bm x_{\bm \theta}(\bm x_t, t)$ through
\begin{align}\label{eq:numerical_rank}
    \rank\left(\nabla_{\bm x_{t}}\bm x_{\bm \theta}(\bm x_t, t)\right) := \argmin \left\{r \in [1,n]:  \frac{\sum_{i = 1}^{r} \sigma_i^2 \left(\nabla_{\bm x_{t}}\bm x_{\bm \theta}(\bm x_t, t)\right)}{\sum_{i = 1}^{n} \sigma_i^2 \left(\nabla_{\bm x_{t}}\bm x_{\bm \theta}(\bm x_t, t)\right)} > \eta^2 \right\},
\end{align} 
with $\eta = 0.99$ and recall $\sigma_i\left( \bm A\right)$ denotes the $i$-th singular value of matrix $\bm A$.

\begin{figure*}[t]
\begin{center}
	\includegraphics[width=.5\textwidth]{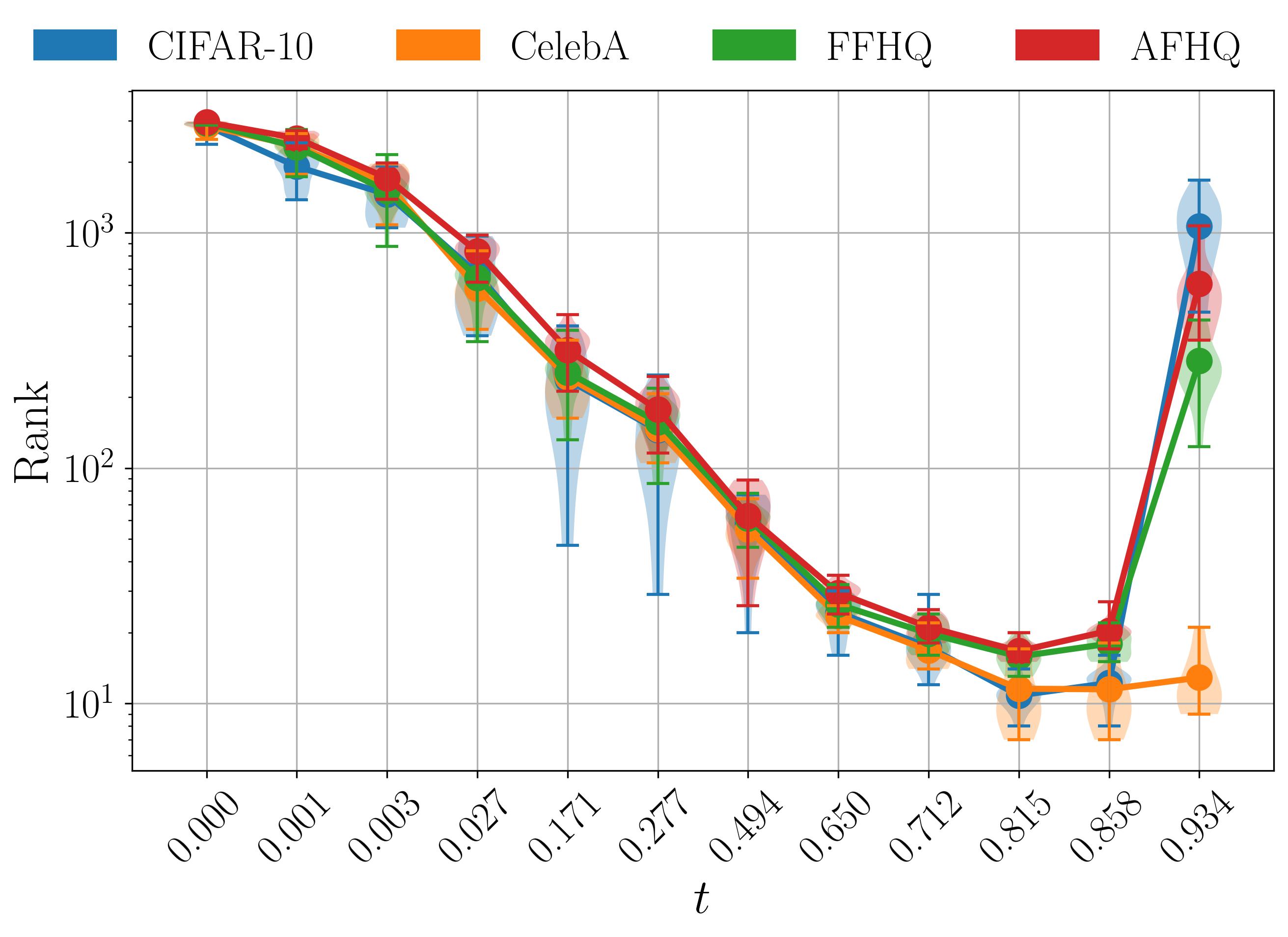}
    \caption{{\bf Low-rank property of the denoising autoencoder of trained diffusion models.} We plot the numerical rank of the Jacobian of the denoising autoencoder, i.e., $\nabla_{\bm x_t} \bm x_{\bm \theta}(\bm x_t, t)$, against the timestep $t$ by training diffusion models on different datasets. We train diffusion models on image datasets CIFAR-10, CelebA, FFHQ, and AFHQ. The experimental details are provided in Appendix \ref{app:exp_setting_real_dataset_rank}. }  \label{fig:low-rank-real-data} 
\end{center}
\end{figure*} 

To select a timestep $t$ to estimate the intrinsic dimension, we evaluate $\rank\left(\nabla_{\bm x_{t}}\bm x_{\bm \theta}(\bm x_t, t)\right)$ at different timesteps across multiple datasets, as shown in \Cref{fig:low-rank-real-data}. Specifically, Given a random initial noise $\bm x_1 \sim \mathcal{N}(\bm 0, \bm I_n)$, we use the diffusion model to generate a sequence of images $\{\bm x_t\}$ according to the reverse ODE in \eqref{eq:reve}. Along the sampling trajectory $\{\bm x_t\}$, we estimate $\rank\left(\nabla_{\bm x_{t}}\bm x_{\bm \theta}(\bm x_t, t)\right)$ at each timestep. For the experiments, we utilize the Elucidating Diffusion Model (EDM) with the EDM noise scheduler \cite{karras2022elucidating} and DDPM++ architecture \cite{song2020denoising}. Moreover, we employ an 18-step Heun's solver for sampling and present the results for 12 of these steps ($t = 0, 0.001, 0.003, 0.027, 0.171, 0.277$, $0.494, 0.650, 0.712, 0.815, 0.858, 0.934$). For each dataset, we random sample 15 initial noise $\bm x_1$, calculate the mean of $\rank(\nabla_{\bm x_{t}}\bm x_{\bm \theta}(\bm x_t, t))$ along the trajectory $\{\bm x_t\}$.

As shown in \Cref{fig:low-rank-real-data}, the plot of rank against $t$ exhibits a U-shaped curve, with the lowest rank consistently occurring around $t = 0.815$ across all datasets. Timesteps too close to $t = 0$ or $t = 1$ are unsuitable for estimating the intrinsic dimension: when $t$ approaches $1$, $\bm x_{\bm \theta}(\bm x_t, t)$ becomes less accurate because the training loss \eqref{eq:em loss} assigns a small weight $\lambda_t$ to such timesteps;  when $t$ approaches $0$, \cite{karras2022elucidating} parameterize $\bm x_{\bm \theta}(\bm x_t, t)$ to be $\bm x_t$, causing $\rank\left(\nabla_{\bm x_{t}}\bm x_{\bm \theta}(\bm x_t, t)\right) = n$ to be naturally very high. Thus, we select $t = 0.815$, the timestep that achieves the lowest rank, to estimate the intrinsic dimension of real-world datasets. The estimated $\mathrm{ID}$ is shown in \Cref{tab:id}.

\begin{figure}[t]
    \centering
    \begin{subfigure}{0.55\textwidth}
        \includegraphics[width=1\textwidth]{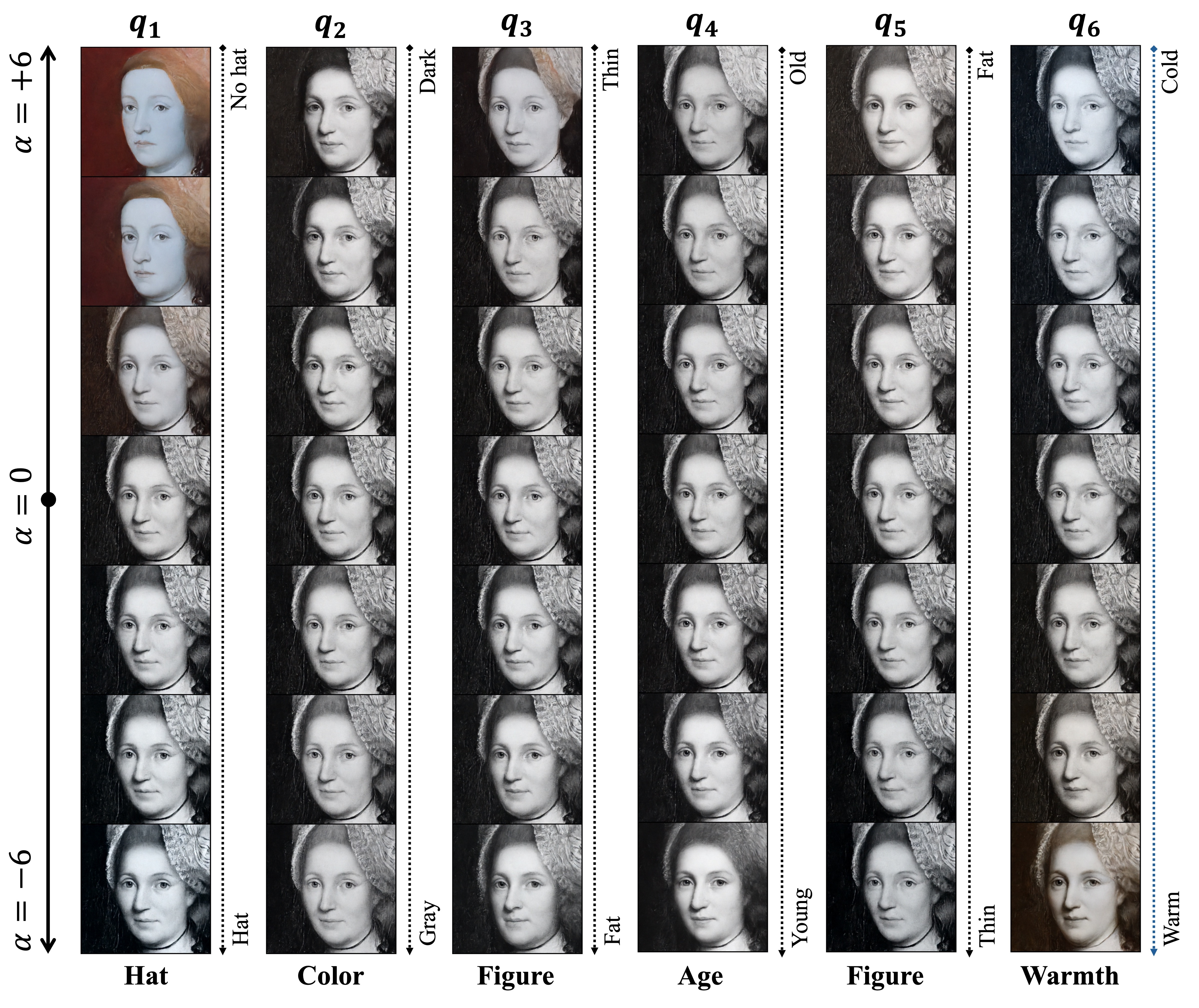}
    \caption{}
    \end{subfigure} 
    \begin{subfigure}{0.216\textwidth}
        \includegraphics[width =1\linewidth]{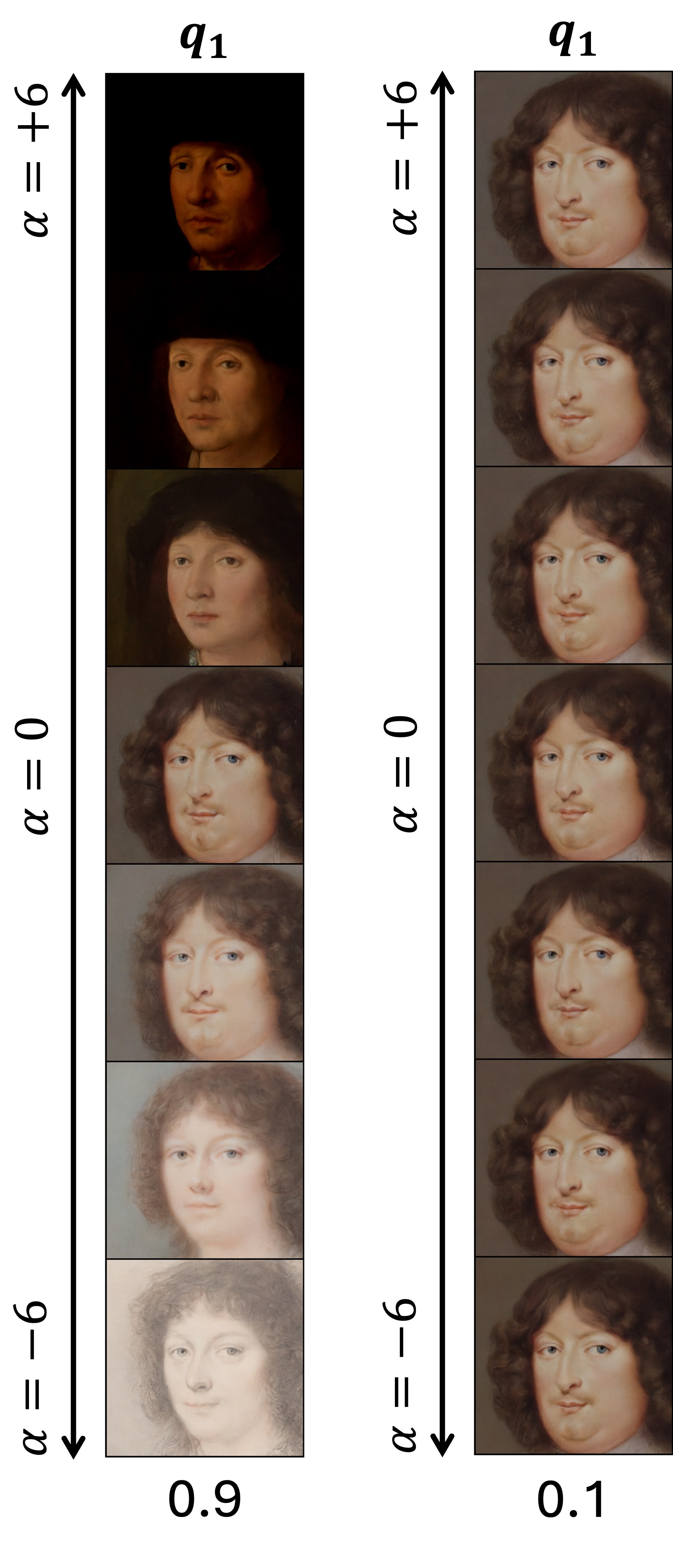}
    \caption{}
    \end{subfigure} \\
    \begin{subfigure}{0.55\textwidth}
        \includegraphics[width =1\linewidth]{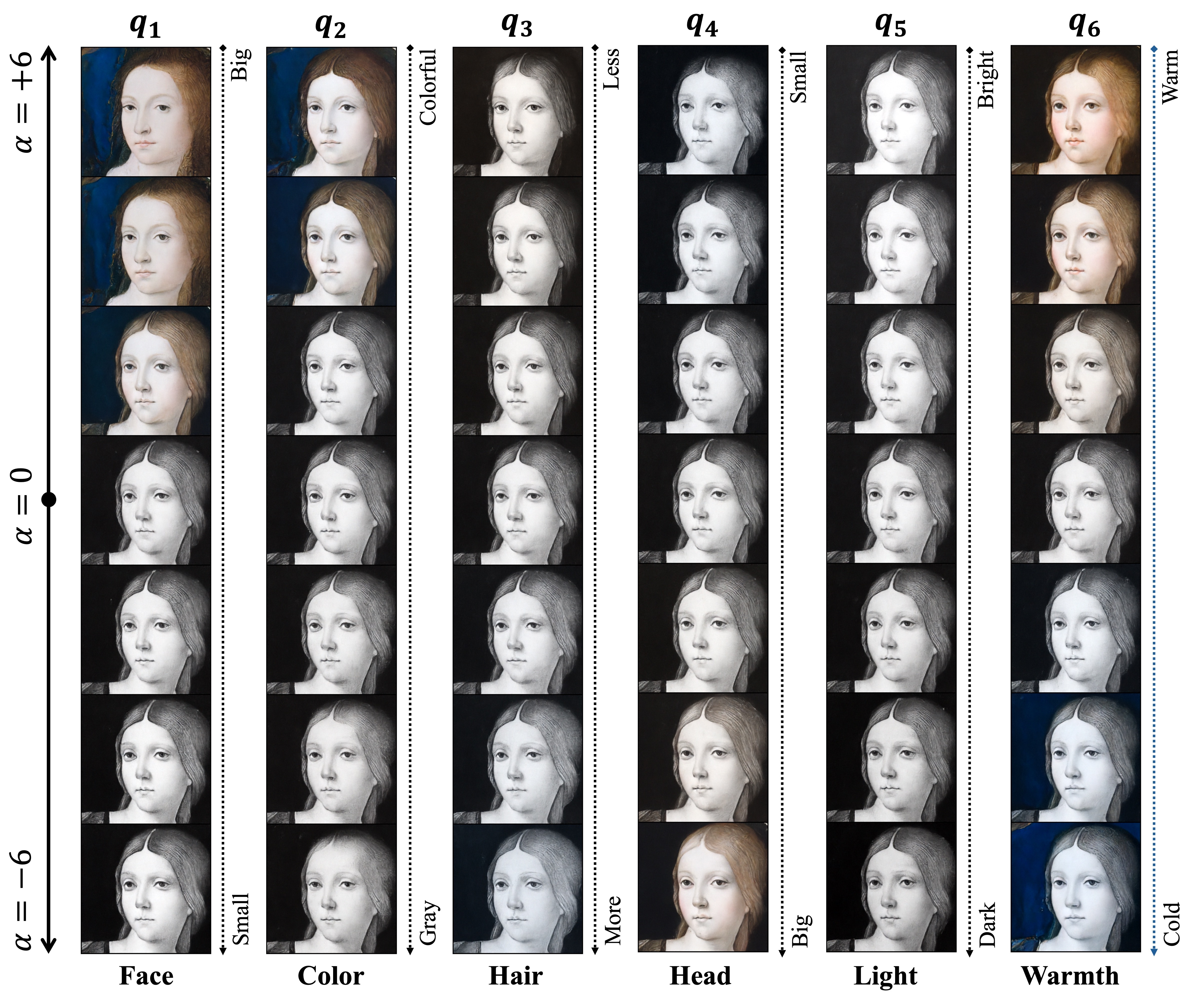}
   \caption{}
    \end{subfigure} 
    \caption{\textbf{Correspondence between the singular vectors of the Jacobian of the DAE and semantic image attributes.} (a,c) Additional examples when $t = 0.7$. (b) Ablation studies when $t = 0.1$ and $0.9$.}
    \label{fig:meta_more}
\end{figure}

\section{Auxiliary Results}\label{app sec:auxi}

First, we present a probabilistic result to prove \Cref{thm:2}, which provides an optimal estimate of the small singular values of a matrix with i.i.d. Gaussian entries. This lemma is proved in \cite[Theorem 1.1]{rudelson2009smallest} for subgaussian random variables. Note that a random variable $\xi$ is called subgaussian if there exists $c > 0$ such that for all $ t > 0$, 
\begin{align*}
    \mathbb{P}\left(|\xi| > t\right) \le 2\exp\left( -\frac{t^2}{c^2} \right).
\end{align*}
We say that the minimal $c$ in this inequality is the subgaussian moment of $\xi$. 

\begin{lemma}\label{lem:Gau}
     Let $\bm A$ be an $m \times n$ random matrix, where $m \ge n$, whose elements are independent copies of a subgaussian random variable with mean zero and unit variance. It holds for every $\varepsilon > 0$ that 
     \begin{align*}
         \mathbb{P}\left( \sigma_{\min}(\bm A) \ge \varepsilon(\sqrt{m}-\sqrt{n-1}) \right) \ge 1 - \left(c_1\varepsilon\right)^{m-n+1} - \exp\left(-c_2m\right),
     \end{align*}
     where $c_1,c_2 > 0$ are constants depending polynomially only on the subgaussian moment.  
\end{lemma}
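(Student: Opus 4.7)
The plan is to follow the strategy of Rudelson--Vershynin for the smallest singular value of a rectangular subgaussian matrix. Since $\sigma_{\min}(\bm A) = \inf_{\bm x \in S^{n-1}} \|\bm A\bm x\|$, the goal reduces to showing
\[ \P\Bigl( \inf_{\bm x \in S^{n-1}} \|\bm A\bm x\| \le \varepsilon(\sqrt{m}-\sqrt{n-1}) \Bigr) \le (c_1\varepsilon)^{m-n+1} + \exp(-c_2 m). \]
The main obstacle, compared to the classical square case ($m=n$), is producing the sharp $(c_1\varepsilon)^{m-n+1}$ tail in the rectangular regime; the naive net argument gives only $\varepsilon^{m-n+1}$ after losing exponentially many factors to the covering numbers, so we must be careful to recover the correct exponent.

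First I would decompose the sphere into \emph{compressible} and \emph{incompressible} vectors in the sense of Rudelson--Vershynin: $\mathrm{Comp}(\delta,\rho)$ consists of unit vectors within Euclidean distance $\rho$ of some $\delta n$-sparse unit vector, and $\mathrm{Incomp}(\delta,\rho)$ is its complement in $S^{n-1}$. For compressible vectors, the strategy is a standard net argument: build an $\varepsilon$-net of $\mathrm{Comp}(\delta,\rho)$ of cardinality at most $(C/\delta)^{\delta n}$, apply a Hanson--Wright / concentration bound to show that for any fixed $\bm x\in S^{n-1}$ one has $\|\bm A\bm x\| \ge c\sqrt{m}$ with probability at least $1-\exp(-c' m)$, and union-bound. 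The gap $m - \delta n \gtrsim m$ (for $\delta$ small) absorbs the covering number, yielding the $\exp(-c_2 m)$ term.

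The hard part is the incompressible case, which must produce the $(c_1\varepsilon)^{m-n+1}$ factor. Here I would use the standard reduction via distances to hyperplanes: for $\bm x \in \mathrm{Incomp}(\delta,\rho)$, at least $\rho^2 \delta n$ coordinates satisfy $|x_k| \ge \rho/\sqrt{n}$, and a simple averaging argument gives
\[ \inf_{\bm x \in \mathrm{Incomp}(\delta,\rho)} \|\bm A\bm x\| \;\ge\; \frac{\rho}{\sqrt{n}}\,\min_{k\in[n]} \mathrm{dist}(\bm a_k, H_k), \]
where $\bm a_k$ is the $k$-th column of $\bm A$ and $H_k = \mathrm{span}\{\bm a_j : j\ne k\}$. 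Thus it suffices to bound $\P(\mathrm{dist}(\bm a_k, H_k) \le t)$. Since $\bm a_k$ is independent of $H_k$, conditioning on $H_k$ the quantity $\mathrm{dist}(\bm a_k,H_k) = \|\bm P_{H_k^\perp}\bm a_k\|$ equals the Euclidean norm of the projection of a subgaussian vector onto a fixed subspace of dimension $m - n + 1$. Applying a subgaussian small-ball / Paley--Zygmund-type estimate (or the Rudelson--Vershynin tensorization lemma combined with a Berry--Esseen bound on each coordinate) gives
\[ \P\bigl( \mathrm{dist}(\bm a_k, H_k) \le \varepsilon\sqrt{m-n+1} \bigr) \;\le\; (C\varepsilon)^{m-n+1}, \]
which is the source of the $(c_1\varepsilon)^{m-n+1}$ factor. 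Taking a union bound over $k \in [n]$ costs only a polynomial factor that can be absorbed into $c_1$, and combining with the compressible bound yields the lemma with constants depending polynomially on the subgaussian moment. I expect the bookkeeping of the small-ball tensorization to be the principal technical hurdle, and would cite Rudelson--Vershynin's tensorization lemma rather than rederive it.
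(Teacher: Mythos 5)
The paper does not supply a proof of this lemma at all: it simply invokes \cite[Theorem 1.1]{rudelson2009smallest}, so there is no paper-internal argument to compare against. Your sketch does reproduce the broad skeleton of the Rudelson--Vershynin proof correctly: the decomposition of $S^{n-1}$ into compressible and incompressible vectors, the net-plus-concentration argument on the compressible part (which yields the $\exp(-c_2 m)$ term), and the reduction of the incompressible part to $\mathrm{dist}(\bm a_k, H_k)$ for $H_k$ the span of the other columns are all genuine ingredients of the cited theorem.

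There is, however, a real gap at the crux. You claim that a ``subgaussian small-ball / Paley--Zygmund-type estimate'' or ``the Rudelson--Vershynin tensorization lemma combined with a Berry--Esseen bound on each coordinate'' yields $\P\bigl(\mathrm{dist}(\bm a_k, H_k) \le \varepsilon\sqrt{m-n+1}\bigr) \le (C\varepsilon)^{m-n+1}$. Neither tool delivers this. Paley--Zygmund lower-bounds the probability of being large; it gives no upper bound on small-ball probability. And Berry--Esseen applied to a linear form $\langle \bm a_k, \bm v\rangle$ with $\bm v$ an incompressible unit vector only gives $\P(|\langle \bm a_k, \bm v\rangle| \le \varepsilon) \le C\varepsilon + C'/\sqrt{m}$; tensorizing over an orthonormal basis of the $(m-n+1)$-dimensional $H_k^\perp$ then produces at best $(C\varepsilon + C'/\sqrt{m})^{m-n+1}$, whose small-$\varepsilon$ floor is polynomially, not exponentially, small and cannot be folded into the $\exp(-c_2 m)$ error. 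What the cited proof actually does is control the \emph{essential least common denominator} (LCD) of the directions spanning $H_k^\perp$: a separate net argument shows that with probability $1 - e^{-cm}$ (conditionally on the other columns) every unit vector in $H_k^\perp$ has LCD of order $e^{cm}$, and then Esseen's (Fourier) inequality for such vectors upgrades the per-coordinate bound to $C\varepsilon + e^{-cm}$, which the tensorization lemma turns into the target $(C\varepsilon)^{m-n+1} + e^{-c_2 m}$. This arithmetic-anticoncentration machinery is the principal technical contribution of \cite{rudelson2009smallest}, not bookkeeping; your sketch misidentifies the tensorization step as the hard part and omits the LCD step that makes the rest go through. Two smaller inaccuracies: the pointwise inequality $\inf_{\mathrm{Incomp}} \|\bm A\bm x\| \ge (\rho/\sqrt{n})\min_k \mathrm{dist}(\bm a_k,H_k)$ is not correct as written --- the Rudelson--Vershynin reduction is an averaging bound of the form $\P\bigl(\inf_{\mathrm{Incomp}}\|\bm A\bm x\| \le \varepsilon\rho/\sqrt{n}\bigr) \le (\delta n)^{-1}\sum_{k} \P\bigl(\mathrm{dist}(\bm a_k,H_k)\le\varepsilon\bigr)$ --- and the exponential error term comes from the LCD bound in the incompressible case as well, not only from the compressible case.
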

Next, we present a probabilistic bound on the deviation of the norm of a weighted sum of squared Gaussian random variables from its mean. This is a direct extension of \cite[Theorem 5.2.2]{vershynin2018high}. 

\begin{lemma}\label{lem:norm gaus}
Let $\bx \sim \mN(\bm 0, \bI_d)$ be a Gaussian random vector and $\lambda_1,\dots,\lambda_d > 0$ be constants. It holds for any $t > 0$ that  
\begin{align}\label{eq:norm gaus}
\mathbb{P}\left(  \left|\sqrt{\sum_{i=1}^d \lambda_i^2x_i^2} - \sqrt{\sum_{i=1}^d \lambda_i^2}\right| \ge t + 2 \lambda_{\max} \right) \le 2\exp\left( -\frac{t^2}{2\lambda_{\max}^2} \right),
\end{align}
where $\lambda_{\max} = \max\{\lambda_i: i \in [d] \}$. 
\end{lemma}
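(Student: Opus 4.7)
The plan is to identify $f(\bx) := \sqrt{\sum_{i=1}^d \lambda_i^2 x_i^2}$ as the Euclidean norm of the linear image $\bLambda \bx$, where $\bLambda = \diag(\lambda_1,\dots,\lambda_d)$, and then invoke Gaussian Lipschitz concentration together with a careful comparison of the mean $\mathbb{E}[f(\bx)]$ against $\sqrt{\sum_i \lambda_i^2}$. This is essentially the strategy of \citet[Thm.~5.2.2]{vershynin2018high} adapted to non-isotropic Gaussians, with the role of $\sqrt{d}$ played by $\sqrt{\sum_i \lambda_i^2}$ and the role of the $1$-Lipschitz constant played by $\lambda_{\max}$.

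First, I would verify the Lipschitz property: for any $\bx,\by \in \R^d$,
\begin{equation*}
    |f(\bx)-f(\by)| \le \|\bLambda \bx - \bLambda \by\| \le \|\bLambda\| \cdot \|\bx-\by\| = \lambda_{\max}\|\bx-\by\|,
\end{equation*}
where the first inequality is the reverse triangle inequality for the $\ell_2$-norm. Since $\bx \sim \mathcal{N}(\b0,\bI_d)$, the Tsirelson–Ibragimov–Sudakov Gaussian concentration inequality then yields
\begin{equation*}
    \P\bigl(|f(\bx)-\mathbb{E}[f(\bx)]| \ge t\bigr) \le 2\exp\!\left(-\frac{t^2}{2\lambda_{\max}^2}\right), \qquad \forall t > 0.
\end{equation*}

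Second, I would compare $\mathbb{E}[f(\bx)]$ with $\sqrt{\sum_i \lambda_i^2}$. Since $\mathbb{E}[f(\bx)^2] = \sum_i \lambda_i^2$, Jensen's inequality gives $\mathbb{E}[f(\bx)] \le \sqrt{\sum_i \lambda_i^2}$. For the lower bound, the Gaussian Poincaré inequality (or equivalently integrating the tail above) applied to the $\lambda_{\max}$-Lipschitz function $f$ gives $\mathrm{Var}(f(\bx)) \le \lambda_{\max}^2$, hence
\begin{equation*}
    (\mathbb{E}[f(\bx)])^2 \ge \mathbb{E}[f(\bx)^2] - \lambda_{\max}^2 = \sum_{i=1}^d \lambda_i^2 - \lambda_{\max}^2,
\end{equation*}
which together with $\sqrt{a}-\sqrt{a-b} \le \sqrt{b}$ for $0 \le b \le a$ yields $\sqrt{\sum_i \lambda_i^2} - \mathbb{E}[f(\bx)] \le \lambda_{\max} \le 2\lambda_{\max}$. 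Combining this with the concentration bound via the triangle inequality produces exactly \eqref{eq:norm gaus}, with a slack of $2\lambda_{\max}$ (which absorbs the mean-to-target gap with room to spare).

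The main technical subtlety, rather than an obstacle, is deriving the variance bound $\mathrm{Var}(f) \le \lambda_{\max}^2$ cleanly; this can be established either by invoking Gaussian Poincaré directly (the cheapest route), or by integrating the concentration tail above, or by the Gaussian log-Sobolev route used in \citet{vershynin2018high}. The additive slack $2\lambda_{\max}$ in the statement is precisely chosen to accommodate this mean-correction step and make the final bound clean without tracking sharper constants.
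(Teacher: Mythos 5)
Your proof is correct and follows exactly the approach the paper implies: the paper gives no argument for this lemma, stating only that it ``is a direct extension of \cite[Theorem 5.2.2]{vershynin2018high},'' and your derivation --- Gaussian Lipschitz concentration applied to $f(\bx)=\|\bLambda\bx\|$ with Lipschitz constant $\lambda_{\max}$, combined with the Jensen upper bound and the Poincar\'e variance lower bound on $\mathbb{E}[f(\bx)]$ --- is precisely that extension, correctly worked out. Indeed your mean estimate $\bigl|\mathbb{E}[f(\bx)]-\sqrt{\sum_i\lambda_i^2}\bigr|\le\lambda_{\max}$ proves the slightly stronger bound with $t+\lambda_{\max}$ in place of $t+2\lambda_{\max}$; the extra slack in the statement is harmless.
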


Based on the above lemma, we can further show the following concentration inequalities to estimate the norm of the standard normal Gaussian random vector. 
\begin{lemma}\label{lem:norm Gau}
    Suppose that $\bm a_i \overset{i.i.d.}{\sim} \mathcal{N}(\bm 0,\bm I_d)$ is a Gaussian random vector for each $i \in [N]$. % The following statements hold: \\
    It holds for all $i \in [N]$ with probability at least $1 - 2N^{-1}$ that
    \begin{align}\label{eq:norm Gau}
        \left|\|\bm a_i\| - \sqrt{d} \right| \le 2\sqrt{\log N} + 2.
    \end{align}   
    % (ii) Let $\bm V \in \mO^{n\times d}$ be given. For all $i \in C_k^\star$ and all $k \in [K]$, it holds with probability at least $1-2N^{-1}$ that 
    % \begin{align}\label{eq:norm Ua}
    % \left| \|\bm V^T\bm U_k^\star\bm a_i\| - \|\bm V^T\bm U_k^\star\|_F  \right| \le 2\sqrt{\log N} + 2.
    % \end{align} 
\end{lemma}
\begin{proof}
    Applying \Cref{lem:norm gaus} to $\bm{a}_i \sim \mN(\bm{0},\bm{I}_{d})$, together with setting $t=2\sqrt{\log N}$ and $\lambda_j=1$ for all $j\in [d]$, yields  
    \begin{align*} 
    \mathbb{P}\left(\left|\| \ba_i\| - \sqrt{d}\right| \ge 2\sqrt{\log N} + 2 \right) \le 2N^{-2}.
    \end{align*} 
    This, together with the union bound, yields that \eqref{eq:norm Gau} holds with probability $1 - 2N^{-1}$.  
\end{proof}    

Next, we present a spectral concentration bound for sample covariance matrices formed from arbitrary subsets of Gaussian random vectors; see \cite[Theorem 1]{liu2026concentrationinequalitycovariancematrix}.

\begin{lemma}\label{lem:cov arbitrary subset}
Let \(\{\bm a_i\}_{i=1}^M\subseteq\mathbb R^d\) be i.i.d. standard Gaussian random vectors. Fix \(\kappa\in(0,1]\) and \(\eta\in(0,1/2)\), and define
\[
\gamma := \Phi^{-1}\left(\frac{1+(1-\eta)\kappa}{2}\right),
\quad
\mu := (1-\eta)\kappa
-\sqrt{\frac{2\gamma^2}{\pi}}\exp\left(-\frac{\gamma^2}{2}\right).
\]
Then there exists an absolute constant \(c>0\) such that, with probability at least
\[
1 - 2\left(\frac{22}{\eta\sqrt{\mu}}+1\right)^d
\left( \exp\left(-\frac{\eta^2\kappa M}{2}\right)
+ \exp\left(-c\eta^2\mu^2 M\right)
\right) - 2\exp(-2M),
\] 
it holds for every subset \(\Omega\subseteq[M]\) satisfying \(|\Omega|\ge \kappa M\) that 
\[
\lambda_{\min}\left(\sum_{i\in\Omega}\bm a_i\bm a_i^T\right)
\ge (1-\eta)^2\mu M.
\]
\end{lemma}

\begin{lemma}\label{lem:trace}
    Let $\bm A, \bm B \in \R^{n\times n}$  be positive semi-definite matrices. Then, it holds that
    \begin{align}
        \langle \bm A, \bm B \rangle \ge \lambda_{\min}(\bm A) \mathrm{Tr}(\bm B).
    \end{align}
\end{lemma}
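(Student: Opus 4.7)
The plan is to exploit two standard facts about the trace inner product of positive semi-definite matrices: (i) the inner product of two PSD matrices is nonnegative, and (ii) $\lambda_{\min}(\bm A)\bm I \preceq \bm A$ whenever $\bm A$ is PSD. Combining these reduces the claim to a single line.

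First I would verify the auxiliary fact that $\langle \bm C, \bm B\rangle := \mathrm{Tr}(\bm C\bm B)\ge 0$ for any two PSD matrices $\bm C,\bm B$. Since $\bm C$ admits a PSD square root $\bm C^{1/2}$, I would write
\begin{align*}
\mathrm{Tr}(\bm C\bm B) \;=\; \mathrm{Tr}(\bm C^{1/2}\bm B\bm C^{1/2}),
\end{align*}
and note that $\bm C^{1/2}\bm B\bm C^{1/2}$ is PSD (as a congruence of the PSD matrix $\bm B$), so its trace, being the sum of nonnegative eigenvalues, is nonnegative.

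Next I would form the matrix $\bm C := \bm A - \lambda_{\min}(\bm A)\bm I$. By the variational characterization of the minimum eigenvalue, $\bm x^T\bm C\bm x = \bm x^T\bm A\bm x - \lambda_{\min}(\bm A)\|\bm x\|^2 \ge 0$ for all $\bm x$, hence $\bm C$ is PSD. Applying the auxiliary fact to the pair $(\bm C,\bm B)$ yields
\begin{align*}
0 \;\le\; \langle \bm A - \lambda_{\min}(\bm A)\bm I,\ \bm B\rangle \;=\; \langle \bm A,\bm B\rangle - \lambda_{\min}(\bm A)\,\mathrm{Tr}(\bm B),
\end{align*}
which is exactly the desired inequality after rearrangement.

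I do not anticipate any real obstacle here; the proof is essentially two short observations about PSD matrices. The only point that warrants a sentence of care is the nonnegativity of $\mathrm{Tr}(\bm C\bm B)$ for PSD $\bm C,\bm B$, since this is where the PSD assumption on $\bm B$ (not just its symmetry) is used; an alternative justification via the spectral decomposition $\bm A = \sum_i \lambda_i \bm u_i\bm u_i^T$ and the bound $\bm u_i^T\bm B\bm u_i \ge 0$ could be substituted if preferred.
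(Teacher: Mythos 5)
Your proof is correct, and it takes a genuinely different route from the paper's. The paper diagonalizes $\bm A = \bm U\bm\Lambda\bm U^T$, rewrites $\langle\bm A,\bm B\rangle$ as $\langle\bm\Lambda,\bm U^T\bm B\bm U\rangle$ (note the paper actually writes $\bm U\bm B\bm U^T$, which is a small typo), and then bounds the weighted sum of diagonal entries of the PSD matrix $\bm U^T\bm B\bm U$ term-by-term by $\lambda_{\min}(\bm A)$. Your proof instead uses the Loewner-order inequality $\bm A\succeq\lambda_{\min}(\bm A)\bm I$ together with the monotonicity of the trace pairing against a PSD test matrix $\bm B$, i.e.\ $\bm C\succeq\bm 0$ implies $\langle\bm C,\bm B\rangle\ge 0$, which you establish via the cyclic property and a PSD square root. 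The two arguments are essentially dual views of the same fact: the paper's is more concrete and coordinate-based, while yours is more structural and modular. Your version has the advantage that the key auxiliary fact ($\langle\bm C,\bm B\rangle\ge 0$ for PSD $\bm C,\bm B$) is a standard reusable lemma and the bookkeeping with $\bm U$ versus $\bm U^T$ never arises; the paper's version makes the role of $\lambda_{\min}$ transparent as a coefficient bound. Either is acceptable at this level of detail.
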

\begin{proof}
    Let $\bm U\bm \Lambda\bm U^T = \bm A$ be an eigenvalue decomposition of $\bm A$, where $\bm U \in \mathcal{O}^n$ and $\bm \Lambda =\mathrm{diag}(\lambda_1,\dots,\lambda_n)$ is a diagonal matrix with diagonal entries $\lambda_1 \ge \dots \ge \lambda_n \ge 0$ being the eigenvalues. Then, we compute
    \begin{align*}
        \langle \bm A, \bm B \rangle = \langle \bm U\bm \Lambda\bm U^T, \bm B \rangle = \langle \bm \Lambda, \bm U\bm B\bm U^T \rangle \ge \lambda_{\min}(\bm A) \mathrm{Tr}(\bm U\bm B\bm U^T) = \lambda_{\min}(\bm A) \mathrm{Tr}(\bm B),
    \end{align*}
    where the inequality follows from $\lambda_i\ge 0$ for all $i \in [n]$ and $\bm B$ is a positive semidefinite matrix. 
\end{proof}

\end{document}